\begin{document}

\title{Proper decision trees: An axiomatic framework for solving optimal
	decision tree problems with arbitrary splitting rules}

\author{Xi He}
\affiliation{%
  \institution{Peking University}
  \city{Beijing}
  \country{China}}
\email{xihe@pku.edu.cn}

\author{Max A. Little}
\affiliation{%
  \institution{University of Birmingham}
  \city{Birmingham}
  \country{UK}}
\email{maxl@mit.edu}

\renewcommand{\shortauthors}{He and Little}

\begin{abstract}
We present an axiomatic framework for analyzing the algorithmic properties
of decision trees. This framework supports the classification of decision
tree problems through \emph{structural} and \emph{ancestral} constraints
within a rigorous mathematical foundation.

The central focus of this paper is a special class of decision tree
problems---which we term \emph{proper decision trees}---due to their
\emph{versatility} and \emph{effectiveness}. In terms of versatility,
this class subsumes several well-known data structures, including
binary space partitioning trees, $K$-D trees, and machine learning
decision tree models. Regarding effectiveness, we prove that only
proper decision trees can be uniquely characterized as $K$-permutations,
whereas typical non-proper decision trees correspond to binary-labeled
decision trees with substantially greater complexity.

Using this formal characterization, we develop a generic algorithmic
approach for solving optimal decision tree problems over arbitrary
splitting rules and objective functions for proper decision trees.
We constructively derive a generic dynamic programming recursion for
solving these problems exactly. However, we show that memoization
is generally impractical in terms of space complexity, as both datasets
and subtrees must be stored. This result contradicts claims in the
literature that suggest a trade-off between memoizing datasets and
subtrees. Our framework further accommodates constraints such as tree
depth and leaf size, and can be accelerated using techniques such
as thinning.

Finally, we extend our analysis to several non-proper decision trees,
including the commonly studied decision tree over binary feature data,
the binary search tree, and the tree structure arising in the matrix
chain multiplication problem. We demonstrate how these problems can
be solved by appropriately modifying or discarding certain axioms.
\end{abstract}

\begin{CCSXML}
	<ccs2012>
	<concept>
	<concept_id>10010147.10010257.10010293.10003660</concept_id>
	<concept_desc>Computing methodologies~Classification and regression trees</concept_desc>
	<concept_significance>500</concept_significance>
	</concept>
	<concept>
	<concept_id>10003752.10003809.10010031</concept_id>
	<concept_desc>Theory of computation~Data structures design and analysis</concept_desc>
	<concept_significance>500</concept_significance>
	</concept>
	<concept>
	<concept_id>10003752.10003809.10011254.10011258</concept_id>
	<concept_desc>Theory of computation~Dynamic programming</concept_desc>
	<concept_significance>500</concept_significance>
	</concept>
	<concept>
	<concept_id>10002978.10002986</concept_id>
	<concept_desc>Security and privacy~Formal methods and theory of security</concept_desc>
	<concept_significance>300</concept_significance>
	</concept>
	<concept>
	<concept_id>10002950.10003624.10003625.10003630</concept_id>
	<concept_desc>Mathematics of computing~Combinatorial optimization</concept_desc>
	<concept_significance>300</concept_significance>
	</concept>
	</ccs2012>
\end{CCSXML}

\ccsdesc[500]{Computing methodologies~Classification and regression trees}
\ccsdesc[500]{Theory of computation~Data structures design and analysis}
\ccsdesc[500]{Theory of computation~Dynamic programming}
\ccsdesc[300]{Security and privacy~Formal methods and theory of security}
\ccsdesc[300]{Mathematics of computing~Combinatorial optimization}

\keywords{Decision tree, dynamic programming, data structure, global optimal}

\received{20 February 2007}
\received[revised]{12 March 2009}
\received[accepted]{5 June 2009}

\maketitle

\section{Introduction}

A decision tree $t:\mathbb{R}^{D}\to L$ is a labeled binary tree
data structure that maps a data point $x\in\mathbb{R}^{D}$ or some
abstract objects that can be \emph{compared}, to a label $l:L$ by
traversing a sequence of logical questions. These questions, referred
to as \emph{splitting rules} $r:\mathcal{R}$, are determined by the
keys stored in the branch nodes. In the classical decision tree learning
problem, a splitting rule $r$ is defined by questions such as whether
the $i$-th feature of $x$ smaller than value $v$? This divides
the feature space into two regions, $x_{i}\leq v$ and $x_{i}>v$,
through hyperplanes parallel to the axis, $x_{i}=0$. After following
a sequence of such questions along a \emph{path} in the tree, the
data point is assigned to a leaf node, which stores information of
type $L$.

Rather than focusing solely on restricted axis-parallel splits, this
paper introduces a more general notion of decision trees, classifying
them via axioms based on \emph{structural} and\emph{ ancestral constraints
}of decision trees (see Subsection \ref{subsec: Decision-tree,-complete}
for definitions). Those that satisfy a special class of constraints
are referred to as \emph{proper decision trees}, while those that
do not are classified as \emph{non-proper}. Examples of proper and
non-proper decision trees are summarized in Figure \ref{fig: example problems}.

We are particularly interested in the optimization problem for decision
trees: Given a list of data $\mathit{xs}:\left[\mathbb{R}^{D}\right]$
(denote as $\mathcal{D}$), and a set of possible splitting rules
$\mathit{rs}:\left[\mathcal{R}\right]$ for constructing decision
trees, our goal is to find a size-$K$ decision tree $s:\mathit{DTree}\left(\mathcal{R},\mathcal{D}\right)$
within the search space $\mathcal{S}\left(K,\mathit{rs}\right)$ that
is optimal with respect to the objective $E:\mathit{DTree}\left(\mathcal{R},\mathcal{D}\right)\to\mathbb{R}$.
Formally, 
\begin{equation}
	s^{*}=\text{argmin}_{s\in\mathcal{S}\left(K,\mathit{rs}\right)}E\left(s\right).\label{eq: ODT-specification-MIP}
\end{equation}
Finding an optimal decision tree (ODT), however, is notoriously difficult.
It is well known that the problem of constructing the \emph{smallest
	optimal decision tree} (as opposed to fixed-size trees defined in
\ref{eq: ODT-specification-MIP}) is NP-hard \citet{laurent1976constructing}.
Furthermore, \citet{sieling2008minimization} proved that it is even
NP-hard to construct a polynomial time approximation algorithm with
a constant performance ratio for the ODT problem over binary feature
data, i.e., mappings of the form $T:\left\{ 0,1\right\} ^{n}\to\left\{ 0,1\right\} $\footnote{The number of features $n$ here is treated as a variable. In this
	work, we will also investigate the optimal decision tree problem over
	binary feature data studied in machine learning, where $n$ is fixed
	as a constant determined by the number of features.}.

However, from a purely combinatorial perspective, decision trees can
be viewed as \emph{labeled binary trees}, whose maximal complexity
is given by the \emph{number of possible tree shapes} multiplied by
the \emph{number of ways to label the internal nodes}. For a tree
with $K$ splitting rules (internal nodes), there are $K!$ labelings
and $\mathit{Catalan}\left(K\right)$\footnote{The Catalan number counts the number of distinct binary tree shapes
	of size $K$, given by $\mathit{Catalan}\left(K\right)=\frac{1}{K+1}\left(\begin{array}{c}
		2K\\
		K
	\end{array}\right)$.} shapes, giving a total $K!\times\mathit{Catalan}\left(K\right)$
complexity. This remains \emph{polynomial} when $K$ is fixed, suggesting
the ODT problem is polynomial-time solvable under bounded size. A
similar argument applies to depth-constrained trees, since a tree
of depth $d$ has at most $2^{d-1}-1$ splitting rules.

Interestingly, under the fixed-parameter assumption, \citet{ordyniak2021parameterized}
showed that the optimal decision tree problem is\emph{ fixed-parameter
	tractable (fixed tree size or depth)}. In other words, when the size
or depth is fixed, there exists a polynomial-time algorithm for solving
the problem. Indeed, the well-known CART algorithm \citep{breiman1984classification}
follows this idea: it uses a top-down greedy strategy to optimize
a binary tree under size or depth constraints, rather than solving
the general NP-hard problem studied by \citet{laurent1976constructing}.

Under reasonable assumptions on decision trees---specifically, the
\emph{proper decision trees} data structure proposed in this paper---we
further show that the $K!\times\mathit{Catalan}\left(K\right)$ complexity
of decision trees can be reduced to $K!$ by characterize decision
trees using $K$-permutations. Since permutations are among the most
well-studied combinatorial structures, this characterization provides
a fundamental basis for analyzing the combinatorial and algorithmic
properties of decision trees. The following theorem summarizes our
main algorithmic contribution.

\begin{figure}[H]
	\begin{centering}
		\subfloat[Decision tree models in machine learning]{\includegraphics[viewport=250 50 1000 750,clip,scale=0.15]{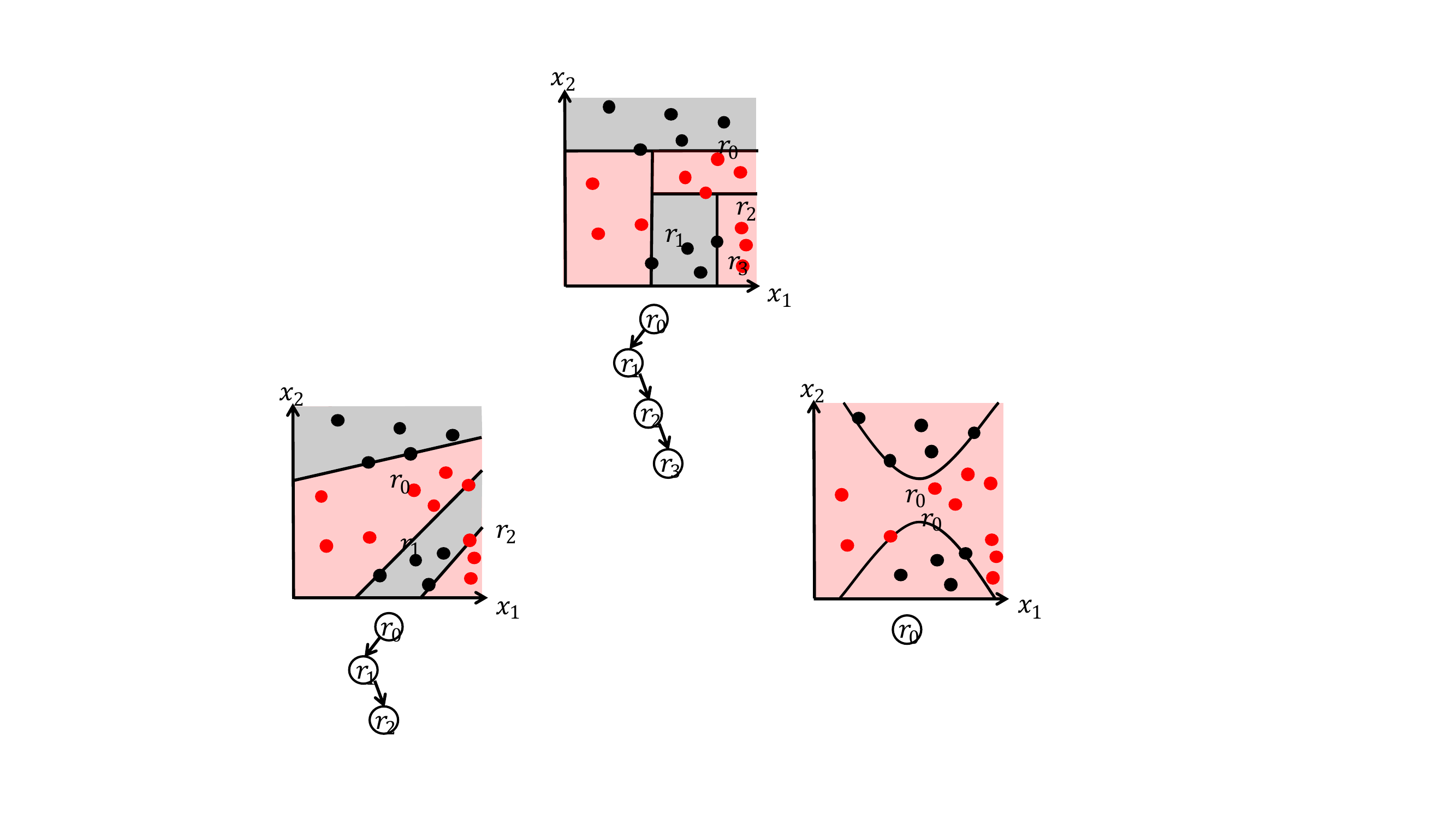}}
		\subfloat[Binary space partition tree]{\includegraphics[viewport=20 100 1280 680,clip,scale=0.14]{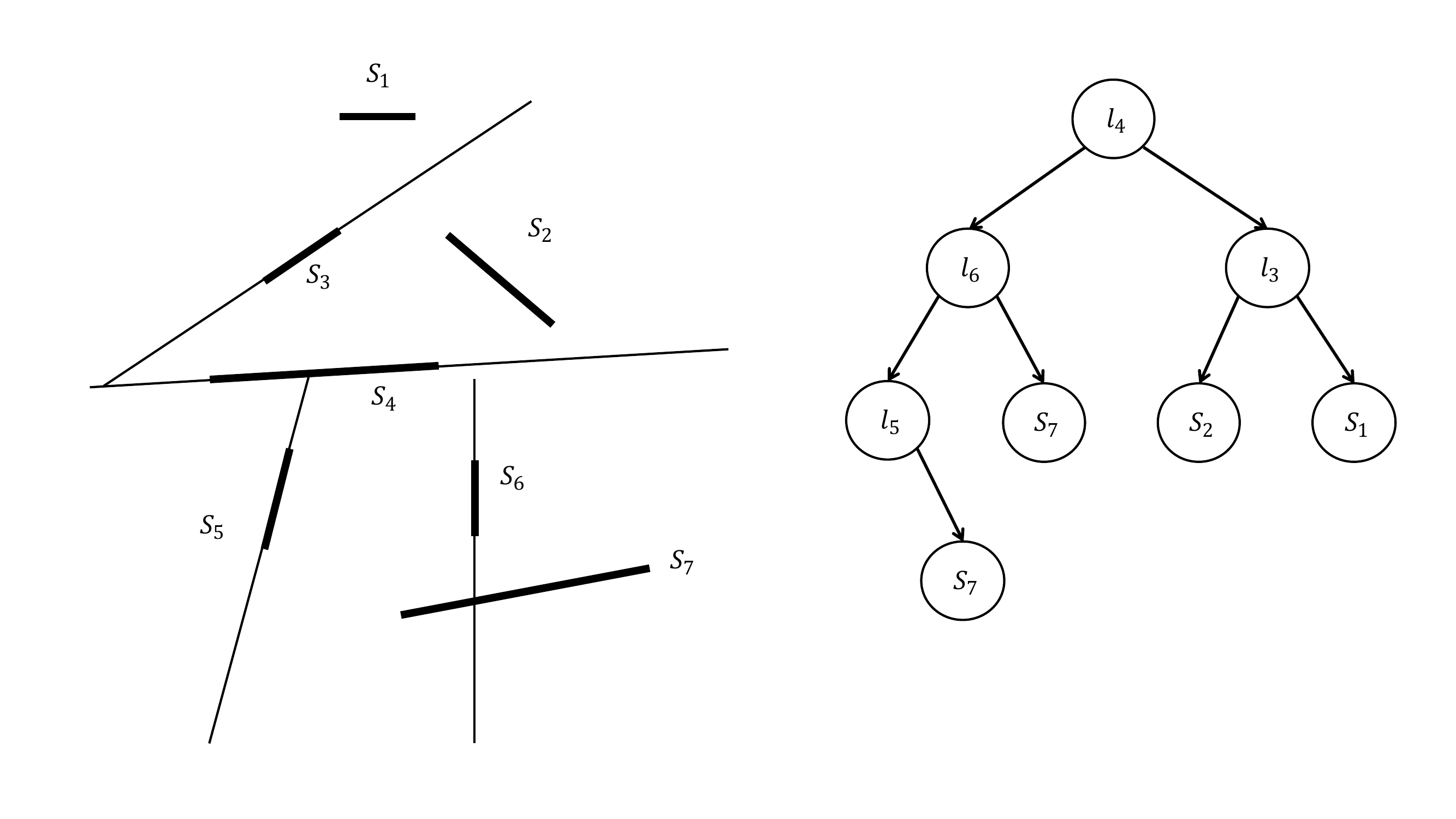}}
		\subfloat[$K$-$D$ tree]{\includegraphics[viewport=20 150 1280 750,clip,scale=0.13]{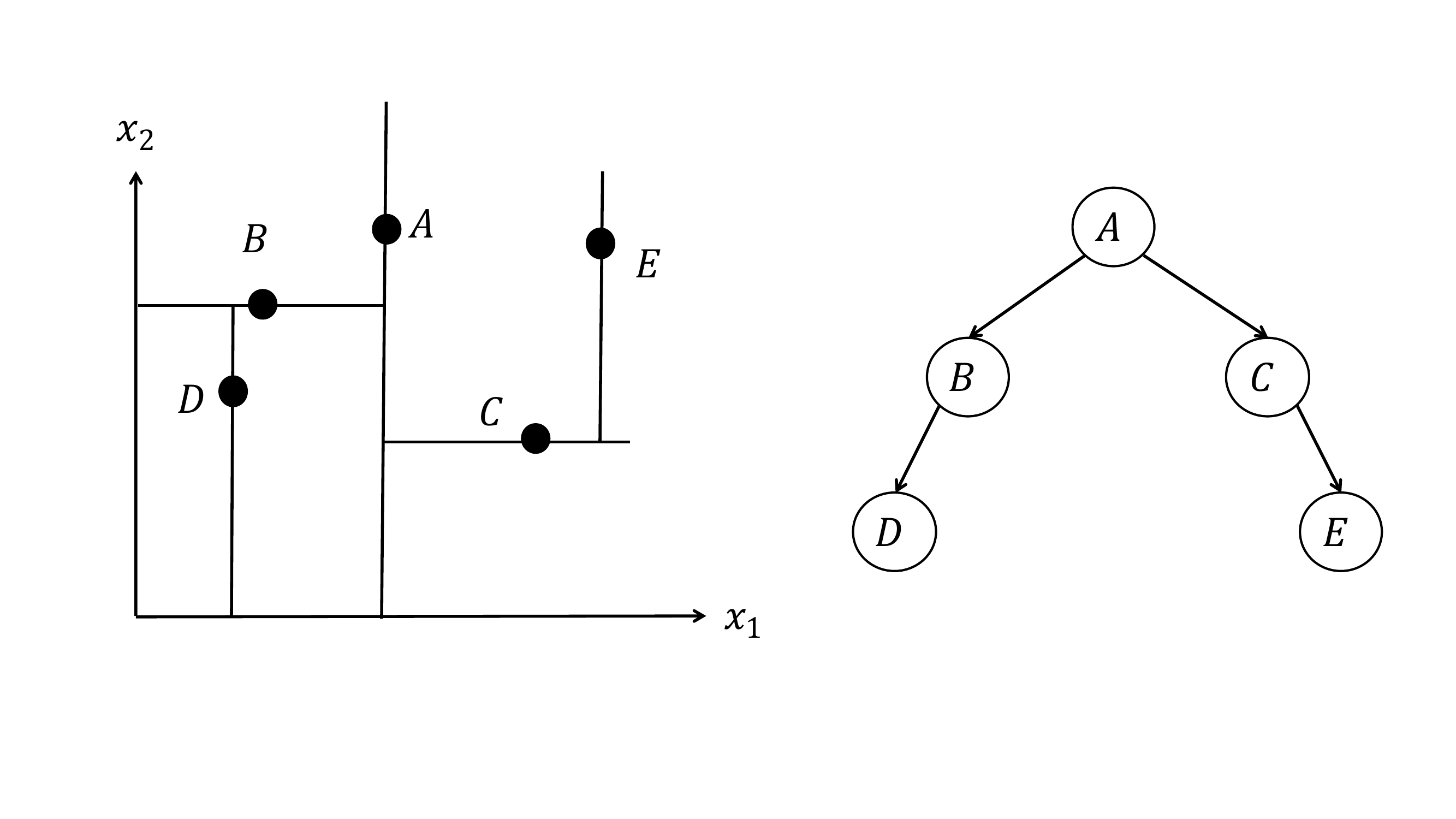}}\\
		\subfloat[Trees in matrix chain multiplication]{\includegraphics[viewport=100 20 1100 800,clip,scale=0.12]{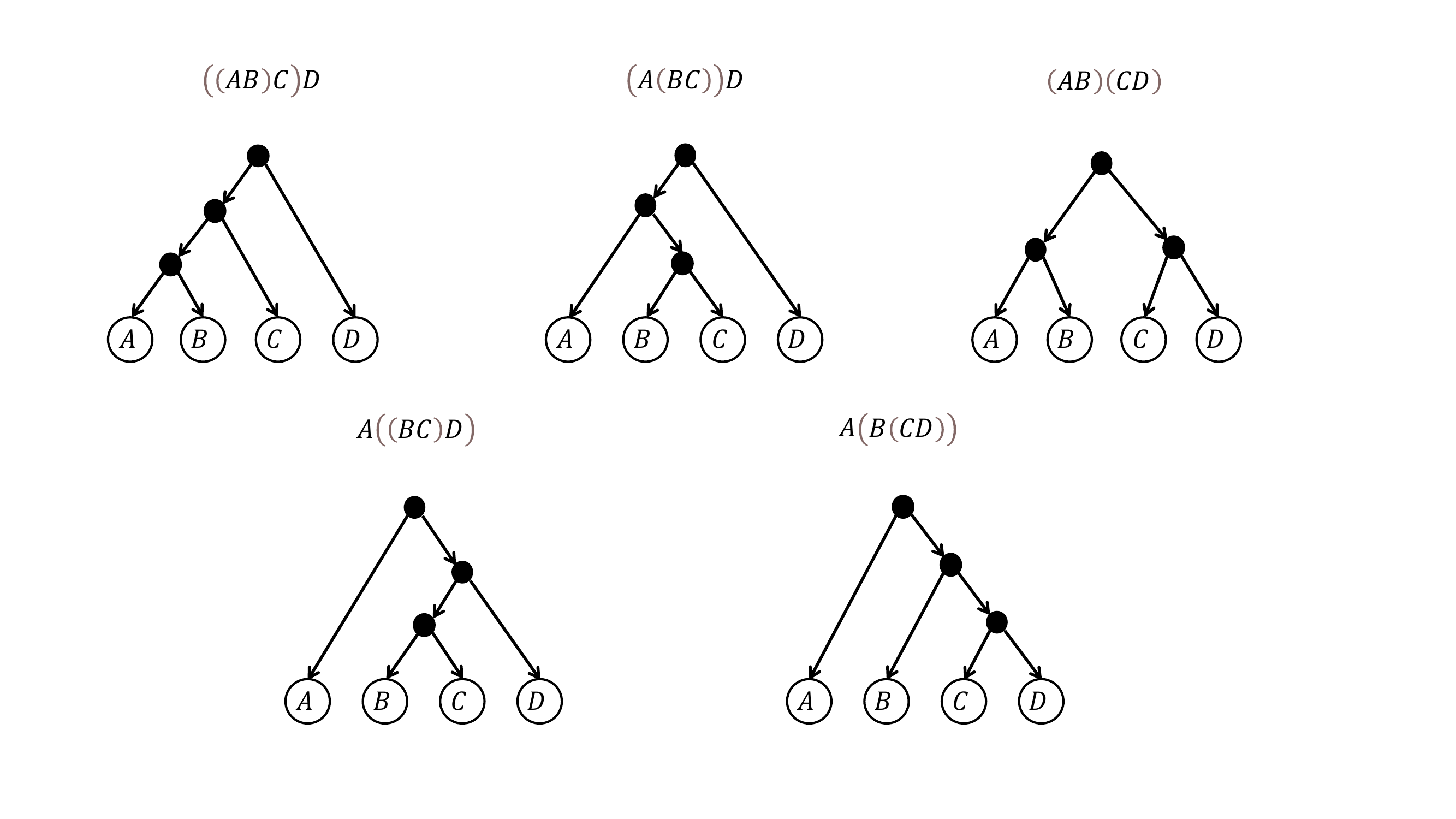}}
		\subfloat[Decision tree over binary feature data]{\includegraphics[viewport=20 20 1280 600,clip,scale=0.15]{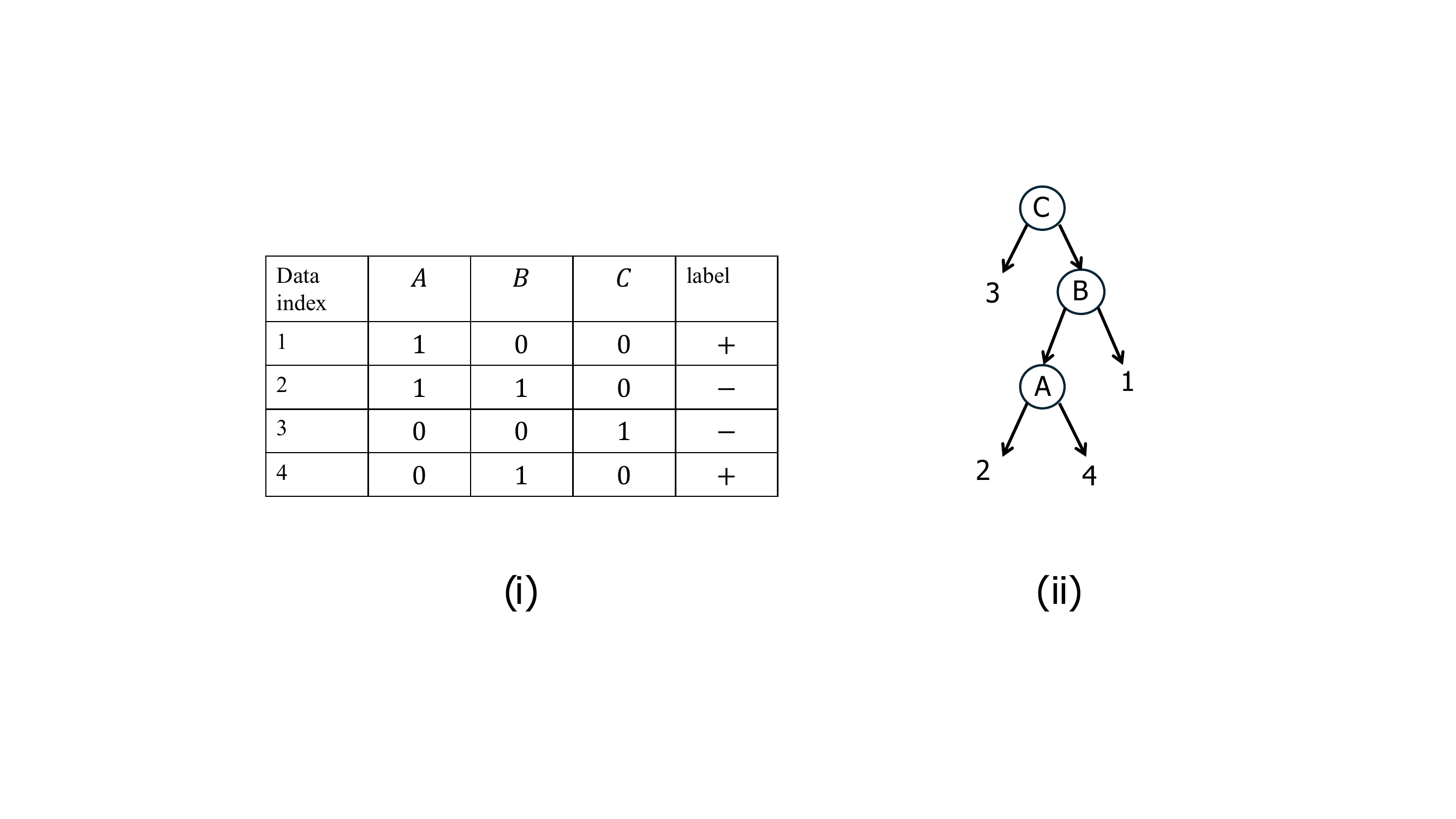}}
		\subfloat[Binary search trees]{\includegraphics[viewport=300 20 1000 750,clip,scale=0.15]{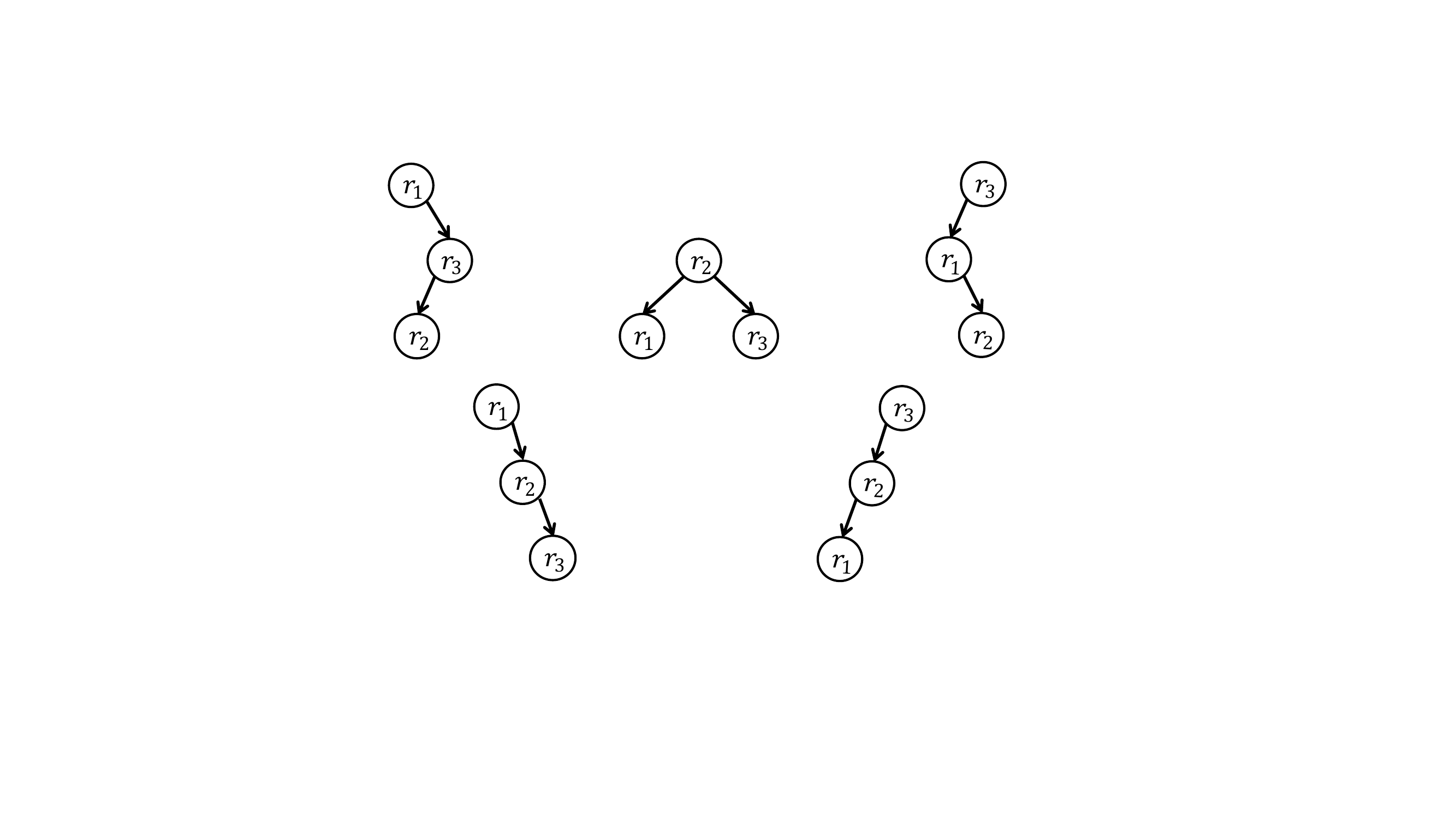}}
		
		\par\end{centering}
	\caption{Panels (a–c) describe three types of proper decision tree problems: (a)\textbf{ Axis-parallel decision tree model in machine
			learning}: This model uses axis-parallel splits to divide data into regions. For example, panel (a) shows four splits creating five regions (leaves), with predictions based on the majority class or average value in each region. This can be extended to more complex splits like hyperplanes or hypersurfaces. (b) \textbf{Binary
			space partition tree} : A segment-based decision tree that divides space into unique cells, each corresponding to a leaf in the tree, aiming for a minimal structure. (c) \textbf{$K$-D
			tree}: Similar to the axis-parallel tree, but nodes at the same level split along the same dimension. For instance, nodes
		$B$ and $C$ split along the $x_{2}$-axis, while nodes $D$
		and $E$ split along the $x_{1}$-axis. Panels (d–f) illustrate non-proper decision tree problems: (d) \textbf{Matrix
			chain multiplication problem}: This seeks the optimal order for multiplying matrices to minimize computational cost. Panel (d) shows five ways to multiply four matrices, with the tree's structure defining the order. (e) \textbf{Optimal decision tree problem over binary
			feature data }: Unlike axis-parallel trees, splits are based on binary questions (e.g., ``Is feature $A$ present?''). Paths go left for ``yes'' and right for ``no.'' In panel (e), figure (ii) shows a tree that classifies the four data points with three features shown in (i). (f) Binary
		search tree (BST): Given three nodes $r_1 \leq r_2 \leq r_3$, this panel illustrates five possible BSTs for these nodes. Each BST classifies nodes smaller than the root to the left and nodes greater than the root to the right.  \label{fig: example problems}}
\end{figure}

\begin{theorem}
	Simplified optimal decision tree problem.\emph{ Assume a list of rules
		$\mathit{rs}:\left[\mathcal{R}\right]$ and a size constraint $K:\mathbb{N}$.
		Let the search space }$\mathcal{S}\left(K,\mathit{rs}\right)$ \emph{of
		size-$K$ decision trees be defined by the program $\mathit{genDTKs}\left(K,\mathit{rs}\right)$.
		Then there exists a dynamic programming (DP) algorithm $\mathit{sodt}$
		such that the following inclusion holds:
		\begin{equation}
			\mathit{min}_{E}\left(\mathit{concatMapL}\left(\mathit{sodt},\mathit{kcombs}\left(K,\mathit{rs}\right)\right)\right)\subseteq\mathit{min}_{E}\left(\mathit{genDTKs}\left(K,\mathit{rs}\right)\right)\label{eq: SODT-introduction}
		\end{equation}
		where the symbol ``$\subseteq$'' indicates that the solution on the
		left-hand side is also a solution on the right-hand side. Here, $\mathit{kcombs}\left(K,\mathit{rs}\right)$
		generates all possible $K$-combinations of rules from $\mathit{rs}$.
		The function $\mathit{concatMapL}$ applies $\mathit{sodt}$ to each
		combination returned by $\mathit{kcombs}\left(K,\mathit{rs}\right)$
		and flattens the resulting list of lists into a single list. Finally,
		the operator $\mathit{min}_{E}$ is the programmatic definition of
		$\text{argmin}$, which selects the }first\emph{ optimal solution
		with respect to $E$ from a list of candidates. \label{thm: sodt-introduction}}
\end{theorem}
The key difference of our work with any previous research into combinatorial
ODT algorithms is that we formalize the search space $\mathcal{S}\left(K,\mathit{rs}\right)$
unambiguously as a recursive programs, then we can essentially ``compute''
it for a given input. Specifically, if $\mathit{genDTKs}\left(K,\mathit{rs}\right)$
will exhaustively explore the search space $\mathcal{S}\left(K,\mathit{rs}\right)$,
then the right-hand side of (\ref{eq: SODT-introduction}), which
is essentially a brute-force algorithm, constitutes a provably correct
procedure for solving (\ref{eq: ODT-specification-MIP}). In this
sense, Theorem \ref{thm: sodt-introduction} shows that the left-hand
side of (\ref{eq: SODT-introduction}) also solves (\ref{eq: ODT-specification-MIP}),
but more efficiently. In particular, in Subsection \ref{subsec:Complexity-of-the},
we will show that the left-hand side of (\ref{eq: SODT-introduction})
has a complexity of $O\left(K!\times M^{K}\right)$, which is polynomial
in the size of the input rules if $K$ is fixed, but factorial in
$K$.

Some readers from the machine learning (ML) community may wonder why
the search space $\mathcal{S}\left(K,\mathit{rs}\right)$ is defined
over rules rather than data points. This design choice aims to enhance
both the generality and brevity of the solution. Let $\mathit{odt}_{K}:\left[\mathcal{R}\right]\to\mathit{DTree}\left(\mathcal{R},\mathcal{D}\right)$
be a program that takes a list of rules $\mathit{rs}:\left[\mathcal{R}\right]$
and returns the optimal decision tree $s:\mathit{DTree}\left(\mathcal{R},\mathcal{D}\right)$
for solving (\ref{eq: ODT-specification-MIP}). For ML problems, we
compose $\mathit{odt}_{K}$ with another function $\mathit{genSplits}:\left[\mathbb{R}^{D}\right]\to\left[\mathcal{R}\right]$\footnote{Examples of $\mathit{genSplits}$ for specific ML problems are discussed
	in Subsection \ref{subsec: ODT in ML}.}, which converts a list of data points into a list of rules. This
design provides substantial benefits: it \textbf{modularizes} the
program, allowing us to easily switch the definition of $\mathit{genRules}$
for different problems while keeping the main program $\mathit{odt}_{K}$
unchanged. In applications without data points, such as the binary
space partition problem mentioned in Subsection \ref{subsec:Binary-space-partition},
$\mathit{genRules}$ can be omitted, and $\mathit{odt}_{K}$ can be
applied directly.

The paper is organized as follows: In Section 3, we present the background,
formalizing the concept of a decision tree using recursive datatypes
and novel axioms, which we refer to as defining \emph{proper decision
	trees}. Building on this foundation, we further prove that a size-$K$
proper decision tree can be uniquely characterized using a $K$-permutation.

In Section 4, we present the main algorithmic results for proving
Theorem (\ref{thm: sodt-introduction}). The proof is divided into
two parts:
\begin{enumerate}
	\item In the first part of the proof (validity of Equation (\ref{eq: SODT-introduction})),
	we formalize the search space of the ODT problem $\mathcal{S}\left(K,\mathit{rs}\right)$
	using $K$-permutations in Subsection \ref{subsec:tree datatype decision tree generatr}
	and in Subsection \ref{subsec: Deriving simplified odt problem using equational reasoning},
	using basic laws from\emph{ theory of lists} \citep{bird1987introduction}.
	We show that the ``\emph{original ODT problem}'' (\ref{eq: ODT-specification-MIP})
	can be factorized into a simplified version, denoted as $\mathit{sodt}$---the
	\emph{``simplified optimal decision tree problem}''. This establishes
	the first part of Theorem \ref{thm: sodt-introduction}.
	\item In the second part of the proof (construction of a dynamic programming
	algorithm for $\mathit{sodt}$), Subsection \ref{subsec: partial decision tree generator},
	designs a ``\emph{partial decision tree generator}'' based on the
	\emph{binary tree datatype}, which exhaustively enumerates all possible
	decision trees within the search space $\mathcal{S}\left(K,\mathit{rs}\right)$,
	except that the leaves of these trees contain no information. By generalizing
	\citet{gibbons1996computing}'s \emph{downwards accumulation }technique,
	as discussed in Subsection \ref{subsec:Downwards-accumulation-for},
	we show in Subsection \ref{subsec: complete decision tree generator}
	that a ``\emph{complete decision tree generator}'' satisfying all
	proper decision tree axioms can be derived from the partial generator,
	resulting in a complete program for defining $\mathcal{S}\left(K,\mathit{rs}\right)$.
	Finally, in Subsection \ref{subsec: DP algorithm}, we formally specify
	the objective function and derive an efficient dynamic programming
	recursion for solving the simplified ODT problem, completing the second
	part of Theorem \ref{thm: sodt-introduction}.
\end{enumerate}
In Section 5, we discuss applications of our results to constructing
optimal binary space partitioning trees, ML decision tree models,
and $K$-D trees. In Section 6, we explore extensions to non-proper
decision trees. Specifically, we analyze four cases: the optimal decision
tree problem over binary feature data (ODT-BF), the classical matrix
chain multiplication problem, and \emph{two} cases (one of which corresponds
to the binary search tree) derived from modeling the algorithm of
\citet{demirovic2022murtree}. These examples demonstrate the remarkable
flexibility of our framework in addressing diverse decision tree problems
through simple, minimal modifications of the axioms.

Lastly, in Section 7, we present a summary and brief discussion of
contributions, and suggest future research directions.

\section{Related studies and consequences of ambiguous problem definitions}

\subsection*{Related studies}

Because of the fixed-parameter tractability of the ODT problem, the
study of depth- and size-constrained decision tree optimization has
increasingly attracted attention in optimization \citet{bertsimas2017optimal},
ML \citet{demirovic2022murtree,lin2020generalized}, and theoretical
computer science \citet{blanc2019top,blanc2022properly}. In this
paper, we focus primarily on the globally optimal (exact) solution
to the ODT problem. Consequently, heuristic or approximate methods---such
as studies based on assumptions about the data distribution \citet{blanc2022properly},
or greedy methods \citet{blanc2019top,breiman1984classification}---are
outside the scope of our discussion.

A prominent approach for exact ODT algorithms is the use of mixed-integer
programming (MIP) solvers. Since the pioneering work of \citet{bertsimas2017optimal},
there has been a surge of interest in the ODT problem within the optimization
community \citep{boutilier2023optimal,gunluk2021optimal,zhu2020scalable,verwer2019learning,shatter2017Justin,verwer2017learning}.
One major advantage of MIP approaches is their flexibility: the objective
function can be easily modified, and constraints on the tree---such
as size, depth, or leaf size---can be adopted, including considerations
for categorical datasets \citep{gunluk2021optimal,gunluk2016optimal}.

Another advantage is the generality in defining decision trees. Classical
decision trees use \emph{axis-parallel splits} \citep{breiman1984classification},
but constraints in the MIP formulation can be easily modified to generalize
axis-parallel splits to \emph{hyperplane splits} \citep{bertsimas2017optimal,shatter2017Justin},
which ask: Do data points lie in the positive or negative region of
a hyperplane $h$? This question divides the feature space using a
more flexible rule $\boldsymbol{w}^{T}x\leq c$ or $\boldsymbol{w}^{T}x>c$,
such that $\boldsymbol{w}\in\mathbb{R}^{D}$.

However, a major drawback of MIP solvers is that their overall complexity
is \emph{unpredictable}, and in many cases, these algorithms exhibit
exponential (or worse) worst-case complexity. Moreover, progress in
this area largely relies on improvements in MIP solvers rather than
a deeper understanding of the underlying problem. Another similar
line of approach reformulates the ODT problem as a Boolean satisfiability
problem (SAT) and solves it using existing SAT solvers \citep{janota2020sat,narodytska2018learning,avellaneda2020efficient}.

Finally, a closely related line of research seeks to solve the ODT
problem using combinatorial methods \citep{brita2025optimal,zhang2023optimal,aglin2021pydl8,nijssen2007mining,nijssen2010optimal,aglin2020learning,
	demirovic2022murtree,hu2019optimal,lin2020generalized,mazumder2022quant}
often ambiguously referred to as branch-and-bound (BnB) or dynamic
programming (DP) methods. However, unlike studies using the MIP formulation,
where \citet{bertsimas2017optimal} have inspired numerous follow-up
studies in their field, \citet{cox1989heuristic} established the
first combinatorial algorithm for the ODT problem in 1989 and no subsequent
research has directly built upon this foundation. Researchers have
instead designed various algorithms using different data structures
to solve decision trees under different constraints. For example,
\citet{demirovic2022murtree,verwer2019learning,nijssen2007mining,aglin2020learning,nijssen2010optimal}
constrain \emph{tree depth}, whereas \citet{zhang2023optimal,lin2020generalized,hu2019optimal}
constrain the \emph{number of leaves}, referring to these as \emph{sparse
	decision trees}.

Moreover, combinatorial methods rarely address the ODT problem in
full generality. Most studies focus on simpler variants, particularly
the ODT-BF problem \citep{zhang2023optimal,aglin2021pydl8,nijssen2007mining,nijssen2010optimal,aglin2020learning,demirovic2022murtree,hu2019optimal,lin2020generalized,rudin2019stop},
where the goal is to find a tree $T:\left\{ 0,1\right\} ^{D}\to L$
over binary inputs $x\in\left\{ 0,1\right\} ^{D}$ rather than over
data with real-valued features $x\in\mathbb{R}^{D}$. Consequently,
the combinatorial complexity of the ODT-BF problem is\emph{ independent
	of the dataset size} (See Section III.3.10 of \citet{he2025ROF}).
Empirical results from \citet{hu2019optimal} indicate that, when
the feature dimension is fixed, the algorithm scales linearly with
data size. To our knowledge, only \citep{brita2025optimal,mazumder2022quant}
address the classical axis-parallel optimal decision tree (AODT) problem.

Furthermore, not only do the problems studied differ significantly,
but the methods applied, even for the same problem, vary significantly.
For instance, in the widely studied ODT-BF problem, \citet{nijssen2007mining,aglin2021pydl8,aglin2020learning,nijssen2010optimal}
characterize decision trees using \emph{itemsets}, \citet{demirovic2022murtree}
use \emph{feature vectors}, and \citet{lin2020generalized,hu2019optimal}
define trees via \emph{two sets of leaves}---those that can be further
split and those that are fixed. Unfortunately, the relative advantages
and limitations of these incompatible representations remain insufficiently
explored.

Compared with other research directions, studies on combinatorial
algorithms for the ODT problem are generally less formal, as discussed
below, revealing ambiguities in their methods and potential implications.

\subsection*{Consequences of ambiguous problem definitions}

Many studies employing combinatorial methods, such as \citet{demirovic2022murtree,hu2019optimal,zhang2023optimal,lin2020generalized},
do not provide a formal problem definition. In particular, \citep{lin2020generalized}
claim to ``present a new representation of the dynamic programming
search space,'' yet this search space is not explicitly defined.

In other cases, the problem is described informally, raising problematic
ambiguities. For example, \citet{brita2025optimal} state, ``Let
$\mathcal{T}\left(\mathcal{D},d\right)$ describe the set of all decision
trees for the dataset $\mathcal{D}$ with a maximum depth of $d$.''
and \citet{mazumder2022quant} define the search space $\mathcal{T}_{2}$
as ``the set of all decision trees with depth 2 whose splitting thresholds
are in ...'' Furthermore, \citet{nijssen2007mining,nijssen2010optimal}
states ``we are interested in expressing decision tree learning problems
... of the form $\text{argmin}_{T}\left(f\left(T\right)\right)\text{ subject to }\varphi\left(T\right)$
which corresponds to finding the best tree(s) ... among all trees
...'' yet it remains unclear how ``all trees'' is formally defined.
Although \citet{nijssen2007mining,nijssen2010optimal} proved an equivalence
between \emph{decision trees} and \emph{sets of itemsets}, it is left
open to interpretation how their algorithms can \emph{exhaustively}
explore the search space of itemsets. Even in their correctness claims---``The
correctness of this approach follows from the following facts''---there
is no discussion regarding the exhaustiveness of the algorithm.

Although these informal definitions may seem intuitively obvious,
they share a common problem: given an input, it is unclear how to
``\emph{compute}'' the search space to verify whether it matches others'
understanding. This is crucial, as different readers may impose different
constraints on the search space or even have different interpretations
of what constitutes a decision tree.

In contrast, in the study of MIP solvers, using a solver requires
specifying the problem unambiguously in a standard form. This ensures
both correctness and unambiguous communication between researchers.
Any special constraints are explicitly encoded in the MIP specification.
Such clarity is lacking in the combinatorial algorithms discussed
above, where researchers often consider it sufficient to prove the
correctness of a bounding strategy or the independence of subtree
evaluations to justify the DP recursion. However, establishing the
exactness of an algorithm fundamentally requires demonstrating that
it exhaustively explores all solutions in the search space---an impossible
task without first defining that space. The correctness of bounding
techniques or DP solutions depends entirely on the algorithm being
exhaustive from the outset.

To clarify our point, in Subsection \ref{subsec:Non-exhua-Murtree},
we explain how the ambiguous description of the state-of-art ODT-BF
algorithm---the Murtree proposed by \citet{demirovic2022murtree}\footnote{Indeed, we choose to analyze \citet{demirovic2022murtree}'s algorithm
	instead of those proposed by \citet{nijssen2007mining,hu2019optimal,lin2020generalized,rudin2019stop,zhang2023optimal,aglin2021pydl8,aglin2020learning,nijssen2010optimal}
	because we believe it provides the clearest description among the
	available approaches.}---leads to multiple interpretations. We present two plausible formal
models that may arise from differing interpretations of their informal
algorithm description, and demonstrate that the search spaces of both
models are non-exhaustive for the ODT-BF problem. This analysis may
help guide future research towards avoiding similar pitfalls.

Our assertion is that the root cause of these issues lies in the absence
of a formal framework for specifying the problem using combinatorial
methods, and a standardized process for deriving efficient algorithms
from the given specification. This paper aims to address this gap
and provide a foundation for future studies in this area.

\section{Background}

The types of real and natural numbers are denoted as $\mathbb{R}$
and $\mathbb{N}$, respectively. We use square brackets $\left[\mathcal{A}\right]$
to denote the set of all finite lists of elements $a:\mathcal{A}$,
where $\mathcal{A}$ (or letters $\mathcal{B}$ and $\mathcal{C}$
at the front of the alphabet) represent type variables. Hence, $\left[\mathcal{R}\right]$,
$\left[\mathcal{H}\right]$, $\left[\mathcal{S}\right]$ and $\left[\mathbb{R}^{D}\right]$,
denote the set of all finite lists of splitting rules, hyperplanes,
hypersurfaces, and lists of data, respectively. We use $\mathcal{D}$
as a short-hand synonym for $\left[\mathbb{R}^{D}\right]$. Variables
of these types are denoted using their corresponding lowercase letters
e.g. $r:\left[\mathcal{R}\right]$, $h:\left[\mathcal{H}\right]$,
$s:\left[\mathcal{S}\right]$.

\subsection{The proper decision tree datatype: A novel axiomatic definition for
	decision trees}

The algorithm design community has shown greater interest in inventing
new algorithms than in organizing existing knowledge. However, a systematic
framework for classifying existing algorithms is equally valuable,
as it not only deepens our understanding of problems through algorithmic
and combinatorial insights but also facilitates unambiguous communication
of ideas. For example, most NP-hardness proofs of the ODT problem,
rely on very general assumptions about the structure of decision trees.
Yet such general definitions are often too broad to justify claims
that ``all ODT problems'' are NP-hard.

By imposing additional constraints on the general problem, it is sometimes
possible to construct efficient algorithms. An analogous example is
the traveling salesperson problem (TSP): although NP-hard in general,
the Euclidean TSP can be approximated to an arbitrary but fixed constant
accuracy in polynomial time when the dimension is fixed \citep{Arora1998PTASofTSP}.
Similarly, the \emph{optimal binary search tree problem} can be regarded
as a special optimal decision tree problem which is solvable in polynomial
time \citep{bird2020algorithm}.

On the other hand, if the constraints imposed are too strong, however,
the resulting theory may lose practical relevance. In this subsection,
we propose a novel framework for classifying decision tree problems
by identifying two key types of constraints that define them: \textbf{structural}
and \textbf{ancestral} constraints.

\subsubsection{Structural constraints on decision trees}

As the name suggests, a decision tree is a tree-based graph model.
In a \emph{directed} graph, if there is a directed edge from one node
to another, the node at the destination of the edge is called the
\emph{child} node of the source node, while the source node is referred
to as the \emph{parent} node. If there is a directed \emph{path} connecting
two nodes, the source node is called an \emph{ancestor }node.

The topmost node of the tree is referred to as the \emph{root}. The
nodes farthest from the root are called \emph{leaf} \emph{nodes},
while all other nodes are referred to as \emph{branch} or \emph{internal}
nodes. The keys stored in branch nodes of decision trees are referred
to as \emph{splitting rules}; thus, we use the terms splitting rules
and internal nodes interchangeably. A sequence of nodes along the
edges from the root to the leaf of a tree is called a \emph{path}.

The concept of ``decision tree'' varies significantly across different
fields. There are several common \emph{structural constraints} in
the definitions of decision trees across studies on ODT problems,
we summarize them (informally) in following definition.
\begin{definition}
	\emph{Structural constraints on optimal decision tree problems}. The
	various definitions of decision trees typically share several common
	constraints:\label{def:Common-features-in}
\end{definition}
\begin{itemize}
	\item Each branch node of a decision tree contains only \emph{one} \emph{splitting}
	\emph{rule}, which partitions the ambient space\footnote{An ambient space is the space in which a mathematical (geometric or
		topological) object is embedded, along with the object itself.} into two \emph{disjoint} and \emph{continuous} subspaces.
	\item Each leaf specifies a region defined by intersection of partitions
	of all splitting rules in the path from root to leaf.
	\item The descandants of a splitting rule are also descandants of its ancestor.
	A new splitting rule can only be generated from the subspace defined
	by its ancestor rules.
\end{itemize}
In particular, the last constraint is justified by the fact that when
\emph{geometric splitting rules} (splitting rules defined by geometrical
objects)---such as axis-parallel or general hyperplanes---are used,
each rule should be generated only from the subspace defined by its
ancestor nodes. In contrast, for \emph{logical splitting rules} (splitting
rules defined by a logical question), such as those in the ODT-BF
problem, where each split corresponds to a logical question about
the presence of a feature in the data, such a logical splitting rule
can be generated from \emph{any} subspace and the last constraint
remains valid.

Other than structural constraints on decision trees, an important
distinction in our research is recognizing the interaction between
splitting rules: the splitting rules in decision trees not only partition
the ambient space into two regions but also constrain the search space
of decision trees by regulating how rules interact when one is considered
the ancestor of another. We refer to this interaction as an \textbf{ancestral
	constraint}, which, as we will see in a later section, implicitly
governs the definition of the recursive step in algorithms for solving
ODT problems.

\subsubsection{Ancestry relations and proper decision tree axioms\label{subsec: Decision-tree,-complete}}

\begin{figure}[h]
	\begin{centering}
		\includegraphics[viewport=200bp 250bp 1080bp 580bp,clip,scale=0.3]{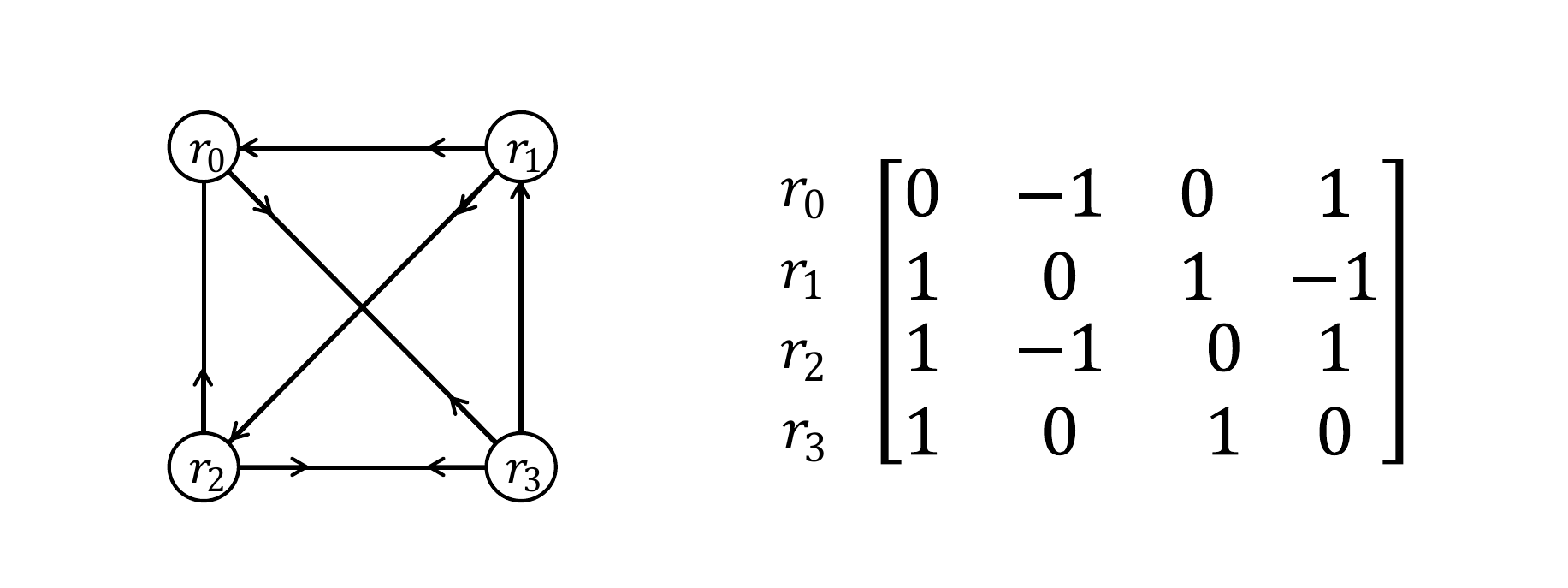}
		\par\end{centering}
	\caption{The \emph{ancestry} \emph{relation} \emph{graph} (left) captures all
		ancestry relations between four splitting rules $\left[r_{0},r_{1},r_{2},r_{3}\right]$.
		In this graph, nodes represent rules, and arrows represent ancestral
		relations. An incoming arrow from $r_{i}$ to a node $r_{j}$ indicates
		that $r_{j}$ is the right-child of $r_{i}$ (read the arrow next
		to $r_{i}$) . The absence of an arrow indicates no ancestral relation.
		An outgoing arrows from $r_{i}$ to a node $r_{j}$ indicates that
		$r_{j}$ is the left-child of $r_{i}$. The ancestral relation matrix
		(right) $\boldsymbol{K}$, where the elements $\boldsymbol{K}_{ij}=1$,
		$\boldsymbol{K}_{ij}=-1$, and $\boldsymbol{K}_{ij}=0$ indicate that
		$r_{j}$ lies on the positive side, negative side of $r_{i}$, or
		that there is no ancestry relation between them, respectively. \label{fig:Equivalent-representation-to}}
\end{figure}

Both structural and ancestral constraints are defined by \emph{ancestry
	relations}, a binary relation that describes the interaction between
two splitting rules. The ancestry relations are formalized below.
\begin{definition}
	\emph{Ancestry relations}. Given a list of $K$ rules $\mathit{rs}=\left[r_{1},r_{2},\ldots r_{K}\right]$,
	the ancestry relation between any pair of rules $r_{i}$ and $r_{j}$
	is defined as follows:
	
	1. $r_{i}\swarrow r_{j}$ if $r_{j}$ is in the left subtree of $r_{i}$
	(i.e., $r_{j}$ is a left descendant of $r_{i}$).
	
	2. $r_{i}\searrow r_{j}$ if $r_{j}$ is in the right subtree of $r_{i}$
	(i.e., $r_{j}$ is a right descendant of $r_{i}$).
	
	Using logical disjunction $\vee$, we write $r_{i}\left(\swarrow\vee\searrow\right)r_{j}$
	if $r_{j}$ is in the left \emph{or} right branch of $r_{i}$; in
	this case, $r_{j}$ is called a \textbf{descendant} of $r_{i}$. The
	complement of the relation $\left(\swarrow\vee\searrow\right)$ can
	be expressed as $\overline{\left(\swarrow\vee\searrow\right)}$, where
	$\overline{R}$ denotes the complement of relation $R$. By De Morgan's
	law, we have $r_{i}\overline{\left(\swarrow\vee\searrow\right)}r_{j}=\left(r_{i}\overline{\swarrow}r_{j}\right)\wedge\left(r_{i}\overline{\searrow}r_{j}\right)$,
	which means that $r_{i}\overline{\left(\swarrow\vee\searrow\right)}r_{j}$
	holds if and only if $r_{j}$ is \textbf{not} a branch node in either
	the left or right subtree of $r_{i}$.
\end{definition}
The notation $\searrow$ and $\swarrow$ must be read from left to
right because $r_{i}\swarrow r_{j}$ and $r_{i}\searrow r_{j}$ do
not imply $r_{j}\searrow r_{i}$ and $r_{j}\swarrow r_{i}$, unless
$r_{i}$ and $r_{j}$ are mutual ancestors of each other. In other
words, $\searrow$ and $\swarrow$ are not \emph{commutative }relations.

These ancestry relations can be characterized as \emph{homogeneous
	binary relations}. Relations and graphs are closely related, and homogeneous
binary relations over a set can be represented as directed graphs
\citep{schmidt2012relations}. Therefore, the ancestry relations can
be encoded as a \emph{complete} \emph{graph}, where the splitting
rules are the nodes in the graph, and the ancestry relations $\searrow$
and $\swarrow$ are represented by outgoing and incoming arrows, respectively,
while a line without an arrowhead ($\smallsetminus$) represents the
complement relation $\overline{\left(\swarrow\vee\searrow\right)}$.
We refer to it as the \emph{ancestry} \emph{relation} \emph{graph}.
The reason this graph is complete is that every splitting rule $r_{i}$
is related to any other splitting rule $r_{j}$ in some way, either
through an ancestry relation or by being unrelated. Figure \ref{fig:Equivalent-representation-to}
illustrates an ancestry relation matrix for a given set of splitting
rules, along with its corresponding complete graph. This example is
derived from a concrete case in which the splitting rules are defined
by hyperplanes, as shown in Figure \ref{fig:Four-hyperplanes, for generating decisiion trees}.
The resulting decision trees constructed from this ancestry relation
matrix are illustrated in Figure \ref{fig:possible_decision_trees}.

Moreover, binary relations can also be characterized as \emph{Boolean}
\emph{matrices}. However, to encode two binary relations,$\searrow$
and $\swarrow$ in one matrix, the values $1$ and $-1$ are used
to distinguish them.
\begin{definition}
	\emph{Ancestry relation} \emph{matrix}. Given a list of $K$ rules
	$\mathit{rs}=\left[r_{1},r_{2},\ldots r_{K}\right]$, the ancestry
	relations between any pair of rules can be characterized as a $K\times K$
	square matrix $\boldsymbol{K}$, with elements defined as follows:
\end{definition}
\begin{itemize}
	\item $\boldsymbol{K}_{ij}=1$ if $r_{i}\swarrow r_{j}$ (i.e.,$r_{j}$
	is in the left subtree of $r_{i}$),
	\item $\boldsymbol{K}_{ij}=-1$ if $r_{i}\searrow r_{j}$ (i.e.,$r_{j}$
	is in the right subtree of $r_{i}$),
	\item $\boldsymbol{K}_{ij}=0$ if $r_{i}\overline{\left(\swarrow\vee\searrow\right)}r_{j}$
	(i.e., if $r_{j}$ is not a branch node in both the left and right
	subtree of $r_{i}$), and $\boldsymbol{K}_{ii}=0$, for all $i$,
	since a splitting rule cannot be an ancestor of itself.
\end{itemize}
The ancestry relation matrix and the ancestry relation graph are two
alternative representations of ancestry relations. As we will see
in a later section, the ancestry relation matrix $\boldsymbol{K}$
enables us to define functions more compactly than using the symbols
$\searrow$ or $\swarrow$. We are now ready to formalize the axioms
of the decision tree.
\begin{definition}
	\emph{Axioms for proper decision trees}. We call a decision tree consists
	of splitting rules that satisfies the following axioms, a \emph{proper}
	decision tree:\label{axioms: proper decision tree}
\end{definition}
\begin{enumerate}
	\item \emph{Structural constraint one} (Ambient space partition): Each branch
	node is defined by a single splitting rule $r:\mathcal{R}$, and each
	splitting rule subdivides the ambient space into two \emph{disjoint}
	and connected subspaces, $r^{+}$ and $r^{-}$.
	\item \emph{Structural constraint two} (Path intersection in leafs): Each
	leaf $L$ is defined by the intersection of subspaces $\bigcap_{p\in P_{L}}r_{p}^{\pm}$
	for all the splitting rules $\left\{ r_{p}\mid p\in P_{L}\right\} $
	in the path $P_{L}$ from the root to leaf $L$. The connected region
	(subspace) defined by $\bigcap_{p\in P_{L}}r_{p}^{\pm}$ is referred
	to as the \emph{decision} \emph{region}.
	\item \emph{Structural constraint three} (Partition transitivity): Part
	I: The ancestry relation between any pair of splitting rules $r_{i}\left(\swarrow\vee\searrow\right)r_{j}$
	is transitive; in other words, if $r_{i}\left(\swarrow\vee\searrow\right)r_{j}$
	and $r_{j}\left(\swarrow\vee\searrow\right)r_{k}$ then $r_{i}\left(\swarrow\vee\searrow\right)r_{k}$.
	Part II: Moreover, $\boldsymbol{K}_{ij}=\pm1$ if $r_{j}$ can be
	generated from $r_{i}^{\pm}$. As a result, any new decision rule
	$r$ added to a leaf must be generated within its corresponding decision
	region.
	\item \emph{Ancestral constraint} (Uniqueness of the ancestry relation):
	For any pair of splitting rules $r_{i}$ and $r_{j}$, only one of
	the following three cases is true: $r_{i}\swarrow r_{j}$, $r_{i}\searrow r_{j}$,
	and $r_{i}\overline{\left(\swarrow\vee\searrow\right)}r_{j}$; additionally,
	$r_{i}\overline{\left(\swarrow\vee\searrow\right)}r_{i}$ is always
	true; in other words, the possible value of $\boldsymbol{K}_{ij}\in\left\{ 1,0,-1\right\} $
	is unique determined for all $i,j$ , and $\boldsymbol{K}_{ii}=0$
	for all $i$.
\end{enumerate}
Note that the second part of Axiom 3 follows directly from combining
Axiom 1 with the first part of Axiom 3. For logical splitting rules,
however, this condition can be ignored, since we have assumed in Definition
\ref{def:Common-features-in} that any logical splitting rule can
be generated from any subspace.. Moreover, although the ancestry relation
satisfies transitivity, it is \emph{not} a \emph{preorder}\footnote{A preorder is a binary relation satisfying reflexivity and transitivity.},
as it fails to satisfy the reflexive property due to Axiom 4---no
rule can be the ancestor of itself in a decision tree.

We define the \textbf{size} of a decision tree as\emph{ the number
	of splitting rules} it possesses, for example, a size three decision
tree with three splitting rules is rendered as

\begin{center}
	\begin{tikzpicture}[level distance=1.5cm, edge from parent/.style={draw,-latex}]
		\node[circle, draw] {$r_1$} 
		[sibling distance=6cm] 
		child {node[circle, draw] {$r_2$}
			[sibling distance=3.5cm] 
			child {node {${r_1^+\cap r_2^+}$} edge from parent node[left] {${r_2^+}$}}
			child {node {${r_1^+\cap r_2^-}$} edge from parent node[right] {${r_2^-}$}}
			edge from parent node[left] {${r_1^+}$}}
		child {node[circle, draw] {$r_3$}
			[sibling distance=3.5cm] 
			child {node {${r_1^-\cap r_3^+}$} edge from parent node[left] {${r_3^+}$}}
			child {node {${r_1^-\cap r_3^-}$} edge from parent node[right] {${r_3^-}$}}
			edge from parent node[right] {${r_1^-}$}};
	\end{tikzpicture}
\end{center}

\paragraph{Why axiomatic definitions are important}

In proper decision tree axioms \ref{axioms: proper decision tree},
we classify Axioms 1-3 as \textbf{structural constraints}, which formalize
the common features of decision trees given in Definition \ref{def:Common-features-in}.
These constraints define the combinatorial structure of decision trees
by specifying the roles of internal and leaf nodes. By contrast, the\textbf{
	ancestral constraint} (Axiom 4) governs how internal nodes are connected
to one another.

Another way of explaining this: the structural constraints ensure
we are referring to a decision tree problem as commonly understood,
while the ancestral constraint specifies which \emph{class} of decision
tree problem is under consideration. Replacing Axiom 4 with an alternative
Axiom 4' yields a different class of decision tree problems. In the
extreme case, Axiom 4 may be omitted entirely. For instance, as shown
in Section \ref{sec:Extension-to-non-proper}, the ODT-BF problem
has no ancestral constraints on splitting rules, reducing the structure
to ordinary binary labeled trees. This, in turn, requires a different
algorithmic procedure with substantially higher complexity. Similarly,
in Subsection \ref{subsec:Non-exhua-Murtree} we will see examples
of replacing Axiom 4 with different Axioms 4' and 4''.

Moreover, \emph{Axiom 4 implicitly determines the algorithmic structure
	of the ODT algorithm}. This key observation allows us to reduce the
analysis of decision tree algorithms to their ancestral constraints,
providing substantial flexibility in addressing various decision tree
problems. As a result, our framework not only offers an unambiguous
way of classifying decision tree problems but also provides a novel
perspective for future research on the algorithmic structure of ODT
problems.

\paragraph{The applicability of proper decision trees}

\begin{figure}[h]
	\begin{centering}
		\includegraphics[viewport=0bp 80bp 1280bp 750bp,clip,scale=0.25]{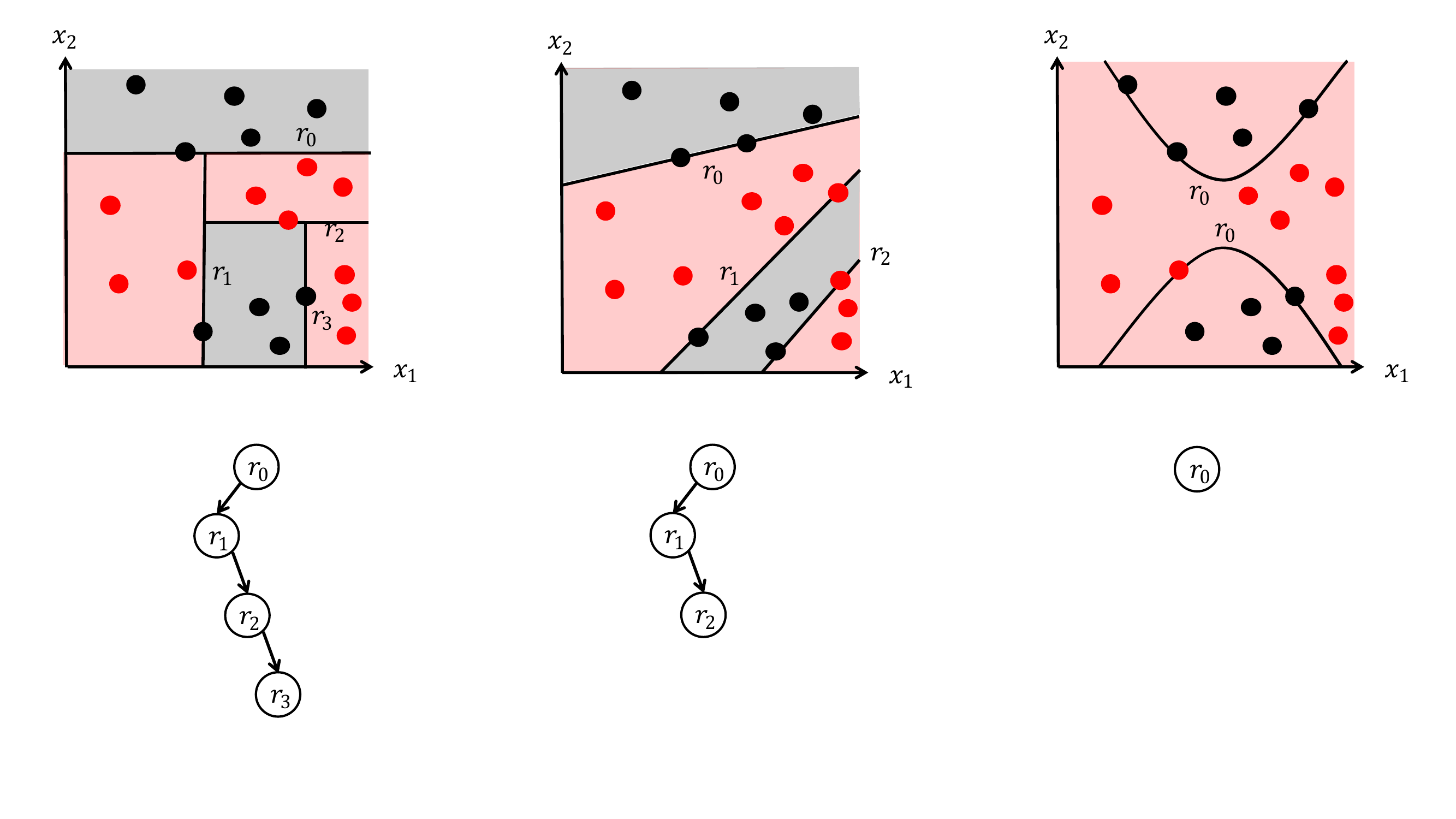}
		\par\end{centering}
	\caption{An axis-parallel decision tree model (left), a hyperplanes (oblique)
		decision tree model (middle), and a hypersurface (defined by degree-$2$
		polynomials) decision tree model (right), characterized by one, two,
		and five points, respectively. As the complexity of the splitting
		functions increase, the tree's complexity decreases (involving fewer
		splitting nodes). \label{fig:DT in ML}}
\end{figure}

Some readers may question the applicability of the proper decision
tree axioms, particularly Axiom 4. We adopt Axiom 4 as the definition
of an ancestral constraint because of its \textbf{versatility} and
\textbf{effectiveness}.

From the perspective of \emph{versatility}, for most optimal decision
tree problems concerned with data partitions, the key concern is not
the explicit representation of splitting rules but how these rules
partition the data. In such cases, it can be proved that partitions
induced by axis-parallel hyperplanes, general hyperplanes, or even
hypersurfaces can be characterized directly using data points \citep{he2025ROF},
see Figure \ref{fig:DT in ML} for example. Under this constraint,
it is reasonable to assume that if a hyperplane $h_{i}$ characterized
by a set of data points, lies on one side of another hyperplane $h_{j}^{\pm}$
then it \emph{cannot} simultaneously lie on the opposite side $h_{j}^{\mp}$.
Another example is the \emph{binary space partition problem} (see
Subsection \ref{subsec:Binary-space-partition}), where hyperplanes
are defined by the \emph{affine flats }spanned by \emph{polygons}
(in $\mathbb{R}^{3}$) or \emph{line segments} (in $\mathbb{R}^{2}$),
and it is straightforward to show that these \emph{convex geometric
	objects} must also lie entirely on one side of the hyperplane. Lastly,
the $K$-D tree data structure is similar to the axis-parallel decision
tree, except that it imposes the additional constraint that all splitting
rules at the same level must be chosen from the same dimension, see
Figure \ref{fig: K-D tree illustrate} illustrating this point.

From the perspective of \emph{effectiveness}, the ancestral constraint
imposed by Axiom 4 drastically reduces the search space of decision
trees. As noted in the introduction, the number of possible size-$K$
labeled binary decision trees is $K!\times\mathit{Catalan}\left(K\right)$.
By introducing Axiom 4, we show in the next Subsection that proper
decision trees have a \emph{maximal} complexity of only $K!$. This
greatly decrease the combinatorial complexity of the problem, for
instance, the four hyperplanes given in \ref{fig:possible_decision_trees}
yields only \emph{three} possible decision trees while the number
of possible binary labeled trees is $4!\times\mathit{Catalan}\left(4\right)=336$.

\subsubsection{Proper decision trees and $K$-permutations}

\begin{figure}[H]
	\begin{center}
		\begin{tikzpicture}
			\node[draw, circle] (r1) at (0, 0) {$r_1$};
			\node[draw, circle] (r2) at (-1.5, -1.5) {$r_2$};
			\node[draw, circle] (r3) at (1.5, -1.5) {$r_3$};
			\draw (r1) -- (r2);
			\draw (r1) -- (r3);
		\end{tikzpicture}
	\end{center}
	
	\caption{A decision tree with three splitting rules, corresponds to 3-permutation
		$\left[r_{1},r_{2},r_{3}\right]$.\label{fig:A decision tree with three splitting rules}}
\end{figure}

The axioms of proper decision trees defined above, enable the analysis
of their algorithmic and combinatorial properties. One of the most
important combinatorial properties discussed in this paper is that
any proper decision tree can be \emph{uniquely} characterized as a
$K$-permutation through a \emph{level-order traversal} of the tree.

Tree traversal refers to the process of visiting or accessing each
node of the tree \emph{exactly} \emph{once} in a specific order. Level-order
traversal visits all nodes at the same level before moving on to the
next level. The main idea of level-order traversal is to visit all
nodes at higher levels before accessing any nodes at lower levels,
thereby establishing a hierarchy of nodes between levels. For example,
the level-order traversal for the binary tree in Figure \ref{fig:A decision tree with three splitting rules}
has two possible corresponding $3$-permutations, $\left[r_{1},r_{2},r_{3}\right]$
or $\left[r_{1},r_{3},r_{2}\right]$. If we fix a traversal order
such that the \emph{left subtree is visited before the right subtree},
only one arrangement of rules can exist. Based on the axioms of proper
decision trees, we can state the following theorem about the level-order
traversal of a proper decision tree.
\begin{theorem}
	\emph{Given a level-order traversal of a decision tree $\left[\ldots,r_{i},\ldots,r_{j},\ldots,r_{k},\ldots\right]$,
		if} \emph{$r_{j}$ precedes $r_{k}$ in the traversal, and $r_{i}$
		is the ancestor of $r_{j}$ and $r_{k}$, then either:}
	
	\emph{1. $r_{j}$ and $r_{k}$ are at the same level, or}
	
	\emph{2. $r_{k}$ is a descendant of $r_{j}$.}
	
	\emph{Only one of the two cases can occur. If the first case holds,
		they are the left or right children of another node, and their positions
		cannot be exchanged.\label{thm: main theorem}}
\end{theorem}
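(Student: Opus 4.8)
The plan is to reduce everything to the defining property of level-order traversal: nodes are emitted in non-decreasing order of depth, with ties among equal-depth nodes broken by a fixed left-before-right convention (at each branch node $r$, the child governing $r^{+}$ is visited before the child governing $r^{-}$). First I would make this precise through the standard queue-based (breadth-first) characterization and record the one consequence that drives the whole argument: since $r_j$ precedes $r_k$, we must have $\mathrm{depth}(r_j)\le\mathrm{depth}(r_k)$. I would prove this inequality first, by induction on the level index, as it is the engine for both the dichotomy and the uniqueness claims.

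Next I would dispatch mutual exclusivity (``only one of the two cases can occur''), since it is the cleanest piece. If $r_k$ is a descendant of $r_j$ then $\mathrm{depth}(r_k)>\mathrm{depth}(r_j)$, so $r_j$ and $r_k$ cannot lie at the same level; conversely, equal depth rules out the descendant relation. Hence the two alternatives are logically incompatible, and at most one can hold.

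The substance is exhaustiveness, which I would establish by splitting on the inequality above. If $\mathrm{depth}(r_j)=\mathrm{depth}(r_k)$ we are in case~1; the left-before-right convention then assigns $r_j$ and $r_k$ distinct, ordered positions within their level, so their order is forced and ``cannot be exchanged'' — precisely the fact that later yields a unique $K$-permutation, so I would phrase it in terms of the fixed orientation of each split. If instead $\mathrm{depth}(r_j)<\mathrm{depth}(r_k)$, I must show $r_k$ is a descendant of $r_j$: here I would pass to the subtree rooted at the common ancestor $r_i$ and examine the downward path from $r_i$ to $r_k$, arguing via the trichotomy of Axiom~4 together with the transitivity of Axiom~3 that $r_j$ is forced onto this path, whence $r_i\left(\swarrow\vee\searrow\right)r_j\left(\swarrow\vee\searrow\right)r_k$ and $r_k$ descends from $r_j$.

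I expect this last sub-case to be the main obstacle. The delicate point is excluding the configuration in which $r_j$ and $r_k$ sit in two different subtrees of $r_i$ at different depths, because in a generic binary tree a strictly shallower predecessor need not be an ancestor. The argument therefore cannot be purely combinatorial about binary trees; it must genuinely invoke the decision-tree axioms — the trichotomy $\boldsymbol{K}_{ij}\in\left\{1,0,-1\right\}$ and the transitivity of the ancestry relation — to locate $r_j$ relative to the root-to-$r_k$ path. I would attempt this by structural induction on the subtree rooted at $r_i$, applying the induction hypothesis to whichever child subtree contains $r_k$ and using the ancestry constraints to position $r_j$; the base case is the two-rule configuration, and the inductive step propagates the depth-versus-order relationship established for the engine inequality.
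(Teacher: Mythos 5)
There is a genuine gap, and it sits exactly at the point you flagged as the main obstacle: the exhaustiveness sub-case cannot be repaired, because the claim you are trying to prove there is false. Consider a proper decision tree in which $r_{i}$ is the root, $r_{j}$ is its left child with no branch nodes below it, $r_{m}$ is its right child, and $r_{k}$ is a child of $r_{m}$. Nothing in Axioms 1--4 forbids this configuration (take hyperplanes where $r_{j}$ cuts only $r_{i}^{+}$, $r_{m}$ cuts only $r_{i}^{-}$, and $r_{k}$ cuts only one side of $r_{m}$ inside $r_{i}^{-}$; then $\boldsymbol{K}_{jk}=0$ and all other entries are consistent with the axioms). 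Its level-order traversal is $\left[r_{i},r_{j},r_{m},r_{k}\right]$: here $r_{j}$ precedes $r_{k}$ and $r_{i}$ is a common ancestor, yet $r_{j}$ and $r_{k}$ are at different levels and $r_{k}$ is not a descendant of $r_{j}$. Consequently no structural induction that tries to force $r_{j}$ onto the root-to-$r_{k}$ path can succeed: even in a proper decision tree, a strictly shallower predecessor in the traversal need not be an ancestor.

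The mismatch comes from the reading of the theorem. Read as a statement about a single tree (your reading), the ``either/or'' part is false, and the ``only one can occur'' part is trivial and needs no axioms at all --- in any tree whatsoever, two nodes at the same level are never in an ancestor-descendant relation, which is also why your exclusivity argument never touches Axioms 3 and 4. What the paper's own proof establishes, and what the subsequent injectivity corollary actually needs, is a cross-realization uniqueness statement: because Axiom 4 makes the ancestry relation $\boldsymbol{K}_{ik}$ a fixed attribute of a pair of rules, independent of any particular tree, the same ordered pair $\left(r_{j},r_{k}\right)$ cannot be at the same level (in different branches of some $r_{i}$, forcing one sign of $\boldsymbol{K}_{ik}$) in one proper tree while $r_{k}$ is a descendant of $r_{j}$ (forcing, via transitivity from Axiom 3, the opposite sign of $\boldsymbol{K}_{ik}$) in another proper tree built from the same rules. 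That is, the theorem pins down the relative structure of $r_{j}$ and $r_{k}$ across all proper trees consistent with a given traversal; it does not assert that within one tree one of the two relations must hold between every such pair. Your depth inequality and your treatment of the equal-depth case (fixed left/right orientation read off from the sign of the ancestry relation) are sound and aligned with the paper's intent, but the overall plan proves a false statement in one part and a trivial one in the other, and so does not deliver what the corollary requires.
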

\begin{proof}
	We prove this by contradiction. Assume, by contradiction, $r_{j}$
	is in the same level as $r_{k}$ and $r_{k}$ can be a descendant
	of $r_{j}$. Suppose we have a pair of rules $r_{k}$ and $r_{j}$,
	where $r_{j}$ precedes $r_{k}$ in the level-order traversal.
	
	Case 1: Assume $r_{j}$ and $r_{k}$ are at the same level, we prove
	that $r_{k}$ cannot be the descendant of $r_{j}$. Because of Axiom
	3, if $r_{j}$ is in the same level of $r_{k}$, then $r_{j}$ and
	$r_{k}$ are generated from different branches of some ancestor $r_{i}$,
	which means that they lie in two disjoint regions defined by $r_{i}$.
	If $r_{k}$ is the descendant of $r_{j}$ then it is also a \emph{left-descendant}
	of $r_{i}$ due to associativity. According to Axiom 4, either $r_{j}$
	is a left child of $r_{i}$ or right children of $r_{i}$ it can not
	be both. This leads to a contradiction, as it would imply $r_{k}$
	belongs to both disjoint subregions defined by $r_{i}$.
	
	Case 2:\emph{ }Assume $r_{k}$ is a descendant of $r_{j}$, we prove
	that $r_{j}$ and $r_{k}$ cannot be at the same level. By the transitivity
	of the ancestry relation, both $r_{k}$ and $r_{j}$ are descendants
	of the parent node (\emph{immediate} \emph{ancestor}) of $r_{j}$,
	which we call $r_{i}$. Since, $r_{k}$ cannot be both the right-
	and left-child of $r_{i}$ at the same time, as it must either be
	the left-child or right-child according to Axiom 4. So $r_{k}$ and
	$r_{j}$ can not be in the same level if $r_{k}$ is a descendant
	of $r_{j}$.
	
	Thus, if $r_{j}$ precedes $r_{k}$ in the level-order traversal,
	this either places them at the same level \emph{or} establishes an
	ancestor-descendant relationship between them, but not both.
\end{proof}
An immediate consequence of the above theorem is that any $K$-permutation
of rules corresponds to the level-order traversal of \emph{at} \emph{most}
\emph{one} proper decision tree. The ordering between any two adjacent
rules corresponds to only one structure: either they are on the same
level, or one is the ancestor of the other. For instance, in Figure
\ref{fig:A decision tree with three splitting rules}, if $r_{2}$
and $r_{3}$ are in the same level and $r_{2}$ is the left-child
of $r_{1}$, then $r_{3}$ cannot be the child of $r_{2}$, because
it cannot be the left-child of $r_{1}$. Hence, only a proper decision
tree corresponds to the permutation $\left[r_{1},r_{2},r_{3}\right]$.
Therefore, once a proper decision tree is given, we can obtain its
$K$-permutation representation easily by using a level-order traversal.
\begin{corollary}
	\emph{A decision tree consisting of $K$ splitting rules corresponds
		to a unique $K$-permutation if and only if it is proper. In other
		words, there exist an injective mapping from proper decision trees
		to $K$-permutations.}
\end{corollary}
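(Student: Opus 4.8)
The plan is to present the correspondence explicitly as a map $\phi$ that sends a proper decision tree to the $K$-permutation produced by a level-order traversal under the fixed convention that, within each level, the left subtree is visited before the right subtree; then to check that $\phi$ is well defined and injective, and finally that properness is exactly the hypothesis making this correspondence unique. The two implications of the biconditional then read: properness guarantees a unique associated permutation (and conversely the permutation recovers the tree), whereas a tree failing the axioms cannot enjoy such a unique correspondence.

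First I would verify that $\phi$ is a well-defined function into the $K$-permutations. A proper tree has exactly $K$ branch nodes, each carrying a single rule by Axiom~1, and the rules $[r_1,\dots,r_K]$ are distinct; since a traversal visits every node exactly once, the output lists each rule exactly once and is therefore a genuine permutation. The only remaining freedom in a level-order listing is the order of siblings within a level, and the fixed left-before-right convention removes it, so each proper tree determines exactly one permutation, whose membership in the set of \emph{valid} $K$-permutations holds by definition of that set.

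Next I would dispatch injectivity, which is essentially already available. Suppose $\phi(T_1)=\phi(T_2)=\sigma$ with $T_1,T_2$ proper. The consequence of Theorem~\ref{thm: main theorem} recorded above --- that any $K$-permutation is the level-order traversal of \emph{at most one} proper decision tree --- forces $T_1=T_2$. The mechanism is exactly that theorem: for each adjacent pair in $\sigma$ the two admissible configurations (same level, or ancestor--descendant) are mutually exclusive under Axiom~4, so reading $\sigma$ from left to right reconstructs the tree structure without ambiguity. This simultaneously yields the ``if'' direction of the biconditional, since it shows the permutation obtained from a proper tree determines that tree uniquely.

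Finally, for the ``only if'' direction I would argue the contrapositive: a tree violating the axioms does not correspond to a unique $K$-permutation. The decisive case is the failure of Axiom~4, where some $r_k$ satisfies both $r_i\swarrow r_k$ and $r_i\searrow r_k$ (the binary-feature situation noted after the axioms); then the two cases of Theorem~\ref{thm: main theorem} can hold together, and one can exhibit two distinct trees sharing the same level-order traversal, so the correspondence ceases to be injective. I expect this step to be the main obstacle: unlike the proper case, which rests entirely on the already-established theorem, the converse demands checking that \emph{each} axiom is genuinely necessary --- constructing, for a tree that fails a given axiom, either a traversal that is not a permutation or two distinct trees with identical traversals --- and it is this case analysis, rather than any single clean computation, that requires the real care.
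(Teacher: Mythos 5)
Your proposal is correct, and its overall architecture matches the paper's: both directions are handled via the level-order traversal map, with properness (Axiom 4, through Theorem~\ref{thm: main theorem}) driving injectivity and failure of the axioms driving the converse. The one genuine difference is how injectivity is dispatched. The paper re-proves it from scratch: assuming two distinct proper trees $T_1 \neq T_2$ share a traversal $\sigma$, it runs a three-case contradiction analysis (different sibling order, different levels, different parents), showing each structural difference would perturb $\sigma$. You instead invoke the statement recorded immediately after Theorem~\ref{thm: main theorem} --- that any $K$-permutation is the level-order traversal of \emph{at most one} proper decision tree --- which gives injectivity in one line. Your route is cleaner and sidesteps the paper's somewhat garbled case analysis, but note that it shifts the entire burden onto that ``immediate consequence,'' which the paper asserts with only a sentence of justification; the paper's three-case analysis inside the corollary proof is, in effect, the attempt to substantiate that very claim, so the two arguments are really two presentations of the same underlying content rather than independent proofs, and a fully rigorous version would still need the reconstruction argument you gesture at (``reading $\sigma$ left to right reconstructs the tree'') spelled out. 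On the converse, you and the paper sit at the same level of detail: both assert, without exhibiting inside the proof, that when Axiom 4 fails two distinct trees share a traversal. The concrete witness already appears earlier in the paper --- the chain in which $r_2$ is a child of $r_1$ and $r_3$ a child of $r_2$, versus the tree with $r_2$ and $r_3$ as the two children of $r_1$, both traversing to $\left[r_1, r_2, r_3\right]$ under binary-feature semantics --- and citing it would close the gap you honestly flag at the end; your further worry that every axiom (not just Axiom 4) must be shown necessary is legitimate, and is a point on which the paper's own proof, which only discusses Axioms 3 and 4, is no more complete than yours.
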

\begin{proof}
	\textbf{Sufficiency}: If a decision tree is proper, its level-order
	traversal yields a unique valid $K$-permutation by level-order traversal,
	which implies it is a proper decision tree.
	
	Part 1: Existence of mapping.
	
	Because the level-order traversal algorithm is deterministic and the
	tree's structure is fixed (each branch node has a fixed position,
	left or right child of its parent), any binary tree can be transformed
	into a $K$-permutations, by a level-order traversal.
	
	Part 2: The mapping is injective.
	
	Denote $\left(r_{1},r_{2},\ldots,r_{K-1},r_{K}\right)=\left[r_{2},\ldots,r_{K-1}\right]$
	as the partial permutation of rules obtained from $\left[r_{1},r_{2},\ldots,r_{K-1},r_{K}\right]$
	by excluding the first and last elements. For example $\left(r_{1},r_{2},r_{3},r_{4}\right)=\left[r_{2},r_{3}\right]$.
	
	Assume we have two proper decision trees $T_{1}$ and $T_{2}$, with
	corresponding permutations $\sigma_{1}$ and $\sigma_{2}$. To prove
	injectivity, we need to show that 
	\begin{equation}
		\forall T_{1},T_{2}:T_{1}\neq T_{2}\implies\sigma_{1}\neq\sigma_{2}
	\end{equation}
	We proceed by proving $\exists T_{1},T_{2}:T_{1}\neq T_{2}\implies\sigma_{1}=\sigma_{2}$
	will lead to contradiction. Suppose there exist two distinct trees
	$T_{1}\neq T_{2}$ corresponding to the same permutation $\left[\ldots,r_{1},\ldots,r_{2}\ldots\right]$.
	The only way this could happen is if $r_{1}$ and $r_{2}$ were originally
	at the same level $i$, in $T_{1}$ (so that all rules $\left(r_{1},\ldots,r_{2}\right)$
	lie at the same level), but in $T_{2}$, $r_{2}$ becomes a descendant
	of some rule $r$ in $\left(r_{1},\ldots,r_{2}\right)$ at a lower
	level $j>i$, while all subsequent rules in the permutation remain
	unchanged. We will show that this violates Axiom 4.
	
	In $T_{1}$ , $r_{2}$ is not the first element at that level; any
	rule $r\in\left(r_{1},\ldots,r_{2}\right)$ preceding $r_{2}$ is
	an older sibling. Consequently, there exists a rule $r_{3}$ such
	that $r$ lies in the left subtree and $r_{2}$ lies in the right
	subtree of $r_{3}$. The only scenario in which $T_{1}\neq T_{2}$
	but $\sigma_{1}=\sigma_{2}$ would occur is if $r_{2}$ becomes a
	descendant of some $r\in\left(r_{1},\ldots,r_{2}\right)$ at level
	$j>i$. However, by the level-order traversal definition, rules at
	the same level are scanned from left to right. If $r_{2}$ becomes
	a descendant of $r$ in $T_{2}$, then since $r$ is the left descendant
	of $r_{3}$, it follows by transitivity that $r_{2}$ would become
	a left descendant of $r_{3}$. This contradicts Axiom 4, because in
	$T_{1}$, $r_{2}$ is the right descendant of $r_{3}$.
	
	Hence, $T_{2}$ cannot coexist with $T_{1}$. This contradiction implies
	that $\forall T_{1},T_{2}:T_{1}\neq T_{2}\implies\sigma_{1}\neq\sigma_{2}$,
	and therefore the mapping is injective.
	
	\textbf{Necessity}: The mapping from decision trees to $K$-permutations
	is not injective if the trees are non-proper. Specifically, distinct
	non-proper decision trees can correspond to identical permutations,
	rendering the mapping non-unique and thus non-injective.
	
	Consider a permutation\emph{ $\left[\ldots,r_{i},\ldots,r_{j},\ldots,r_{k},\ldots\right]$}.
	In the absence of Axioms 4, the structure of the tree is not sufficiently
	constrained: $r_{j}$ and $r_{k}$ may reside at the same level, or
	$r_{j}$ is the ancestor of $r_{k}$, within the same permutation.
	Consequently, this permutation can be realized by at least two distinct
	non-proper trees $T_{1}$ and $T_{2}$. Since $T_{1}\neq T_{2}$,
	this gives us a non-injective map.
\end{proof}
The injectivity between $K$-permutations and proper decision trees
implies that the number of $K$-permutations strictly exceeds the
number of possible proper decision trees, leading to the following
lemma.
\begin{lemma}
	\emph{Given a size $K$ rule list $\mathit{rs}_{K}$, the complexity
		of the search space of proper decision trees $\left|\mathcal{S}\left(K,\mathit{rs}_{K}\right)\right|$
		is strictly smaller than $K!$.}
\end{lemma}
We refer to a $K$-permutation obtained from a proper decision tree
via level-order traversal as a \emph{valid} $K$-permutation. Note
that the precise size of \emph{$\left|\mathcal{S}\left(K,\mathit{rs}_{K}\right)\right|$}
is unpredictable, as it depends on the values in $\boldsymbol{K}$.
In Subsection \ref{subsec:Complexity-of-the}, we will show that the
maximal complexity is attained when $\boldsymbol{K}$ follows a special
format, which corresponds to a tree with ``chain-like'' structure,
in which case \emph{$\left|\mathcal{S}\left(K,\mathit{rs}_{K}\right)\right|=K!$}.

\subsection{Datatypes, homomorphisms and map functions}

Before explaining the main algorithmic results of this paper, this
section introduces terminology and basic functions that will be used
throughout the paper when reasoning about programs.

\subsubsection{Function composition and partial application}

The function composition is denoted by using infix symbol $\circ$:
\[
\left(f\circ g\right)\left(a\right)=f\left(g\left(a\right)\right).
\]

We will try to use infix binary operators wherever possible to simplify
the discussion. A binary operator can be transformed into a unary
operator through \emph{partial} \emph{application} (also known as
\emph{sectioning}). This technique allows us to fix one argument of
the binary operator, effectively creating a unary function that can
be applied to subsequent values
\[
\oplus_{a}\left(b\right)=a\oplus b=\oplus^{b}\left(a\right),
\]
where $\oplus$ is a binary operator, such as $+$ or $\times$ for
numerical operations, which can be partially applied. If $\oplus:\mathcal{A}\times\mathcal{B}\to\mathcal{C}$,
then $\oplus_{a}:\mathcal{B}\to\mathcal{C}$, where $a:\mathcal{A}$
is fixed. Moreover, the variables $a$ and $b$ can be functions,
we will see examples shortly when we explain the map function.

\subsubsection{Datatypes}

There are two binary tree data types used in this paper. The first
is the \emph{branch-labeled} tree, which is defined as

\[
\mathit{BTree}\left(\mathcal{A}\right)=L\mid N\left(\mathit{BTree}\left(\mathcal{A}\right),\mathcal{A},\mathit{BTree}\left(\mathcal{A}\right)\right).
\]
The symbol ``$\mid$'' is used to separate different \emph{constructors}
for that type. Each constructor represents a different way to build
a value of that type. The above definition consists of two constructors,
which states that a binary tree $\mathit{BTree}\left(\mathcal{A}\right)$
is either an \emph{unlabeled} leaf $L$ \emph{or} a binary tree $N\left(u,a,v\right):\mathit{BTree}\left(\mathcal{A}\right)$,
where $u,v:\mathit{BTree}\left(\mathcal{A}\right)$ are the left and
right subtrees, respectively, and $a:\mathcal{A}$ is the root node.

An alternative tree definition, called a \emph{moo-tree} \citep{gibbons1991algebras}, is named for its phonetic resemblance to the Chinese word for ``tree'',
is a binary tree in which leaf nodes and branch nodes have different
types. Formally, it is defined as 
\[
\mathit{MTree}\left(\mathcal{A},\mathcal{B}\right)=DL\left(\mathcal{B}\right)\mid DN\left(\mathit{MTree}\left(\mathcal{A},\mathcal{B}\right),\mathcal{A},\mathit{MTree}\left(\mathcal{A},\mathcal{B}\right)\right).
\]

A decision tree is a special case of the moo-tree datatype, which
we can define as $\mathit{DTree}\left(\mathcal{R},\mathcal{D}\right)=\mathit{MTree}\left(\mathcal{R},\mathcal{D}\right)$,
i.e. a decision tree is a moo-tree where branch nodes are splitting
rules and leaf nodes are subsets of the dataset $\mathcal{D}$.

\subsubsection{Homomorphisms and fusion}

Homomorphisms are functions that \emph{fuse} \emph{into} (or \emph{propagate}
\emph{through}) type constructors, preserving their structural composition.
In functional programming and algebraic approaches to program transformation,
homomorphisms enable efficient computation by fusing recursive structures
into more compact representations.

For example, given a binary operator $\oplus$ and an identity element
$e$, a homomorphism over a list is defined as follows:
\[
\begin{aligned}h & \left(\left[\,\right]\right)\:\;\quad=e\\
	h & \left(x\cup y\right)=h\left(x\right)\oplus h\left(y\right).
\end{aligned}
\]

Here, $h$ denotes the homomorphism function, where the identity element
maps the empty list, and the operation on the concatenation of two
lists ($x\cup y$) is the result of applying the binary operator $\oplus$
to the images of the two lists.

Similarly, given $f$ and $y$, there exists a unique homomorphism
$h$, such that, for all $u$, $r$ and $v$, the equations

\begin{align*}
	h & \left(\mathit{DL}\left(a\right)\right)\,\:\;\;\quad=f\left(a\right)\\
	h & \left(\mathit{DN}\left(u,r,v\right)\right)=g\left(h\left(u\right),r,h\left(v\right)\right),
\end{align*}
hold. The homomorphism $h$ replaces every occurrence of the constructor
$\mathit{DL}$ with the function $f$, and every occurrence of the
constructor $\mathit{DN}$ with the function $g$, which is essentially
a ``\emph{relabelling}'' process. Once a homomorphism is identified,
a number of \emph{fusion} or \emph{distributivity} laws \citep{bird1996algebra,little2021dynamic}
can be applied to reason about the properties of the program.

\subsubsection{The map functions}

One example of fusion is the map function. Given a list $x$ and a
unary function $f$, the map function over the list, denoted $\mathit{mapL}$,
can be defined as
\[
\mathit{mapL}\left(f,x\right)=\left[f\left(a\right)\mid a\leftarrow x\right].
\]

This definition corresponds to a standard list comprehension, meaning
that the function $f$ is applied to each element $a$ in $x$. By
using sectioning, $\mathit{mapL}$ can be partially applied by defining
the unary operator $\mathit{mapL}{}_{f}\left(x\right)=\mathit{mapL}\left(f,x\right)$.
Similarly, the map function can be defined over a decision tree as
follows
\[
\begin{aligned}\mathit{mapD} & \left(f,\mathit{DL}\left(a\right)\right)=\mathit{DL}\left(f\left(a\right)\right)\\
	\mathit{mapD} & \left(f,\mathit{DN}\left(u,r,v\right)\right)=DN\left(\mathit{mapD}\left(f,u\right),r,\mathit{mapD}\left(f,v\right)\right).
\end{aligned}
\]

This applies the function $f$ to every \emph{leaf} of the tree.

\section{A generic dynamic programming algorithm for solving the optimal,
	proper decision tree problem \label{sec: Decision-tree-problem specification}}

\subsection{Specifying the optimal proper decision tree problem through $K$-permutations\label{subsec:tree datatype decision tree generatr}}

The goal of the ODT problem is to construct a function $\mathit{odt}:\mathbb{N}\times\left[\mathcal{R}\right]\to\mathit{DTree}\left(\mathcal{R},\mathcal{D}\right)$
that outputs the optimal decision tree with $K$ splitting rules.
This can be specified as
\begin{equation}
	\begin{aligned}\mathit{odt}_{K} & :\left[\mathcal{R}\right]\to\mathit{DTree}\left(\mathcal{R},\mathcal{D}\right)\\
		\mathit{odt}_{K} & =\mathit{min}_{E}\circ\mathit{genDTKs}_{K}.
	\end{aligned}
	\label{odt_k --orginal}
\end{equation}

The function $\mathit{genDTKs}_{K}$, short for ``generate decision
trees with $K$ splitting rules'', generates all possible decision
trees of size $K\geq1$ from a given input of splitting rules $\mathit{rs}:\left[r_{1},r_{2},\ldots r_{M}\right]$,
where $M\geq K$, which is precisely the programmatic counterpart
of $\mathcal{S}\left(K,\mathit{rs}\right)$ given in (\ref{eq: ODT-specification-MIP}).

The above specification of $\mathit{odt}$ is essentially a brute-force
program, i.e. it exhaustively generates all possible decision trees
of size $K$ using $\mathit{genDTKs}_{K}$ and then selects the best
one using $\mathit{min}_{E}$. The function $\mathit{min}:\left(\mathit{DTree}\left(\mathcal{R},\mathcal{D}\right)\to\mathbb{R}\right)\times\left[\mathit{DTree}\left(\mathcal{R},\mathcal{D}\right)\right]\to\mathit{DTree}\left(\mathcal{R},\mathcal{D}\right)$
selects an optimal decision tree from a (assume, non-empty) list of
candidates based on the objective value calculated by $E:\mathit{DTree}\left(\mathcal{R},\mathcal{D}\right)\to\mathbb{R}$
(defined in Subsection \ref{subsec: DP algorithm}), which is given
by
\begin{equation}
	\begin{aligned}\mathit{min}_{E} & \left(\left[a\right]\right)=a\\
		\mathit{min}_{E} & \left(a:\mathit{as}\right)=\mathit{smaller}_{E}\left(a,min_{E}\left(\mathit{as}\right)\right),
	\end{aligned}
\end{equation}
Note that symbol ``$:$'' is overloaded: When appearing in code expressions,
such as above, $:$ denotes the \emph{list-construction} (cons) operator,
which prepends an element to an existing list. When appearing in type
signatures, the same symbol : specifies the type of an expression.

The function $\mathit{smaller}_{E}$ is defined as follows 
\begin{equation}
	\mathit{smaller}_{E}\left(a,b\right)=\begin{cases}
		a, & \text{if \ensuremath{E\left(a\right)\leq E\left(b\right)}}\\
		b, & \text{otherwise}.
	\end{cases}
\end{equation}

As discussed in the previous section on the combinatorial structure
of proper decision trees, each proper decision tree can be uniquely
represented by a $K$-permutation of splitting rules via a level-order
traversal. Accordingly, one possible definition of $\mathit{genDTKs}_{K}$
is provided below.
\begin{definition}
	\emph{Exhaustive proper decision tree generator based on $K$-permutations}.
	Given a list of splitting rules $\mathit{rs}:\left[r_{1},r_{2},\ldots r_{M}\right]$,
	where $M\geq K$, all possible size $K$ decision trees in search
	space $\mathcal{S}\left(K,\mathit{rs}\right)$ can be exhaustively
	enumerated by the program
\end{definition}
\begin{equation}
	\begin{aligned}\mathit{genDTKs}_{K} & :\left[\mathcal{R}\right]\to\left[\left[\mathcal{R}\right]\right]\\
		\mathit{genDTKs}_{K} & =\mathit{filter}_{p}\circ\mathit{kperms}_{K}.
	\end{aligned}
	\label{eq: DT generator through k-perms}
\end{equation}
where the predicate $p:\left[\mathcal{R}\right]\to\mathit{Bool}$
checks whether an input permutation is valid. The function $\mathit{filter}_{p}$
removes all non-valid permutations returned by $\mathit{kperms}_{K}$.

As noted, $K$-permutations are among the one of the best understood
combinatorial objects \citep{sedgewick1977permutation}. One possible
definition of a $K$-permutation generator is defined through a\emph{
	$K$-combination} generator---$K$-permutations are simply all possible
rearrangements (permutations) of each $K$-combination. In other words,
we can define $\mathit{kperms}_{K}$ by the following 
\begin{equation}
	\mathit{kperms}{}_{K}=\mathit{concatMapL}{}_{perms}\circ\mathit{kcombs}_{K},\label{eq: definition of K-permutation}
\end{equation}
where $\mathit{concatMapL}_{\mathit{perms}}=\mathit{concat}\circ\mathit{mapL}_{\mathit{perms}}$,
which first applies $\mathit{perms}:\left[\mathcal{A}\right]\to\left[\left[\mathcal{A}\right]\right]$
(generating all possible permutations of a given list) to each list
within a list of lists, and then uses the flatten operation $\mathit{concat}:\left[\left[\mathcal{A}\right]\right]\to\left[\mathcal{A}\right]$
which collapses the inner lists into a single list. Thus, by substituting
the definition of (\ref{eq: definition of K-permutation}) and (\ref{eq: DT generator through k-perms})
into (\ref{odt_k --orginal}), we obtain,

\begin{equation}
	\begin{aligned}\mathit{odt}{}_{K} & :\left[\mathcal{R}\right]\to\left[\mathcal{R}\right]\\
		\mathit{odt}{}_{K} & =\mathit{mi}n{}_{E}\circ\mathit{filter}{}_{p}\circ\mathit{concat}\circ\mathit{mapL}{}_{perms}\circ\mathit{kcombs}{}_{K},
	\end{aligned}
	\label{odt_k -- v2}
\end{equation}
and $\mathit{genDTKs}_{K}=\mathit{filter}{}_{p}\circ\mathit{concat}\circ\mathit{mapL}{}_{perms}\circ\mathit{kcombs}{}_{K}$.
Since $\mathit{kcombs}{}_{K}$ produces only $\left(\begin{array}{c}
	M\\
	K
\end{array}\right)=O\left(M^{K}\right)$ $K$-combinations, and $K$-permutations are all possible permutations
of each $K$-combinations, we have $\left|\mathit{kperms}_{K}\right|=K!\times\left(\begin{array}{c}
	M\\
	K
\end{array}\right)=O\left(M^{K}\right)$. This already provides a polynomial-time algorithm for solving the
ODT problem when $M$ and $K$ are fixed constants (assuming that
the predicate $p$ has polynomial complexity and $\mathit{min}_{E}$
is linear in the candidate list size). In Subsection \ref{subsec: ODT in ML},
we show that the number of possible rules $M$ grows polynomially
with the input data size for splitting rules considered in ML.

\subsection{A simplified decision tree problem: the decision tree problem with
	$K$ fixed splitting rules (branch nodes) \label{subsec: Deriving simplified odt problem using equational reasoning}}

\subsubsection{Simplified optimal decision tree problem}

In the expanded specification (\ref{odt_k -- v2}), we have obtained
\begin{equation}
	\mathit{genDTKs}_{K}=\mathit{filter}{}_{p}\circ\mathit{concat}\circ\mathit{mapL}{}_{perms}\circ\mathit{kcombs}{}_{K},\label{eq: expanded specification of genDTKs}
\end{equation}
which suggests an interesting fusion in the composed function $\mathit{filter}{}_{p}\circ\mathit{concat}\circ\mathit{mapL}{}_{perms}$
formalized by the following theorem.
\begin{theorem}
	\emph{The $\mathit{genDTKs}_{K}$ generator defined in (\ref{eq: expanded specification of genDTKs})
		is equivalent to 
		\begin{equation}
			\mathit{genDTKs}_{K}=\mathit{concatMapL}{}_{genDTs}\circ\mathit{kcombs}_{K}\label{eq: new-specification of genDTKs}
		\end{equation}
		where $\mathit{genDTs}=\mathit{filter}_{p}\circ\mathit{perms}$.}
\end{theorem}
\begin{proof}
	This can be proved simply by the following equational reasoning
	\begin{align*}
		& \mathit{filter}{}_{p}\circ\mathit{concat}\circ\mathit{mapL}{}_{perms}\circ\mathit{kcombs}{}_{K}\\
		\equiv & \text{ filter fusion laws \ensuremath{\mathit{filter}_{p}\circ\mathit{concat}=\mathit{concat}\circ\mathit{mapL}_{\mathit{filter}_{p}}}}\\
		& \mathit{concat}\circ\mathit{mapL}{}_{\mathit{filter}_{p}}\circ\mathit{mapL}{}_{perms}\circ\mathit{kcombs}{}_{K}\\
		\equiv & \text{ map composition \ensuremath{\mathit{mapL}_{f}\circ\mathit{mapL}_{g}=\mathit{mapL}_{f\circ g}}}\\
		& \mathit{concat}\circ\mathit{mapL}{}_{\mathit{filter}_{p}\circ\mathit{perms}}\circ\mathit{kcombs}{}_{K}\\
		\equiv & \text{ define \ensuremath{\mathit{genDTs}=\mathit{filter}_{p}\circ\mathit{perms}}}\\
		& \mathit{concatMapL}{}_{genDTs}\circ\mathit{kcombs}{}_{K},
	\end{align*}
	where the laws used in above derivation can all be found in \citet{bird1987introduction}
	(since these laws are easy to verify and intuitively obvious, we do
	not repeat their proofs here).
\end{proof}
Equation (\ref{eq: new-specification of genDTKs}) illustrates the
simple fact that generating size $K$ decision trees from an input
list $\mathit{rs}:\text{\ensuremath{\left[\mathcal{R}\right]} }$
can be decomposed into generating decision trees over smaller instances
$\mathit{rs}_{K}$--- the $K$-combinations of splitting rules---using
a (possibly) more efficient program produced by $\mathit{genDTs}$.

Consequently, substituting (\ref{eq: new-specification of genDTKs})
into the definition of $\mathit{odt}_{K}$, yields

\begin{equation}
	\begin{aligned}\mathit{odt}{}_{K} & :\left[\mathcal{R}\right]\to\mathit{DTree}\left(\mathcal{R},\mathcal{D}\right)\\
		\mathit{odt}{}_{K} & =\mathit{min}{}_{E}\circ\mathit{concatMapL}{}_{genDTs}\circ\mathit{kcombs}{}_{K}.
	\end{aligned}
	\label{odt_k -- v3}
\end{equation}

Interestingly, there exists a \emph{distributivity} between the function
$\mathit{smaller}_{E}$ function (which defines $\mathit{min}{}_{E}$)
and the list append operation $\cup$ (which defines $\mathit{concatMapL}$),
namely, $\mathit{smaller}_{E}\left(a,b\right)\cup c=\mathit{smaller}_{E}\left(a\cup c,b\cup c\right)$.
This leads to the following result, which complete the first part
of Theorem \ref{thm: sodt-introduction}.
\begin{theorem}
	Simplified decision tree theorem\emph{. Given the program $\mathit{odt}{}_{K}=\mathit{min}{}_{E}\circ\mathit{concatMapL}{}_{genDTs}\circ\mathit{kcombs}{}_{K}$,
		we have
		\begin{equation}
			\mathit{odt}{}_{K}=\mathit{min}{}_{E}\circ\mathit{concatMapL}{}_{genDTs}\circ\mathit{kcombs}{}_{K}=\mathit{min}{}_{E}\circ\mathit{mapL}_{\mathit{sodt}}\circ\mathit{kcombs}{}_{K}\label{eq: sodt theorem}
		\end{equation}
		where $\mathit{sodt}=\mathit{min}_{E}\circ\mathit{genDTs}$, and $\mathit{sodt}$
		is short for ``}simplified optimal decision tree problem\emph{.''\label{thm: Simplified-decision-tree problem}}
\end{theorem}
\begin{proof}
	Equation (\ref{eq: sodt theorem}) can be proved by following equational
	reasoning
	\[
	\begin{aligned} & \mathit{min}_{E}\circ\mathit{mapL}{}_{genDTs}\circ\mathit{kcombs}{}_{K}\\
		\equiv & \text{ distributivity \ensuremath{\mathit{min}_{E}\circ\mathit{concat}=\mathit{min}_{E}\circ\mathit{mapL}_{\mathit{min}_{E}}}}\\
		& \mathit{min}_{E}\circ\mathit{mapL}{}_{\mathit{min}_{E}\circ genDTs}\circ\mathit{kcombs}{}_{K}\\
		\equiv & \text{ define \ensuremath{\mathit{sodt}=\mathit{min}_{E}\circ\mathit{genDTs}}}\\
		& \mathit{min}_{E}\circ\mathit{mapL}_{\mathit{sodt}}\circ\mathit{kcombs}{}_{K}.
	\end{aligned}
	\]
	
	Using the property $\mathit{smaller}_{E}\left(a,b\right)\cup c=\mathit{smaller}_{E}\left(a\cup c,b\cup c\right)$,
	the distributivity law $\ensuremath{\mathit{min}_{E}\circ\mathit{concat}=\mathit{min}_{E}\circ\mathit{mapL}_{\mathit{min}_{E}}}$
	follows directly. This can be derived using the promotion lemmas in
	\citet{bird1987introduction} or, equivalently, Exercise 7.3 in \citet{bird2020algorithm}.
\end{proof}
The function $\mathit{sodt}\left(\mathit{rs}_{K}\right)$ computes
the optimal decision tree for a given $K$-combination of the rules
$\mathit{rs}_{K}$. By applying $\mathit{sodt}\left(\mathit{rs}_{K}\right)$
to each $K$-combination in ($\mathit{rs}_{K}\in\mathit{kcombs}{}_{K}\left(\mathit{rs}\right)$),
and selecting the best among them, we recover the same optimal solution
as the original ODT problem. In other words, $\mathit{sodt}\left(\mathit{rs}_{K}\right)$
solves the optimal decision tree problem over the search space $\mathcal{S}\left(K,\mathit{rs}_{K}\right)$
(all possible decision trees with respect to $K$ splitting rules).

Furthermore, the new definition of $\mathit{odt}_{K}$ on the right-hand
side of equation (\ref{eq: sodt theorem}) is potentially much more
efficient. For each $\mathit{rs}_{K}$, $\mathit{sodt}\left(\mathit{rs}_{K}\right)$
returns only \textbf{one} optimal tree, so $\mathit{min}{}_{E}$ only
needs to select from the set of optimal trees generated by $\mathit{concatMapL}{}_{genDTs}\circ\mathit{kcombs}{}_{K}$,
which is substantially smaller than the set of all possible size-$K$
decision trees produced by the original $\mathit{concatMapL}{}_{genDTs}\circ\mathit{kcombs}$.

\subsection{Potential speed-up for the simplified decision tree problem \label{subsec:Potential-speed-up-for}}

There is little room left for further speed-up in the overall structure
of the simplified decision tree problem (\ref{eq: sodt theorem}).
To improve efficiency, we must optimize the individual components
of (\ref{eq: sodt theorem}). However, $\mathit{min}_{E}$ offers
limited scope for improvement, as it merely selects the best element
from a list. Similarly, $\mathit{kcombs}{}_{K}$ is already well studied,
with several equivalently efficient definitions for generating combinations
(see Section II.1.2 in \citet{he2025ROF}).

Thus, the main opportunity for optimization lies in $\mathit{sodt}$.
Its definition is given as:
\begin{equation}
	\begin{aligned}\mathit{sodt} & :\left[\mathcal{R}\right]\to\left[\left[\mathcal{R}\right]\right]\\
		\mathit{sodt} & =\mathit{min}{}_{E}\circ\mathit{genDTs}\\
		& =\mathit{min}{}_{E}\circ\mathit{filter}_{p}\circ\mathit{perms}
	\end{aligned}
	\label{sodt --specification}
\end{equation}
As explained in the previous section, $\mathit{sodt}\left(\mathit{rs}_{K}\right)$
first generates all valid permutations of a $K$-combination of rules
$\mathit{rs}_{K}$, then selects the best one using $\mathit{min}{}_{E}$.
This is essentially a brute-force algorithm that addresses the problem
of finding the optimal decision tree with respect to $\mathit{rs}_{K}$.
It naturally invites the following question:
\begin{quote}
	\emph{What is the optimal proper decision tree with respect to search
		space $\mathcal{S}\left(K,\mathit{rs}_{K}\right)$? Does a greedy
		or dynamic programming (DP) solution exist for this problem?}
\end{quote}
It has long been recognized in the constructive programming community
\citep{de1994categories,bird1996algebra} that efficient combinatorial
algorithms---such as BnB, DP, and greedy methods---can be systematically
derived from brute-force specifications such as (\ref{sodt --specification}).
Two key factors are particularly important for constructing an efficient
combinatorial algorithm: (1) \textbf{Minimizer fusion}: fusing the
$\mathit{min}{}_{E}$ function into the definition of $\mathit{genDTs}$,
thereby pruning non-optimal solutions before they are extended. Once
such fusion is proven, an efficient DP algorithm follows \citep{de1994categories,bird1996algebra};
(1) \textbf{Efficient generator}: formulating an efficient definition
for the generator $\mathit{genDTs}$ forms the foundation for constructing
an efficient program. This not only facilitates the proof of fusion
but also leads to a more efficient fused implementation.

In the following discussion, we show that a more efficient definition
of $\mathit{genDTs}$ can be established and that $\mathit{min}{}_{E}$
can indeed be fused into this new definition, yielding an efficient
DP algorithm for $\mathit{sodt}$. Before presenting the derivation,
we first explain why the current definition of $\mathit{genDTs}$
based on the permutation function $\mathit{perms}$ is inefficient,
which in turn points toward a better formulation.

\paragraph{The redundancy of $K$-permutations\label{subsec:The-redundancy-of k permutation}}

\begin{figure}
	\centering{}\includegraphics[viewport=300bp 150bp 900bp 620bp,clip,scale=0.25]{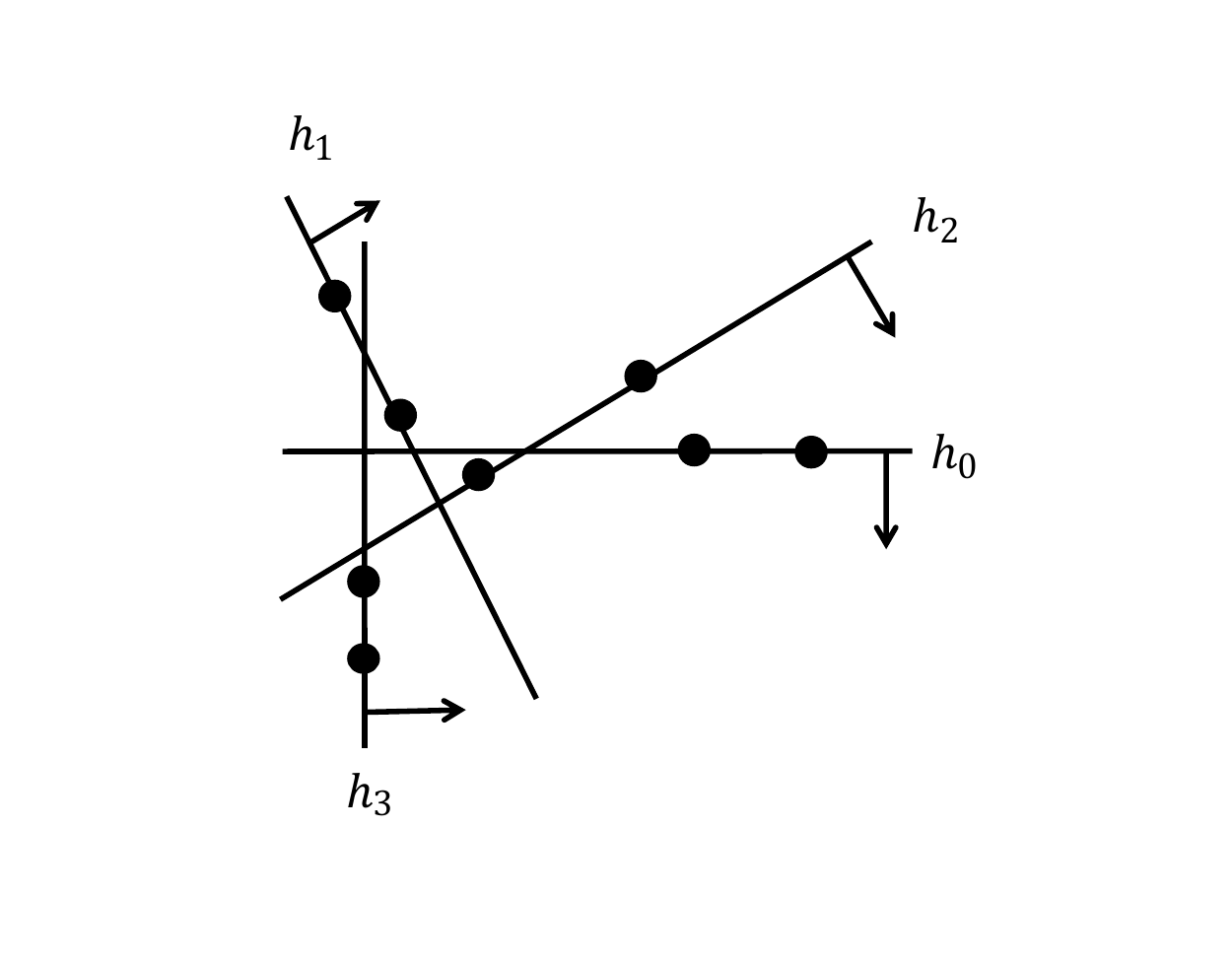}\caption{Four hyperplanes in $\mathbb{R}^{2}$. The black circles represent
		data points used to define these hyperplanes, and the black arrows
		indicate the direction of the hyperplanes. \label{fig:Four-hyperplanes, for generating decisiion trees}}
\end{figure}
\begin{figure}
	\begin{centering}
		\includegraphics[scale=0.25]{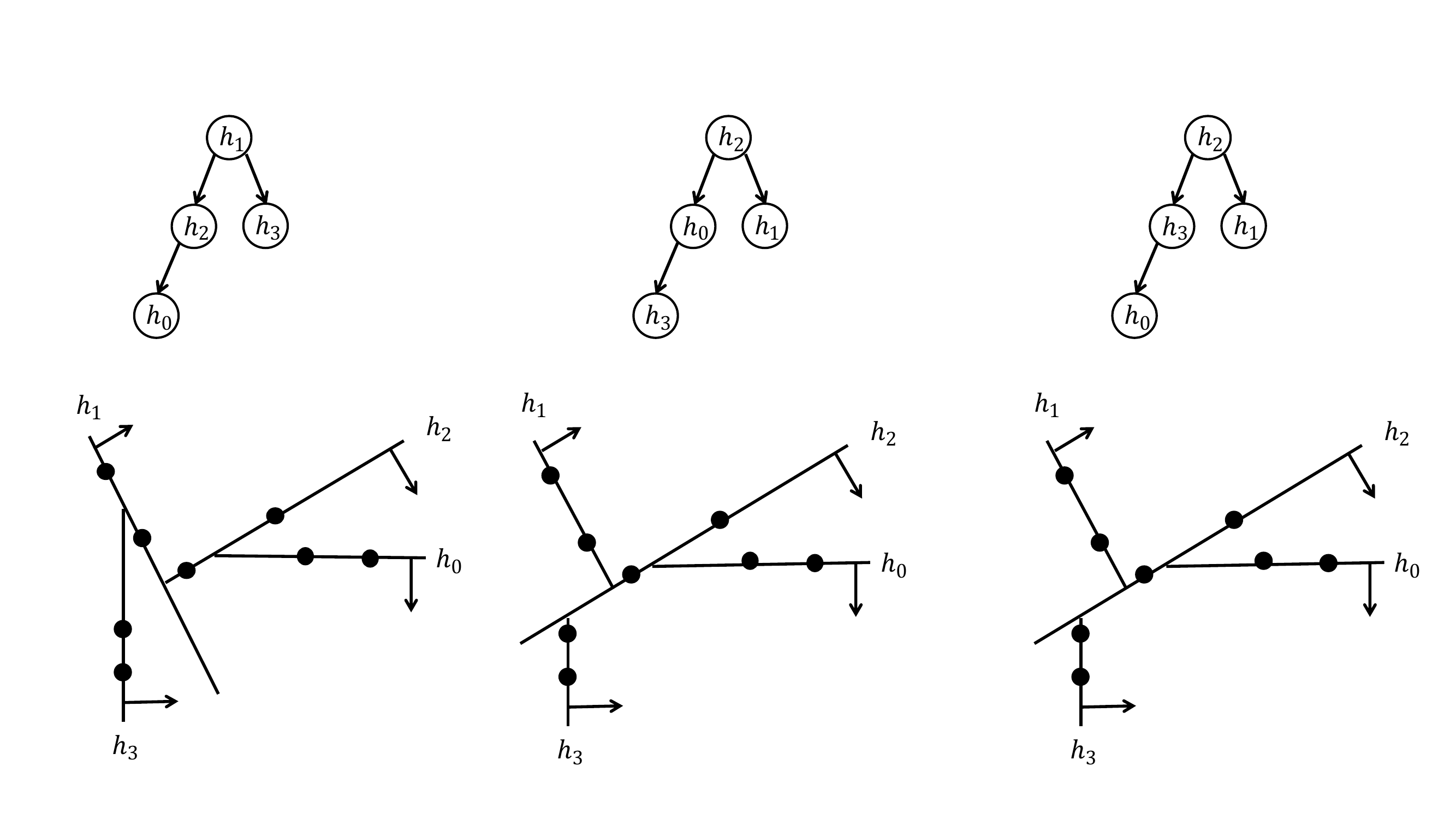}
		\par\end{centering}
	\caption{Three possible decision trees and their corresponding partitions of
		space in $\mathbb{R}^{2}$ for the four hyperplanes depicted in Figure
		\ref{fig:Four-hyperplanes, for generating decisiion trees}. The three
		decision trees above represent all possible decision trees for the
		given four-combination of hyperplanes in Figure \label{fig:possible_decision_trees},
		the three figures below describe the partition of space $\mathbb{R}^{2}$
		resulting from the corresponding decision tree.}
\end{figure}
The number of proper decision trees for a given set of rules is typically
much smaller than the total number of permutations, and determining
whether a permutation is feasible is often non-trivial. To quantify
the redundancy in generated permutations and the complexity of the
feasibility test, we examine a specific case---the hyperplane decision
tree problem---where splitting rules are defined by hyperplanes.

\citet{he2023efficient} showed that hyperplanes in $\mathbb{R}^{D}$
can be constructed using $D$-combinations of data points. Consequently,
within a decision region, hyperplanes can only be generated from the
data points contained in the decision region defined by their ancestors.
This implies that the feasibility test $p$ in the hyperplane decision
tree problem must ensure that all data points defining the hyperplanes
in a subtree remain within the decision region specified by its ancestors.
This requirement imposes a highly restrictive constraint.

To assess the impact of this constraint, we introduce simple probabilistic
assumptions. Suppose each hyperplane classifies a data point into
the positive or negative class with equal probability, i.e. 1/2 for
each class. If a hyperplane can serve as the root of a decision tree,
the probability of this occurring is $\left(\frac{1}{2}\right)^{D\times\left(K-1\right)}$,
since each hyperplane is defined by $D$-combinations of data points.
Furthermore, the probability of constructing a proper decision tree
with a \emph{chain-like }structure (each branch node has at most one
child) is given by 
\[
\left(\frac{1}{2}\right)^{D\times\left(K-1\right)}\times\left(\frac{1}{2}\right)^{D\times\left(K-2\right)}\times\ldots\times\left(\frac{1}{2}\right)^{D}=O\left(\left(\frac{1}{2}\right)^{D\times K^{2}}\right).
\]

In Subsection \ref{subsec:Complexity-of-the}, we will show that when
a decision tree has a chain-like structure, it exhibits the highest
combinatorial complexity. This naive probabilistic assumption provides
insight into the rarity of proper decision trees with $K$ splitting
rules compared to the total number of $K$-permutations of splitting
rules.

For example, the 4-combination of hyperplanes shown in Figure \ref{fig:Four-hyperplanes, for generating decisiion trees}
produces only \emph{three} decision trees, as illustrated in Figure
\ref{fig:possible_decision_trees}, while the total number of possible
permutations is $4!$=24. Interestingly, although there are three
possible trees, there are only two possible partitions. This suggests
a direction for further speeding up the algorithm by eliminating such
cases, as our current algorithm cannot remove these duplicate partitions.

Moreover, the feasibility test is a non-trivial operation. For each
hyperplane, it must be verified whether the data points used to define
it lie within the same region as determined by its ancestor hyperplanes.
Testing whether $D$ points are on the same side of a hyperplane requires
$O\left(D^{2}\right)$ computations. Given that a decision tree is
defined by $K$ hyperplanes, and in the worst case, a hyperplane may
have $K$ ancestors, the feasibility test incurs a worst-case time
complexity of $O\left(K^{2}D^{2}\right)$.

Therefore, to achieve optimal efficiency, it is essential to design
a new proper decision tree generator of type $\mathit{genDTs}:\left[\mathcal{R}\right]\to\left[\mathit{DTree}\left(\mathcal{R},\mathcal{D}\right)\right]$,
which directly generates all possible proper decision trees rather
than first producing valid permutations. This eliminates the need
for post-generation filtering, and ideally, the new generator should
also support fusion. In next few sections, we will explain the design
of such a proper decision tree generator and then demonstrate that
the recursive generator is fusable with the $\mathit{min}_{E}$ function.

\subsection{The partial proper decision tree generator based on the binary tree
	datatype\label{subsec: partial decision tree generator}}

As discussed in previous section, the key part of accelerating (\ref{eq: sodt theorem})
is to find an efficient program for solving $\mathit{sodt}$ , and
this requires us to find a new $\mathit{genDTs}:\left[\mathcal{R}\right]\to\left[\mathit{DTree}\left(\mathcal{R},\mathcal{D}\right)\right]$
function that can exhuastively generate \emph{$\mathcal{S}\left(K,\mathit{rs}_{K}\right)$
	directly, without relying on $K$-permutations, which will be expected
	to be more }efficient than the previous definition of $\mathit{genDTs}=\mathit{filter}_{p}\circ\mathit{perms}$
which generates $K$-permutation first and then eliminates non-viable
$K$-permutations, post-hoc.

Directly constructing a proper decision tree generator is challenging.
However, since the structure of a decision tree is fully determined
by its branch nodes---while the leaf nodes are defined by the intersections
of subspaces formed by the splitting rules along each root-to-leaf
path---we begin by introducing a \emph{partial decision tree generator},
$\mathit{genBTs}$, based on the $\mathit{BTree}\left(\mathcal{R}\right)$
datatype. In the Subsection \ref{subsec: complete decision tree generator},
we extend this construction into a complete decision tree generator
by generalizing \citet{gibbons1996computing}'s \emph{downward accumulation
	technique}, which will be discussed in the next section.

\paragraph{The splits function}

To construct a tree generator, we must address a key question: once
a root $r$ is fixed, which rules in $\mathit{rs}\backslash r$ (i.e.,
excluding $r$ from $\mathit{rs}$) should go to the left subtree
and which to the right? This question is central to nearly all tree-related
problems, as it is closely tied to the recursive structure underlying
tree algorithms.

Since the ancestry relation satisfies transitivity and descendant
rules can be generated only from the subregions of their ancestor
rules (Axiom 3), any splitting rules making up the root of the tree
must be the ancestor of \emph{all} its descendants. In other words,
given $K$ fixed rules $\mathit{rs}_{K}$, if $r_{i}$ is the root,
then $r_{i}\left(\swarrow\vee\searrow\right)r_{j}$, for all $r_{j}\in\mathit{rs}_{K}$
such that $j\neq i$. In the ancestry relation graph, if $r_{i}$
is the root, all edges connected to $r_{i}$ and $r_{j}\in rs_{K}$
for $j\neq i$ must have arrows, either incoming or outgoing. For
example, in Figure \ref{fig:Equivalent-representation-to}, $r_{0}$
cannot be the root of $\mathit{rs}_{4}=\left[r_{0},r_{1},r_{2},r_{3}\right]$,
because the head is closer to $r_{0}$ in the edge between $r_{0}$
to $r_{2}$, which does not contain an arrow. More simply, since $\boldsymbol{K}_{ij}\neq0$,
if $h_{i}$ can be the ancestor of $r_{j}$, then a rule $r_{i}$
can be the root if and only if $\boldsymbol{K}_{ij}\neq0$ for all
$r_{j}\in\mathit{rs}_{K}$. Therefore, assuming each element in $\boldsymbol{K}$
is unique, and $\boldsymbol{K}_{ij}=\pm1$ if $r_{j}$ can be generated
from $r_{i}^{\pm}$, a$\mathit{splits}$ function can be constructed
that identifies which splitting rules, within a given list of rules
$rs$, are viable candidates to become the root of a proper decision
tree:
\begin{align*}
	splits & :\left[\mathcal{R}\right]\to\left[\left(\left[\mathcal{R}\right],\mathcal{R},\left[\mathcal{R}\right]\right)\right]\\
	splits & \left(\mathit{rs}\right)=\left[\left(\mathit{rs}^{+},r_{i},\mathit{rs}^{-}\right)\mid r_{i}\leftarrow rs,\mathit{all}\left(r_{i},\mathit{rs}\right)=\mathit{True}\right],
\end{align*}
where $rs^{+}=\left[r_{j}\mid r_{j}\leftarrow rs,\boldsymbol{K}_{ij}=1\right]$,
$rs^{+}=\left[r_{j}\mid r_{j}\leftarrow rs,\boldsymbol{K}_{ij}=-1\right]$
and $\mathit{all}\left(r_{i},rs\right)$ returns true if all rules
$r_{j}$ in $rs$ satisfy $K_{ij}\neq0$ for $i\neq j$, and false
otherwise.

\paragraph{The partial proper decision tree generator}

With the help of the $\mathit{splits}$ function, which makes sure
only \emph{feasible} \emph{splitting} \emph{rules}---those that can
serve as the root of a tree or subtree---are selected as the root,
we can define an efficient decision tree generator as follows.
\begin{definition}
	Given a list of rule $\mathit{rs}:\left[\mathcal{R}\right]$, the
	search space of all possible \emph{branch-labeled} binary trees, that
	satisfies the proper decision tree axioms, with respect to $\mathit{rs}$
	is defined by
	\begin{equation}
		\begin{aligned}\mathit{genBTs} & :\left[\mathcal{R}\right]\to\left[\mathit{BTree}\left(\mathcal{R}\right)\right]\\
			\mathit{genBTs} & \left(\left[\;\right]\right)=\left[L\right]\\
			\mathit{genBTs} & \left(\left[r\right]\right)=\left[N\left(L,r,L\right)\right]\\
			\mathit{genBTs} & \left(\mathit{rs}\right)=\left[N\left(u,r_{i},v\right)\mid\left(rs^{+},r_{i},rs^{-}\right)\leftarrow\mathit{splits}\left(\mathit{rs}\right),u\leftarrow\mathit{genBTs}\left(\mathit{rs}^{+}\right),v\leftarrow\mathit{genBTs}\left(\mathit{rs}^{-}\right)\right].
		\end{aligned}
		\label{eq: genBTs}
	\end{equation}
\end{definition}
This $\mathit{genBTs}$ generator function recursively constructs
larger decision trees $N\left(u,r_{i},v\right)$ from smaller proper
decision trees $\mathit{genBTs}\left(\mathit{rs}^{+}\right)$ and
$\mathit{genBTs}\left(\mathit{rs}^{-}\right)$, the $\mathit{splits}$
function ensuring that only feasible splitting rules can become the
root of a subtree during recursion.

Replacing $\mathit{genDTs}$ with $\mathit{genBTs}$ in (\ref{eq: new-specification of genDTKs})
yields a program that generates binary trees with the same \emph{structure}
as the decision trees in the search space $\mathcal{S}\left(K,\mathit{rs}\right)$,
such that $\left|\mathit{rs}\right|>K$. The following lemma shows
that $\mathit{genBTs}\left(\mathit{rs}_{K}\right)$ exhaustively generates
all decision trees in $\mathcal{S}\left(K,\mathit{rs}_{K}\right)$
satisfying Axioms 1, 3, and 4 of the proper decision tree axioms,
but \textbf{not} Axiom 2, since the leaves $L$ contain no information
in the binary tree datatype.
\begin{lemma}
	\emph{The decision tree generator $\mathit{genBTs}\left(\mathit{rs}_{K}\right)$
		exhaustively generate decision trees in search space $\mathcal{S}\left(K,\mathit{rs}_{K}\right)$
		which satisifes the Axiom 1, 3, 4 of the proper decision tree axiom.\label{lem: exhaustiveness of genBTs}}
\end{lemma}
\begin{proof}
	The proof for singleton and empty case is trivial, the empty case
	consists of a single tree with a leaf contain no information. Similarly,
	the singleton case produces one tree with a rule at the root, which
	also trivially satisfies the axioms of proper decision trees.
	
	We now prove the recursive case by showing that $\mathit{splits}$
	preserves the proper decision tree axioms. The traversal $r_{i}\leftarrow\mathit{rs}$,
	which exhaustively selects $r_{i}$ from $\mathit{rs}$ as the root
	of the tree, together with the predicate $\mathit{all}\left(r_{i},\mathit{rs}\right)=\mathit{True}$
	ensures that only those splitting rules capable of serving as the
	ancestor of all their descendants can be chosen as roots. This satisfies
	the transitivity requirement (first part of Axiom 3). For each fixed
	root $r_{i}$, the remaining rules $\mathit{rs}\backslash r$ are
	partitioned into two disjoint subsets $\mathit{rs}^{+},\mathit{rs}^{-}$
	according to the ancestry matrix $\boldsymbol{K}$. The uniqueness
	assumption of $\boldsymbol{K}$ automatically guarantees the ancestral
	constraint (Axiom 4). Furthermore, since we have defined $\boldsymbol{K}_{ij}=\pm1$
	whenever $r_{j}$ can be generated from $r_{i}^{\pm}$, it follows
	that $\mathit{rs}^{\pm}$ precisely correspond to the sets of rules
	defined within the subspaces induced by $r_{i}^{\pm}$ (This satisfies
	the second part of Axiom 3, which is implicitly derived from the assumptions
	of Axiom 1 and the first part of Axiom 3).
	
	By the induction hypothesis, if $\mathit{genBTs}\left(\mathit{rs}^{\pm}\right)$
	generates all possible decision trees with respect to $\mathit{rs}^{\pm}$,
	then $\mathit{genBTs}\left(\mathit{rs}\right)$ also generates all
	possible decision trees satisfying Axiom 1, 3, 4 since $\mathit{splits}$
	preserves these axioms.
\end{proof}
In the proof of Lemma \ref{lem: exhaustiveness of genBTs}, all details
of the recursive case are encoded in the definition of $\mathit{splits}$,
which are implicitly determined by $\boldsymbol{K}$. The structure
of $\boldsymbol{K}$ is, in turn, governed by the ancestral constraint,
which enforces the uniqueness of its elements. In Subsection \ref{subsec:Non-exhua-Murtree},
we will examine two special cases that impose additional constraints
on $\boldsymbol{K}$, which will lead to a different definitions of
the decision tree search space.

In short, the $\mathit{splits}$ function \emph{governs the algorithmic
	structure of the ODT algorithm, which in turn is determined by the
	ancestral constraint of the decision tree}. Without first formalizing
the decision tree, the definition of the $\mathit{splits}$ function
remains unclear, and any claim regarding optimality would be unsound.

The remaining challenge lies in \emph{accumulating} information along
the paths from root to leaves. We next explain how a new generator,
$\mathit{genDTs}$ can be derived from $\mathit{genBTs}$ to also
satisfy axiom 2, by introducing \citet{gibbons1991algebras}'s downward
accumulation technique.

\subsection{Downwards accumulation for proper decision trees\label{subsec:Downwards-accumulation-for}}

We are now half way towards our goal. The $\mathit{genBTs}$ function
provides an efficient way of generating the \emph{structure} \emph{of
	the decision tree}, namely binary tree representations of a decision
tree. However, this is just a partial decision tree generator. Since
a decision tree is not a binary tree, we need to figure out how to
``pass information down the tree'' from the root towards the leaves
during the recursive construction of the tree. In other words, we
need to \emph{accumulate} all the information for each path of the
tree from the root to each leaf. In this section we introduce a technique
called \emph{downwards} \emph{accumulation} \citet{gibbons1991algebras},
which will helps us to construct the ``complete decision tree generator,''
such that it allows $\mathit{genBTs}$ to respect Axiom 2.

Accumulations are higher-order operations on structured objects that
leave the shape of an object unchanged, but replace every element
of that object with some accumulated information about other elements.
For instance, the \emph{prefix sums }(with binary operator $\oplus$)
over a list are an example of accumulation over list $\left[a_{1},a_{2},\ldots a_{n}\right]$,
\[
\left[a_{1},a_{1}\oplus a_{2},\ldots,a_{1}\oplus a_{2}\oplus\ldots\oplus a_{n}\right].
\]

Downward accumulation over binary trees is similar to list accumulation,
as it replaces every element of a tree with some function of that
element's ancestors. For example, given a tree

\begin{center}
	\begin{tikzpicture}[sibling distance=2.5cm,level distance=1.5cm, edge from parent/.style={draw,-latex}]
		\node[circle, draw] {$r_1$} 
		child {node[circle, draw] {$r_2$}}
		child {  node[circle, draw]  {$r_3$}
			child { node[circle, draw]  {$r_4$} }
			child { node[circle, draw]  {$r_5$} }};
	\end{tikzpicture}
\end{center}applying downwards accumulation with binary operator $\oplus$ to
the above tree results in

\begin{center}
	\begin{tikzpicture}[edge from parent/.style={draw,-latex}]
		\node[circle, draw] {$r_1$} 
		[sibling distance=2.5cm, level distance=1.5cm] 
		child {node(special) {$r_1 \oplus r_2$}}
		child {  node(special1)  {$r_1 \oplus r_3$} [sibling distance=2.5cm, level distance=1.5cm] 
			child { node(special2) {$r_1 \oplus r_3 \oplus r_4$} }
			child { node(special3)  {$r_1 \oplus r_3 \oplus r_5$} }};
		\begin{scope}[on background layer]
			\draw (special) ellipse [x radius=0.8cm, y radius=0.30cm];
			\draw (special1) ellipse [x radius=0.8cm, y radius=0.30cm];
			\draw (special2) ellipse [x radius=1.2cm, y radius=0.30cm];
			\draw (special3) ellipse [x radius=1.2cm, y radius=0.30cm];
		\end{scope}
	\end{tikzpicture}
\end{center}

The information in each leaf is determined by the path from the root
to that leaf. Therefore, \citet{gibbons1991algebras}'s downward accumulation
method can be adopted to construct the decision tree generator.

To formalize downward accumulation, we first need to define a \emph{path}
\emph{generator} and a \emph{path} \emph{datatype}. The ancestry relation
can be viewed as a path of length one, thus we can abstract $\swarrow$
and $\searrow$ as constructors of the datatype. We define the path
datatype recursively as

\[
\mathcal{P}\left(\mathcal{A}\right)=S\left(\mathcal{A}\right)\mid\mathcal{P}\left(\mathcal{A}\right)\swarrow\mathcal{P}\left(\mathcal{A}\right)\mid\mathcal{P}\left(\mathcal{A}\right)\searrow\mathcal{P}\left(\mathcal{A}\right).
\]

In words: a path is either a single node $S\left(\mathcal{A}\right)$,
or two paths connected by $\swarrow$ or $\searrow$.

The next step toward defining downward accumulation requires a function
that generates all possible paths of a tree. For this purpose, \citet{gibbons1996computing}
introduced a definition of \emph{paths }over a binary tree, which
replaces \emph{each} node with the path from the root to that node.
However, the accumulation required for the decision tree problem differs
from the classical formulation. In \citet{gibbons1996computing}'s
downward accumulation algorithm, information is propagated to every
node, treating both branch and leaf nodes uniformly. By contrast,
the decision tree problem requires passing information only to the
leaf nodes, leaving the branch nodes unchanged. We generalize the
$\mathit{paths}$ generator proposed by \citet{gibbons1996computing}
for the binary tree datatype to the decision tree datatype using the
following definition.
\begin{definition}
	The function $\mathit{paths}$ for decision tree datatype is defined
	by 
	\[
	\begin{aligned}\mathit{paths} & :\left(\mathcal{B}\to\mathcal{A}\right)\times\left(\mathcal{B}\to\mathcal{A}\right)\times\mathit{DTree}\left(\mathcal{B},\mathcal{A}\right)\to\mathit{DTree}\left(\mathcal{B},\mathcal{P}\left(\mathcal{A}\right)\right)\\
		\mathit{paths} & \left(f,g,\mathit{DL}\left(\mathit{xs}\right)\right)=\mathit{DL}\left(S\left(\mathit{xs}\right)\right)\\
		\mathit{paths} & \left(f,g,\mathit{DN}\left(u,r,v\right)\right)=\mathit{DN}\left(\mathit{mapD}{}_{\swarrow_{S\left(f\left(r\right)\right)}}\left(\mathit{paths}\left(f,g,u\right)\right),S\left(r\right),\mathit{mapD}{}_{\searrow_{S\left(g\left(r\right)\right)}}\left(\mathit{paths}\left(f,g,v\right)\right)\right),
	\end{aligned}
	\]
	where $\mathcal{P}$ receives only one type $\mathcal{A}$, two functions
	$f$ and $g$ are used to transform $r:\mathcal{B}$ into type $\mathcal{A}$
	, while also distinguishing between ``left turn'' ($\swarrow$) and
	``right turn'' ($\searrow$).
\end{definition}
To see how this works, consider the decision tree $T$ given below,
where for simplicity, the singleton path constructor $S\left(\;\right)$
is left implicit, and we denote the leaf value using symbol $\diamond:S\left(\mathcal{A}\right)$:

\begin{center}
	\begin{tikzpicture}[edge from parent/.style={draw,-latex}]
		\node[circle, draw] {$r_1$} 
		[sibling distance=3cm, level distance=1.2cm] 
		child {node[circle, draw] {$r2$}[sibling distance=1.2cm]
			child { node[circle, draw]  {$\diamond$} }
			child { node[circle, draw]  {$\diamond$} }}
		child {  node[circle, draw]  {$r_3$} [sibling distance=2cm] 
			child { node[circle, draw]  {$r_4$}[sibling distance=1.2cm] 
				child { node[circle, draw]  {$\diamond$} }
				child { node[circle, draw]  {$\diamond$} }}
			child { node[circle, draw]  {$r_5$}[sibling distance=1.2cm]
				child { node[circle, draw]  {$\diamond$} }
				child { node[circle, draw]  {$\diamond$} }}};
	\end{tikzpicture}
\end{center}

Running $paths$ on decision tree $T$, we obtain

\begin{center}
	\begin{tikzpicture}[edge from parent/.style={draw,-latex}]
		\node[circle, draw] {$r_1$} 
		[sibling distance=6cm, level distance=1cm] 
		child {node[circle, draw] {$r2$}[sibling distance=2.5cm]
			child { node (special2) {$r_1 \swarrow r_2 \swarrow \diamond$} }
			child { node (special3) {$r_1 \swarrow r_2 \searrow \diamond$} }}
		child {  node[circle, draw]  {$r_3$} [sibling distance=6.2cm] 
			child { node[circle, draw]  {$r_4$}[sibling distance=3.1cm] 
				child { node (special) {$r_1 \searrow r_3 \swarrow r_4 \swarrow \diamond$} }
				child { node (special4) {$r_1 \searrow r_3 \swarrow r_4 \searrow \diamond$} }}
			child { node[circle, draw]  {$r_5$}[sibling distance=3.1cm]
				child { node(special5)  {$r_1 \searrow r_3 \searrow r_5 \swarrow \diamond$} }
				child { node(special6)  {$r_1 \searrow r_3 \searrow r_5 \searrow \diamond$} }}};
		
		\begin{scope}[on background layer]
			\draw (special) ellipse [x radius=1.5cm, y radius=0.30cm];
			\draw (special2) ellipse [x radius=1.2cm, y radius=0.30cm];
			\draw (special3) ellipse [x radius=1.2cm, y radius=0.30cm];
			\draw (special4) ellipse [x radius=1.5cm, y radius=0.30cm];
			\draw (special5) ellipse [x radius=1.5cm, y radius=0.30cm];
			\draw (special6) ellipse [x radius=1.5cm, y radius=0.30cm];
		\end{scope}
	\end{tikzpicture}
\end{center}Compared with Gibbons' definition, here, only the leaf nodes are replaced
with the path from the root to the ancestors, while the structure
and branch nodes of the tree remain unchanged. This behavior is determined
by the decision tree definition: information is accumulated along
paths to the leaves (Axiom 2) without altering the splitting rules
at the branch nodes.

The $\mathit{paths}$ function constructs only the paths from the
root to each leaf. To perform operations during path construction,
we require a crucial function associated with the path datatype, called
\emph{path} \emph{reduction} (also known as \emph{path homomorphism}),
which reduces a path to a single value. \citet{gibbons1996computing}
defines the path reduction operator as follows.
\begin{definition}
	The function $\mathit{pathRed}$ is defined by 
	\begin{align*}
		\mathit{pathRed} & ::\left(\mathcal{A}\to\mathcal{B}\right)\times\left(\mathcal{A}\to\mathcal{B}\to\mathcal{B}\right)\times\left(\mathcal{A}\to\mathcal{B}\to\mathcal{B}\right)\times\mathcal{P}\left(\mathcal{A}\right)\to\mathcal{B}\\
		\mathit{pathRed} & \left(f,\oplus,\otimes,S\left(\mathit{xs}\right)\right)=f\left(\mathit{xs}\right)\\
		\mathit{pathRed} & \left(f,\oplus,\otimes,p\swarrow q\right)=\mathit{pathRed}\left(\oplus^{\mathit{pathRed}\left(f,\oplus,\otimes,q\right)},\oplus,\otimes,p\right)\\
		\mathit{pathRed} & \left(f,\oplus,\otimes,p\searrow q\right)=\mathit{pathRed}\left(\otimes^{\mathit{pathRed}\left(f,\oplus,\otimes,q\right)},\oplus,\otimes,p\right).
	\end{align*}
\end{definition}
From the type of $\mathit{pathRed}$, it is straightforward to consider
applying $\mathit{pathRed}$ to all paths generated by $\mathit{paths}$
via the following composition.
\begin{equation}
	\mathit{dacc}_{h,f,g,\oplus,\otimes}=\mathit{mapD}{}_{\mathit{pathRed}{}_{h,\oplus,\otimes}}\circ\mathit{paths}_{f,g},\label{eq: downward accumulation specification}
\end{equation}
where $\mathit{dacc}$ stands for ``downward accumulation.'' Indeed,
(\ref{eq: downward accumulation specification}) is the definition
of downward accumulation, which is can be seen as a \emph{programmatic
	definition for Axiom 2}. In other words, applying $\mathit{dacc}_{h,f,g,\oplus,\otimes}$
to the example tree above yields:

\begin{center}
	\begin{tikzpicture}[edge from parent/.style={draw,-latex}]
		\node[circle, draw] {$r_1$} 
		[sibling distance=6cm, level distance=1cm] 
		child {node[circle, draw] {$r2$}[sibling distance=2.5cm]
			child { node (special2) {$r_1 \oplus r_2 \oplus \diamond$} }
			child { node (special3) {$r_1 \oplus r_2 \otimes \diamond$} }}
		child {  node[circle, draw]  {$r_3$} [sibling distance=6.2cm] 
			child { node[circle, draw]  {$r_4$}[sibling distance=3.1cm] 
				child { node (special) {$r_1 \otimes r_3 \oplus r_4 \oplus \diamond$} }
				child { node (special4) {$r_1 \otimes r_3 \oplus r_4 \otimes \diamond$} }}
			child { node[circle, draw]  {$r_5$}[sibling distance=3.1cm]
				child { node(special5)  {$r_1 \otimes r_3 \otimes r_5 \oplus \diamond$} }
				child { node(special6)  {$r_1 \otimes r_3 \otimes r_5 \otimes \diamond$} }}};
		
		\begin{scope}[on background layer]
			\draw (special) ellipse [x radius=1.5cm, y radius=0.30cm];
			\draw (special2) ellipse [x radius=1.2cm, y radius=0.30cm];
			\draw (special3) ellipse [x radius=1.2cm, y radius=0.30cm];
			\draw (special4) ellipse [x radius=1.5cm, y radius=0.30cm];
			\draw (special5) ellipse [x radius=1.5cm, y radius=0.30cm];
			\draw (special6) ellipse [x radius=1.5cm, y radius=0.30cm];
		\end{scope}
	\end{tikzpicture}
\end{center}The function $\mathit{dacc}_{h,f,g,\oplus,\otimes}$ is applied to
a proper decision tree $\mathit{DTree}\left(\mathcal{B},\mathcal{A}\right)$,
when a path turns left, it applies the $\oplus$ operator and accumulates
the results, and when it turns right, it applies the $\otimes$ operator
and accumulates the results.

However, the $\mathit{dacc}$ function defined in (\ref{eq: downward accumulation specification})
is inefficient. Assuming $\oplus$, $\otimes$, $f$, and $g$ each
have $O\left(1\right)$ complexity, $\mathit{dacc}$ has $O\left(N^{2}\right)$
complexity for a decision tree of size $N$. Since the information
computed at a node $r_{i}$ can be stored and reused by both its left
and right descendants, it is possible to ``reuse'' the value from
a parent when computing values for its children. This observation
reduces the complexity from $O\left(N^{2}\right)$ to $O\left(N\right)$.

In particular, we have the following theorem.
\begin{theorem}
	Homomorphic downward accumulation. \emph{Every downward accumulation
		defined by (\ref{eq: downward accumulation specification}) has the
		following property}
	
	\begin{align}
		\mathit{mapD}{}_{\mathit{pathRed}{}_{h,\oplus,\otimes}} & \left(\mathit{paths}_{f,g}\left(\mathit{DN}\left(u,r,v\right)\right)\right)=\nonumber \\
		& \mathit{DN}\bigg(\mathit{mapD}{}_{\oplus_{f\left(r\right)}}\left(\mathit{mapD}{}_{\mathit{pathRed}{}_{h,\oplus,\otimes}}\left(\mathit{paths}_{f,g}\left(u\right)\right)\right),r\label{eq:  Homomorphic downward accumulation}\\
		& \quad\quad\quad\mathit{mapD}{}_{\otimes_{g\left(r\right)}}\left(\mathit{mapD}{}_{\mathit{pathRed}{}_{h,\oplus,\otimes}}\left(\mathit{paths}_{f,g}\left(v\right)\right)\right)\bigg),\nonumber 
	\end{align}
	\emph{where $\mathit{dacc}_{h,f,g,\oplus,\otimes}\left(\mathit{DN}\left(u,r,v\right)\right)=\mathit{mapD}{}_{\mathit{pathRed}{}_{h,\oplus,\otimes}}\left(\mathit{paths}_{f,g}\left(\mathit{DN}\left(u,r,v\right)\right)\right)$.\label{thm: downward accumulation}}
\end{theorem}
\begin{proof}
	We can prove (\ref{eq:  Homomorphic downward accumulation}) by following
	equational reasoning
	\[
	\begin{aligned} & \mathit{mapD}{}_{\mathit{pathRed}{}_{h,\oplus,\otimes}}\left(\mathit{paths}_{f,g}\left(\mathit{DN}\left(u,r,v\right)\right)\right)\\
		\equiv & \text{ definition of \ensuremath{\mathit{paths}}}\\
		& \mathit{mapD}{}_{\mathit{pathRed}{}_{h,\oplus,\otimes}}\left(\mathit{DN}\left(\mathit{mapD}_{\swarrow_{S\left(f\left(r\right)\right)}}\left(\mathit{paths}_{f,g}\left(u\right)\right),r,\mathit{mapD}_{\searrow_{S\left(g\left(r\right)\right)}}\left(\mathit{paths}_{f,g}\left(v\right)\right)\right)\right)\\
		\equiv & \text{ definition of \ensuremath{\mathit{mapD}}}\\
		& \mathit{DN}\left(\mathit{mapD}{}_{\mathit{pathRed}{}_{h,\oplus,\otimes}}\left(\mathit{mapD}_{\swarrow_{S\left(f\left(r\right)\right)}}\left(\mathit{paths}_{f,g}\left(u\right)\right)\right),r,\mathit{mapD}{}_{\mathit{pathRed}{}_{h,\oplus,\otimes}}\left(\mathit{mapD}_{\searrow_{S\left(g\left(r\right)\right)}}\left(\mathit{paths}_{f,g}\left(v\right)\right)\right)\right)\\
		\equiv & \text{ map composition}\\
		& \mathit{DN}\left(\mathit{mapD}_{\mathit{pathRed}{}_{h,\oplus,\otimes}\circ\swarrow_{S\left(f\left(r\right)\right)}}\left(\mathit{paths}_{f,g}\left(u\right)\right)\right),r,\mathit{mapD}{}_{\mathit{pathRed}{}_{h,\oplus,\otimes}}\left(\mathit{mapD}_{\mathit{pathRed}{}_{h,\oplus,\otimes}\circ\searrow_{S\left(g\left(r\right)\right)}}\left(\mathit{paths}_{f,g}\left(v\right)\right)\right)\\
		\equiv & \text{ definition of \ensuremath{\mathit{pathRed}}}\\
		& \mathit{DN}\left(\mathit{mapD}{}_{\oplus_{f\left(r\right)}\circ\mathit{pathRed}{}_{h,\oplus,\otimes}}\left(\mathit{paths}_{f,g}\left(u\right)\right),r,\mathit{mapD}{}_{\mathit{\otimes_{g\left(r\right)}\circ pathRed}{}_{h,\oplus,\otimes}}\left(\mathit{paths}_{f,g}\left(v\right)\right)\right)\\
		\equiv & \text{ map composition}\\
		& \mathit{DN}\left(\mathit{mapD}_{\oplus_{f\left(r\right)}}\left(\mathit{mapD}{}_{\mathit{pathRed}{}_{h,\oplus,\otimes}}\left(\mathit{paths}_{f,g}\left(u\right)\right)\right),r,\mathit{mapD}_{\otimes_{g\left(r\right)}}\left(\mathit{mapD}{}_{\mathit{pathRed}{}_{h,\oplus,\otimes}}\left(\mathit{paths}_{f,g}\left(v\right)\right)\right)\right),
	\end{aligned}
	\]
\end{proof}
This homomorphic downward accumulation can be computed in parallel
functional time proportional to the product of the depth of the tree
and the time taken by the individual operations \citep{gibbons1996computing},
and thus is amenable to fusion.

\subsection{The complete proper decision tree generator\label{subsec: complete decision tree generator}}

In Subsection \ref{subsec: partial decision tree generator}, we described
the construction of a decision tree generator based on the binary
tree data type. However, the $\mathit{\mathit{genBTs}}$ function
generates only the \emph{structure} of the decision tree, which contains
information solely about the branch nodes. While this structure is
sufficient for evaluating the tree, constructing a \emph{complete}
\emph{decision} \emph{tree}---one that incorporates both branch nodes
and leaf nodes---is essential for improving the algorithm's efficiency.

Before we moving towards deriving a complete decision tree generator,
we need to generalize $\mathit{genBTs}$ to define it over the $\mathit{DTree}$
datatype

\begin{align*}
	\mathit{genBTs}^{\prime} & :\left[\mathcal{R}\right]\times\mathcal{D}\to\left[\mathit{DTree}\left(\mathcal{R},\mathcal{D}\right)\right]\\
	\mathit{genBTs}^{\prime} & \left(\left[\;\right],\mathit{xs}\right)=\left[\mathit{DL}\left(\mathit{xs}\right)\right]\\
	\mathit{genBTs}^{\prime} & \left(\left[r\right],\mathit{xs}\right)=\left[\mathit{DN}\left(\mathit{DL}\left(\mathit{xs}\right),r,\mathit{DL}\left(\mathit{xs}\right)\right)\right]\\
	\mathit{genBTs}^{\prime} & \left(\mathit{rs},\mathit{xs}\right)=\left[\mathit{DN}\left(u,r_{i},v\right)\mid\left(\mathit{rs}{}^{+},r_{i},\mathit{rs}{}^{-}\right)\leftarrow\mathit{splits}\left(\mathit{rs}\right),u\leftarrow\mathit{genBTs}^{\prime}\left(\mathit{rs}{}^{+},\mathit{xs}\right),v\leftarrow\mathit{genBTs}^{\prime}\left(\mathit{rs}{}^{-},\mathit{xs}\right)\right].
\end{align*}

The difference between complete and partial decision tree generators
lies in the fact that the complete one contains accumulated information
in leaves and the partial one has no information, just the leaf label
$L$. This reasoning allows us to define the complete decision tree
generator $\mathit{genDTs}$ based on the partial $\mathit{genBTs}^{\prime}$
as follows.
\begin{definition}
	The complete proper decision tree generator is defined as
	
	\begin{equation}
		\mathit{genDTs}\left(\mathit{rs},\mathit{xs}\right)=\mathit{mapL}_{\mathit{dacc}_{\mathit{id},\mathit{sl},\mathit{sr},\cap,\cap}}\left(\mathit{genBTs}^{\prime}\left(\mathit{rs},\mathit{xs}\right)\right),\label{eq: specification of the complete decision tree generator}
	\end{equation}
	where $\mathit{sl}$ and $\mathit{sr}$ are short for ``split left'',
	and ``split right'', respectively, defined as $\mathit{sl}\left(r\right)=r^{+}$
	and $\mathit{sr}\left(r\right)=r^{-}$.
\end{definition}
	By composing $\mathit{mapL}_{\mathit{dacc}_{\mathit{id},\mathit{sl},\mathit{sr},\cap,\cap}}$
	with $\mathit{genBTs}^{\prime}\left(\mathit{rs},\mathit{xs}\right)$,
	we effectively apply the downward accumulation $\mathit{dacc}_{\mathit{id},\mathit{sl},\mathit{sr},\cap,\cap}$
	to every binary tree generated by $\mathit{genBTs}^{\prime}\left(\mathit{rs},\mathit{xs}\right)$,
	accumulating information from root to leaves for each tree and incorporating
	Axiom 2 into the $\mathit{genDTs}$ function. As a result, $\mathit{genDTs}\left(\mathit{rs},\mathit{xs}\right)$
	now satisfies all desired properties of proper decision tree axioms.
	However, applying $\mathit{mapL}_{\mathit{dacc}_{\mathit{id},\mathit{sl},\mathit{sr},\cap,\cap}}$
	post-hoc to each tree is in efficient.
	
	Fortunately, we can show that $\mathit{mapL}_{\mathit{dacc}_{\mathit{id},\mathit{sl},\mathit{sr},\cap,\cap}}$
	can be fused into $\mathit{genBTs}^{\prime}$, thereby yielding an
	more efficient definition of $\mathit{genDTs}$. The following theorem
	explains this fusion.

\begin{theorem}
	\emph{Complete proper decision tree generator. The $\mathit{genDTs}\left(\mathit{rs},\mathit{xs}\right)$
		defined by
		\begin{equation}
			\begin{aligned}\mathit{genDTs} & :\left[\mathcal{R}\right]\times\mathcal{D}\to\left[\mathit{DTree}\left(\mathcal{R},\mathcal{D}\right)\right]\\
				\mathit{genDTs} & \left(\left[\;\right],\mathit{xs}\right)=\left[\mathit{DL}\left(\mathit{xs}\right)\right]\\
				\mathit{genDTs} & \left(\left[r\right],\mathit{xs}\right)=\left[DN\left(\mathit{DL}\left(r^{+}\cap\mathit{xs}\right),r,\mathit{DL}\left(r^{-}\cap\mathit{xs}\right)\right)\right]\\
				\mathit{genDTs} & \left(\mathit{rs},\mathit{xs}\right)=\bigg[\mathit{DN}\left(\mathit{mapD}{}_{\cap_{r_{i}^{+}}}\left(u\right),r_{i},\mathit{mapD}{}_{\cap_{r_{i}^{-}}}\left(v\right)\right)\mid\\
				& \begin{aligned} & \quad\quad\quad\left(\mathit{rs}^{+},r_{i},\mathit{rs}^{-}\right)\leftarrow\mathit{splits}\left(\mathit{rs}\right),u\leftarrow\mathit{genDTs}\left(\mathit{rs}^{+},\mathit{xs}\right),v\leftarrow\mathit{genDTs}\left(\mathit{rs}^{-},\mathit{xs}\right)\bigg].\end{aligned}
			\end{aligned}
		\end{equation}
		is equivalent to (\ref{eq: specification of the complete decision tree generator}).
		\label{thm: comlete DT generator}}
\end{theorem}
\begin{proof}
	The singleton and empty cases are easy to prove, so we only prove
	the singleton case here, the empty case is omitted for reasons of
	space:
	\[
	\begin{aligned} & \mathit{mapL}_{\mathit{dacc}_{\mathit{id},\mathit{sl},\mathit{sr},\cap,\cap}}\left(\mathit{genBTs}^{\prime}\left(\mathit{rs},\mathit{xs}\right)\right)\\
		\equiv & \text{ definition of \ensuremath{\mathit{genBTs}^{\prime}}}\\
		& \mathit{mapL}_{\mathit{dacc}_{\mathit{id},\mathit{sl},\mathit{sr},\cap,\cap}}\left[\mathit{DN}\left(\mathit{DL}\left(\mathit{xs}\right),r,\mathit{DL}\left(\mathit{xs}\right)\right)\right]\\
		\equiv & \text{ definition of }\mathit{mapL}\\
		& \left[\mathit{dacc}_{\mathit{id},\mathit{sl},\mathit{sr},\cap,\cap}\left(\mathit{DN}\left(\mathit{DL}\left(\mathit{xs}\right),r,\mathit{DL}\left(\mathit{xs}\right)\right)\right)\right]\\
		\equiv & \text{ definition of }\text{\ensuremath{\mathit{dacc}_{\mathit{id},\mathit{sl},\mathit{sr},\cap,\cap}}}\\
		& \left[\mathit{DN}\left(\mathit{DL}\left(r^{+}\cap\mathit{xs}\right),r,\mathit{DL}\left(r^{-}\cap\mathit{xs}\right)\right)\right].
	\end{aligned}
	\]
	
	The recursive case is proved by the following equational reasoning.
	For simplicity, let $f$ denote $\mathit{pathRed}_{id,\cap,\cap}$
	and $g$ denote $\mathit{paths}_{\mathit{sl},\mathit{sr}}$. Thus,
	we have $\mathit{mapD}{}_{\mathit{pathRed}_{id,\cap,\cap}}\circ\mathit{paths}_{\mathit{sl},\mathit{sr}}=\left(\mathit{mapD}{}_{f}\right)\circ g=\mathit{mapD}{}_{f}\circ g$
	
	\[
	\begin{aligned} & \mathit{mapL}_{\mathit{dacc}_{\mathit{id},\mathit{sl},\mathit{sr},\cap,\cap}}\left(\mathit{genBTs}^{\prime}\left(\mathit{rs},\mathit{xs}\right)\right)\\
		\equiv & \text{ definition of \ensuremath{dacc}}\\
		& \mathit{mapL}{}_{\mathit{mapD}_{f}\circ g}\left(\mathit{genBTs}^{\prime}\left(\mathit{rs},\mathit{xs}\right)\right)\\
		\equiv & \text{ definition of \ensuremath{\mathit{genBTs}^{\prime}}}\\
		& \begin{aligned} & \mathit{mapL}{}_{\mathit{mapD}{}_{f}\circ g}\bigg[\mathit{DN}\left(u,r_{i},v\right)\mid\\
			& \quad\quad\left(\mathit{rs}^{+},r_{i},\mathit{rs}^{-}\right)\leftarrow\mathit{splits}\left(rs\right),u\leftarrow\mathit{genBTs}^{\prime}\left(\mathit{rs}^{+},\mathit{xs}\right),v\leftarrow\mathit{genBTs}^{\prime}\left(\mathit{rs}^{-},xs\right)\bigg],
		\end{aligned}
		\\
		\equiv & \text{ definition of }\mathit{mapL}\\
		& \begin{aligned} & \bigg[\mathit{mapD}_{f}\left(g\left(\mathit{DN}\left(u,r_{i},v\right)\right)\right)\mid\\
			& \quad\quad\left(\mathit{rs}^{+},r_{i},\mathit{rs}^{-}\right)\leftarrow\mathit{splits}\left(rs\right),u\leftarrow\mathit{genBTs}^{\prime}\left(\mathit{rs}^{+},\mathit{xs}\right),v\leftarrow\mathit{genBTs}^{\prime}\left(\mathit{rs}^{-},xs\right)\bigg],
		\end{aligned}
		\\
		\equiv & \text{ Theorem (\ref{thm: downward accumulation}), definition of \ensuremath{\mathit{sl}\left(r\right)} and \ensuremath{\mathit{sr}\left(r\right)}}\\
		& \begin{aligned} & \bigg[\mathit{DN}\left(\mathit{mapD}{}_{\cap_{r^{+}}\circ f}\left(g\left(u\right)\right),r_{i},\mathit{mapD}{}_{\cap_{r^{-}}\circ f}\left(g\left(v\right)\right)\right)\mid\\
			& \quad\quad\left(\mathit{rs}^{+},r_{i},\mathit{rs}^{-}\right)\leftarrow\mathit{splits}\left(rs\right),u\leftarrow\mathit{genBTs}^{\prime}\left(\mathit{rs}^{+},\mathit{xs}\right),v\leftarrow\mathit{genBTs}^{\prime}\left(\mathit{rs}^{-},xs\right)\bigg],
		\end{aligned}
		\\
		\equiv & \text{ map composition}\\
		& \begin{aligned} & \bigg[\mathit{DN}\left(\mathit{mapD}{}_{\cap_{r^{+}}}\left(\mathit{mapD}_{f}\left(g\left(u\right)\right)\right),r_{i},\mathit{mapD}{}_{\cap_{r^{-}}}\left(\mathit{mapD}_{f}\left(g\left(v\right)\right)\right)\right)\mid\\
			& \quad\quad\left(\mathit{rs}^{+},r_{i},\mathit{rs}^{-}\right)\leftarrow\mathit{splits}\left(rs\right),u\leftarrow\mathit{genBTs}^{\prime}\left(\mathit{rs}^{+},\mathit{xs}\right),v\leftarrow\mathit{genBTs}^{\prime}\left(\mathit{rs}^{-},xs\right)\bigg],
		\end{aligned}
		\\
		\equiv & \text{ definition of list comprehension}\\
		& \begin{aligned} & \bigg[\mathit{DN}\left(\mathit{mapD}{}_{\cap_{r^{+}}}\left(u\right),r_{i},\mathit{mapD}{}_{\cap_{r^{-}}}\left(v\right)\right)\mid\\
			& \quad\quad\left(\mathit{rs}^{+},r_{i},\mathit{rs}^{-}\right)\leftarrow\mathit{splits}\left(rs\right),u\leftarrow\mathit{mapL}_{\mathit{mapD}{}_{f}\circ g}\left(\mathit{genBTs}^{\prime}\left(\mathit{rs}^{+},\mathit{xs}\right)\right)\\
			& \quad\quad\quad\quad\quad\quad\quad\quad\quad\quad\quad\quad\quad\quad v\leftarrow\mathit{mapL}_{\mathit{mapD}{}_{f}\circ g}\left(\mathit{genBTs}^{\prime}\left(\mathit{rs}^{-},\mathit{xs}\right)\right)\bigg],
		\end{aligned}
		\\
		\equiv & \text{ definition of \ensuremath{\mathit{genDTs}}}\\
		& \begin{aligned} & \bigg[\mathit{DN}\left(\mathit{mapD}{}_{\cap_{r^{+}}}\left(u\right),r_{i},\mathit{mapD}{}_{\cap_{r^{-}}}\left(v\right)\right)\mid\\
			& \quad\quad\left(\mathit{rs}^{+},r_{i},\mathit{rs}^{-}\right)\leftarrow\mathit{splits}\left(rs\right),u\leftarrow\mathit{genDTs}\left(\mathit{rs}^{+},\mathit{xs}\right),v\leftarrow\mathit{genDTs}\left(\mathit{rs}^{-},\mathit{xs}\right)\bigg].
		\end{aligned}
	\end{aligned}
	\]
	Although the notation above may appear dense, the only crucial principle
	is the property of homomorphic downward accumulation (Theorem \ref{thm: downward accumulation}).
	The rest follows directly from the definitions, such as the definition
	of the map function $\left[f\left(x\right)\mid x\leftarrow\mathit{xs}\right]=\left[x\mid x\leftarrow\mathit{mapL}\left(f,\mathit{xs}\right)\right]$,
	map composition $\mathit{mapD}_{f\circ g}=\mathit{mapD}_{f}\circ\mathit{mapD}_{g}$
	(which can be proved by simply induction), and the definition of $\mathit{genDTs}\left(\mathit{rs},\mathit{xs}\right)=\mathit{mapL}_{\mathit{dacc}_{\mathit{id},\mathit{sl},\mathit{sr},\cap,\cap}}\left(\mathit{genBTs}^{\prime}\left(\mathit{rs},\mathit{xs}\right)\right)$.
	To follow the proof, we recommend first understanding how the symbols
	are manipulated using these definitions and properties, and then focusing
	on the meaning of each step.
\end{proof}
Running algorithm $\mathit{genDTs}\left(\mathit{rs},\mathit{xs}\right)$
will generate all proper decision trees satisfying the proper decision
tree axioms with respect to a list of rules $\mathit{rs}$. As a results,
substituting $\mathit{genDTs}$ in (\ref{eq: new-specification of genDTKs})
yields a programmatic definition for the search space of the decision
tree $\mathcal{S}$.

\paragraph{Implications of the complete decision tree generator}

The proof of Theorem \ref{thm: comlete DT generator} relies only
on the \emph{structural information} already established in $\ensuremath{\mathit{genBTs}}$,
which uses Axiom 1 (defining $\ensuremath{\mathit{sl}\left(r\right)}$
and $\mathit{sr}\left(r\right)$), and the transitivity property in
Axiom 3. Together, these imply the use of downward accumulation. Since
transitivity and binary splits ensure that information at a leaf is
the accumulated result of the branch nodes along its path, this accumulation
can be computed by path reduction. Specifically, given a path from
the root to a leaf $L$, $\mathcal{P}_{L}=\left\{ r_{i},r_{j},\ldots,r_{p}\right\} $,
the path reduction at this leaf is defined by the map composition
$\mathit{mapD}_{\cap_{r_{i}^{\pm}}\circ\cap_{r_{j}^{\pm}}\ldots\cap_{r_{p}^{\pm}}}=\mathit{mapD}_{\cap_{r_{i}^{\pm}}}\circ\mathit{mapD}_{\cap_{r_{i}^{\pm}}}\ldots\circ\mathit{mapD}_{\cap_{r_{p}^{\pm}}}$.
The generator $\mathit{genDTs}$ shows how this can be calculated
systematically and efficiently for all leaves in a tree.

By contrast, the definition of the $\mathit{splits}$ function (determined
by ancestral constraints, i.e., Axiom 4) is unrelated to the proof
of Theorem \ref{thm: comlete DT generator}. This has an important
consequence: for any decision tree satisfying structural constraints
(Axioms 1--3) , an alternative ancestral constraint (denoted Axiom
4') may replace our Axiom 4 in defining the split function. In such
cases, the search space can be explored by first defining a partial
decision tree generator satisfying Axioms 1, 3, and 4', and then deriving
a complete generator.

This approach allows us to reason about \emph{any} decision tree problem
even in the absence of Axiom 2, since it can be derived indirectly
from other structural information. For instance, in Section \ref{sec:Extension-to-non-proper},
we demonstrate how to analyze non-proper decision tree problems in
which ancestry constraints is not given---corresponds to ordinary
binary labeled trees.

\subsection{A generic dynamic programming algorithm for the proper optimal decision
	tree problem\label{subsec: DP algorithm}}

\subsubsection{A dynamic programming recursion}

The key fusion step in designing a DP algorithm is to combine the
$\mathit{min}_{E}$ function with the generator, thereby preventing
the generation of partial configurations that cannot be extended to
optimal solutions, i.e. optimal solutions to problems can be expressed
purely in terms of optimal solutions to subproblems. This is also
known as the \emph{principle of optimality}, originally investigated
by \citep{bellman1954theory}. Since 1967 \citep{karp1967finite},
extensive study \citep{karp1967finite,ibaraki1977power,bird1996algebra,bird2020algorithm}
shows that the essence of the principle of optimality is \emph{monotonicity}.
In this section, we will explain the role of monotonicity in the decision
tree problem and demonstrate how it leads to the derivation of the
dynamic programming algorithm.

We have previously specified the simplified decision tree problem
in (\ref{sodt --specification}). However this specification concerns
decision tree problems in general, which may not involve any input
data. Since here we aim to derive a dynamic programming algorithm
for solving an optimization problem, we now redefine the $\mathit{sodt}:\left[\mathcal{R}\right]\to\mathcal{D}\to\mathit{DTree}\left(\mathcal{R},\mathcal{D}\right)$
problem by parameterizing it with an input sequence $\mathit{xs}$
(we can use the same reasoning to derive a parameterized $\mathit{sodt}$
from a parameterized $\mathit{odt}_{K}$ specification).
\begin{definition}
	Simplified optimal decision tree problem over decision tree datatype.
\end{definition}
\begin{equation}
	\mathit{sodt}_{rs}=\mathit{\mathit{min}{}_{E}}\circ\mathit{genDTs}_{rs},\label{eq: specification of sodt}
\end{equation}
where $\mathit{min}_{E}:\left[\mathit{DTree}\left(\mathcal{R},\mathcal{D}\right)\right]\to\mathit{DTree}\left(\mathcal{R},\mathcal{D}\right)$
selects the optimal tree returned by $\mathit{genDTs}_{rs}:\mathcal{D}\to\left[\mathit{DTree}\left(\mathcal{R},\mathcal{D}\right)\right]$,
with respect to the objective value calculated by $E$.

The objective function for any decision tree problem conforms to the
following general scheme.
\begin{definition}
	The objective function $E$ is defined as
	\begin{equation}
		\begin{aligned}E & :\mathit{DTree}\left(\mathcal{R},\mathcal{D}\right)\to\mathbb{R}\\
			E & \left(f,g,\mathit{DL}\left(\mathit{xs}\right)\right)=f\left(\mathit{xs}\right)\\
			E & \left(f,g,\mathit{DN}\left(u,r,v\right)\right)=g\left(E\left(\mathit{mapD}{}_{\cap_{r_{i}^{+}}}\left(u\right)\right),E\left(\mathit{mapD}{}_{\cap_{r_{i}^{+}}}\left(v\right)\right)\right).
		\end{aligned}
		\label{eq: objective function}
	\end{equation}
	such that $g\left(a,b\right)\geq\max\left(a,b\right)$, this requirement
	ensures that the objective value of a larger tree is at least as great
	as that of a smaller tree.
\end{definition}
For example, consider the decision tree model for the \emph{classification
	problem}\footnote{Classification is the activity of assigning objects to some pre-existing
	classes or categories. Algorithms for classification problems have
	output restricted to a finite set of values (usually a finite set
	of integers, called labels).}. Like all classification problems, the ultimate goal is to find an
appropriate decision tree that minimizes the number of misclassifications
\citep{breiman1984classification}. Assume each data point is assigned
a label $y_{i}\in\left\{ 1,2,\ldots,K\right\} =\mathcal{K}$, i.e.
$xs=\left[\left(x_{1},y_{1}\right),\left(x_{2},y_{2}\right),\ldots,\left(x_{n},y_{n}\right)\right]$.
Given a tree $\mathit{DN}\left(u,r,v\right)$, we can define this
objective as
\begin{equation}
	\begin{aligned}E^{\prime} & :\mathit{DTree}\left(\mathcal{R},\mathcal{D}\right)\to\mathbb{R}\\
		E^{\prime} & \left(\mathit{DL}\left(\mathit{xs}\right)\right)=\sum_{\left(x_{i},y_{i}\right)\in\mathit{xs}}\mathbf{1}\left[\hat{y}\neq y_{i}\right]\\
		E^{\prime} & \left(\mathit{DN}\left(u,r,v\right)\right)=E^{\prime}\left(\mathit{mapD}{}_{\cap_{r_{i}^{+}}}\left(u\right)\right)+E^{\prime}\left(\mathit{mapD}{}_{\cap_{r_{i}^{+}}}\left(v\right)\right),
	\end{aligned}
\end{equation}
where $\hat{y}=\underset{k\in\mathcal{K}}{\text{argmax}}\sum_{\left(x_{i},y_{i}\right)\in\mathit{xs}}\mathbf{1}\left[y_{i}=k\right]$,
which is the majority class in a leaf. This is the most common decision
tree objective function used in machine learning; alternative objective
functions can also be used.

Based on this definition of the objective function, we show that an
efficient dynamic programming algorithm for the \emph{simplified optimal
	decision tree problem} (\ref{eq: specification of sodt}) can be derived
using the following recursive formulation.
\begin{theorem}
	Dynamic programming algorithm for the simplified optimal decision
	tree problem\emph{. Given a list of rules $\mathit{rs}$, and the
		objective function defined in the form of (\ref{eq: objective function}),
		then the solution of}
	
	\begin{equation}
		\begin{aligned}\mathit{sodt} & :\left[\mathcal{R}\right]\times\mathcal{D}\to\mathit{DTree}\left(\mathcal{R},\mathcal{D}\right)\\
			\mathit{sodt} & \left(\left[\;\right],\mathit{xs}\right)=\left[\mathit{DL}\left(\mathit{xs}\right)\right]\\
			\mathit{sodt} & \left(\left[r\right],\mathit{xs}\right)=\left[\mathit{DN}\left(\mathit{DL}\left(r^{+}\cap\mathit{xs}\right),r,\mathit{DL}\left(r^{-}\cap\mathit{xs}\right)\right)\right]\\
			\mathit{sodt} & \left(\mathit{rs},\mathit{xs}\right)=\mathit{min}{}_{E}\left[\mathit{DN}\left(\mathit{sodt}\left(\mathit{rs}^{+},r_{i}^{+}\cap\mathit{xs}\right),r_{i},\mathit{sodt}\left(\mathit{rs}^{-},r_{i}^{-}\cap\mathit{xs}\right)\right)\mid\left(\mathit{rs}^{+},r_{i},\mathit{rs}^{-}\right)\leftarrow\mathit{splits}\left(\mathit{rs}\right)\right].
		\end{aligned}
		\label{eq: DP sodt}
	\end{equation}
	\emph{is also a solution of} \emph{(\ref{eq: specification of sodt}).
		\label{thm: DP-sodt }}
\end{theorem}
\begin{proof}
	Note that, to apply equational reasoning to the optimization problem,
	we need to modify the $\mathit{min}_{E}$ function to make it into
	a \emph{non-deterministic }(\emph{relational}) function $\mathit{minR}_{E}:\left[\mathit{DTree}\left(\mathcal{R},\mathcal{D}\right)\right]\to\mathit{\mathit{DTree}\left(\mathcal{R},\mathcal{D}\right)}$,
	which selects \emph{one} of the optimal solutions out of a list of
	candidates. Redefining this from scratch would be cumbersome; $\mathit{minR}_{E}$
	is simply introduced to extend our powers of specification and will
	not appear in any final algorithm. It is safe to use as long as we
	remember that $\mathit{minR}_{E}$ returns \emph{one possible }optimal
	solution \citep{bird2020algorithm,bird1996algebra}.
	
	The algorithm is proved by the following equational reasoning
	
	\begin{align*}
		& \begin{aligned} & \mathit{minR}{}_{E}\bigg[\mathit{DN}\left(\mathit{mapD}{}_{\cap_{r_{i}^{+}}}\left(u\right),r_{i},\mathit{mapD}{}_{\cap_{r_{i}^{-}}}\left(v\right)\right)\mid\\
			& \quad\quad\left(\mathit{rs}^{+},r_{i},\mathit{rs}^{-}\right)\leftarrow\mathit{splits}\left(\mathit{rs}\right),u\leftarrow\mathit{genDTs}\left(\mathit{rs}^{+},\mathit{xs}\right),v\leftarrow\mathit{genDTs}\left(\mathit{rs}^{-},\mathit{xs}\right)\bigg]
		\end{aligned}
		\\
		\subseteq & \text{ monotonicity: }E\left(\mathit{mapD}{}_{\cap_{r^{+}}}\left(u\right)\right)\leq E\left(\mathit{mapD}{}_{\cap_{r^{+}}}\left(u^{\prime}\right)\right)\wedge E\left(\mathit{mapD}{}_{\cap_{r^{-}}}\left(v\right)\right)\le E\left(\mathit{mapD}{}_{\cap_{r^{-}}}\left(v^{\prime}\right)\right)\\
		& \qquad\qquad\qquad\qquad\qquad\qquad\qquad\qquad\qquad\Longrightarrow E\left(DN\left(u,r,v\right)\right)\leq E\left(DN\left(u^{\prime},r,v^{\prime}\right)\right)\\
		& \mathit{minR}_{E}\bigg[\mathit{DN}\bigg(\mathit{minR}_{E}\left[\mathit{mapD}{}_{\cap_{r_{i}^{+}}}\left(u\right)\mid u\leftarrow\mathit{genDTs}\left(\mathit{rs}^{+},\mathit{xs}\right)\right],r_{i},\\
		& \quad\quad\mathit{minR}_{E}\left[\mathit{mapD}{}_{\cap_{r_{i}^{-}}}\left(u\right)\mid u\leftarrow\mathit{genDTs}\left(\mathit{rs}^{-},\mathit{xs}\right)\right]\bigg)\mid\left(\mathit{rs}^{+},r_{i},\mathit{rs}^{-}\right)\leftarrow\mathit{splits}\left(rs\right)\bigg]\\
		\equiv & \text{ definition of \ensuremath{\mathit{mapL}}}\\
		& \begin{aligned} & \mathit{minR}_{E}\bigg[\mathit{DN}\bigg(\mathit{minR}_{E}\left(mapL_{\mathit{mapD}{}_{\cap_{r_{i}^{+}}}}\left(\mathit{genDTs}\left(\mathit{rs}^{+},\mathit{xs}\right)\right)\right),r_{i},\\
			& \quad\quad\mathit{minR}_{E}\left(mapL_{\mathit{mapD}{}_{\cap_{r_{i}^{-}}}}\left(\mathit{genDTs}\left(\mathit{rs}^{-},\mathit{xs}\right)\right)\right)\bigg)\mid\left(\mathit{rs}^{+},r_{i},\mathit{rs}^{-}\right)\leftarrow\mathit{splits}\left(\mathit{rs}\right)\bigg]
		\end{aligned}
		\\
		\equiv & \text{ definition of \ensuremath{\mathit{mapD}}}\\
		& \begin{aligned} & \mathit{minR}_{E}\bigg[\mathit{DN}\bigg(\mathit{minR}_{E}\left(\mathit{genDTs}\left(\mathit{rs}^{+},r_{i}^{+}\cap\mathit{xs}\right)\right),r_{i},\\
			& \quad\quad\mathit{minR}_{E}\left(\mathit{genDTs}\left(\mathit{rs}^{-},r_{i}^{-}\cap\mathit{xs}\right)\right)\bigg)\mid\left(\mathit{rs}^{+},r_{i},\mathit{rs}^{-}\right)\leftarrow\mathit{splits}\left(\mathit{rs}\right)\bigg]
		\end{aligned}
		\\
		\equiv & \text{ definition of \ensuremath{\mathit{sodt}}}\\
		& \mathit{minR}_{E}\bigg[\mathit{DN}\bigg(sodt\left(\mathit{rs}^{+},r_{i}^{+}\cap xs\right),r_{i},\left(sodt\left(\mathit{rs}^{-},r_{i}^{-}\cap\mathit{xs}\right)\right)\bigg)\mid\left(\mathit{rs}^{+},r_{i},\mathit{rs}^{-}\right)\leftarrow\mathit{splits}\left(\mathit{rs}\right)\bigg].
	\end{align*}
\end{proof}
This DP algorithm (\ref{eq: DP sodt}) recursively constructs the
ODT from optimal subtrees $\mathit{sodt}\left(rs^{+},r_{i}^{+}\cap\mathit{xs}\right)$
and $\mathit{sodt}\left(\mathit{rs}^{-},r_{i}^{-}\cap\mathit{xs}\right)$
with respect to a smaller data set $r_{i}^{+}\cap\mathit{xs}$ and
$r_{i}^{-}\cap\mathit{xs}$ respectively. Algorithm (\ref{eq: DP sodt})
is more efficient than (\ref{eq: specification of sodt}) because
it eliminates all non-optimal partial configurations by applying $\mathit{min}_{E}$
within the definition of $\mathit{sodt}$, rather than after exhaustively
generating all possible decision trees using $\mathit{genDTs}_{rs}$
as in (\ref{eq: specification of sodt}).

Combining Theorems \ref{thm: DP-sodt } and \ref{thm: Simplified-decision-tree problem}
completes the proof for Theorem \ref{thm: sodt-introduction}.

\subsubsection{Applicability of the memoization technique\label{subsec:Applicability-of-the}}

In the computer science community, DP is widely recognized as recursion
with overlapping subproblems, combined with memoization to avoid re-computations
of subproblems. If both conditions are satisfied, we say, a DP solution
exists.

At first glance, the ODT problem involves shared subproblems, suggesting
that a DP solution is possible. However, we will explain in this section
that, despite the existence of these shared subproblems, \emph{memoization}
is impractical for most of optimal decision tree problems.

Below, we analyze why this is the case, using a counterexample where
the memoization technique \textbf{is} \textbf{applicable}---the matrix
chain multiplication problem (MCMP)---and discuss the key differences.

In DP algorithms, a key requirement often overlooked in literature
is that the optimal solution to one subproblem must be \emph{equivalent
}to the optimal solution to another. For example, in the matrix chain
multiplication problem, the goal is to determine the most efficient
way to multiply a sequence of matrices. Consider multiplying four
matrices $A$, $B$, $C$, and $D$. The equality $\left(\left(AB\right)C\right)D=\left(AB\right)\left(CD\right)$
states that two ways of multiplying the matrices will yield the same
result.

Because the computations involved may differ due to varying matrix
sizes, the computation on one side may be more efficient than the
other. Nonetheless, our discussion here is not focused on the computational
complexity of this problem. One of the key components of the DP algorithm
for MCMP is that the computational result $\left(AB\right)$ can be
reused. This is evident as $\left(AB\right)$ appears in both $\left(\left(AB\right)C\right)D$
and $\left(AB\right)\left(CD\right)$. It is therefore possible to
compute the result for the subproblem $\left(AB\right)$ first, and
then directly use it in the subsequent computations of $\left(\left(AB\right)C\right)D$
and $\left(AB\right)\left(CD\right)$, thereby avoiding the recomputation
of $\left(AB\right)$.

However, in the decision tree problem, the monotonicity used in the
derivation of (\ref{eq: DP sodt}),
\begin{align*}
	& \ensuremath{E\left(\mathit{mapD}{}_{\cap_{r^{+}}}\left(u\right)\right)\leq E\left(\mathit{mapD}{}_{\cap_{r^{\prime+}}}\left(u^{\prime}\right)\right)\wedge E\left(\mathit{mapD}{}_{\cap_{r^{-}}}\left(v\right)\right)\leq E\left(\mathit{mapD}{}_{\cap_{r^{\prime-}}}\left(v^{\prime}\right)\right)}\\
	& \Longrightarrow E\left(DN\left(u,r,v\right)\right)\leq E\left(DN\left(u^{\prime},r^{\prime},v^{\prime}\right)\right)
\end{align*}
holds only if the root $r$ in tree $DN\left(u,r,v\right)$ is the
same as the root $DN\left(u^{\prime},r^{\prime},v^{\prime}\right)$.
When applying memoization, many cases arise where the roots of the
subtrees differ, because most ODT instances have a very large number
of candidate roots, and each root produces a different partition of
the dataset. To apply memoization correctly, both the root and the
subtrees must be included as indices to ensure correct reuse.

Therefore, to use the memoization technique, we need to store not
only the optimal solution of a subtree generated by a given set of
rules $rs$, but also the root of each subtree. This requires at least
$O\left(\sum_{k\in\mathcal{K}}\left|\mathcal{S}_{\text{Dtree}}\left(k\right)\right|\times\left|\mathcal{S}_{\mathcal{H}}\right|\right)$
space, where $\left|\mathcal{S}_{\text{Dtree}}\left(k\right)\right|$
and $\left|\mathcal{S}_{\mathcal{H}}\right|$ are the number of possible
decision trees with respect to $k$ splitting rules and the number
of possible roots, respectively, with $\mathcal{K}=\left\{ 1,\ldots,K\right\} $.
Thus, storing all this information during the algorithm's runtime
is impractical in terms of \emph{space} \emph{complexity} for most
decision tree problems considered in ML. For example, a hyperplane
decision tree problem can involve $O\left(N^{D}\right)$ possible
splitting rules and $O\left(N^{DK}\right)$possible subtrees in the
worst case. Storing such a large number of trees is not only impractical,
the likelihood of reusing information by caching a small proportion
of subtrees is extremely low, since both the root and the subtrees
must match exactly.

Consequently, while \citet{demirovic2022murtree} claim a trade-off
between \emph{branch} and \emph{dataset} caching, as introduced by
\citet{nijssen2007mining}, we argue this claim is incorrect because
both the root (which determines the dataset) and the branch (which
determines the subtrees) must be stored properly. Indeed, in their
subsequent study, \citet{brita2025optimal} implement caching by
reusing solutions indexed by both sub-dataset and subtree depth. Since
the problem in \citet{brita2025optimal} has far more candidate splitting
rules than that studied by \citet{demirovic2022murtree}, the caching
approach used in \citet{brita2025optimal}---which stores only a
few thousand optimal subtrees---has a very low probability of cache
hits. Our tests on multiple datasets confirm this: in extreme cases,
their algorithm examined millions of decision trees without a single
cache hit. This suggests that the efficiency attributed to caching
in the ConTree algorithm is overstated; most performance gains likely
arise from factors other than caching.

\subsubsection{Complexity of the optimal decision tree algorithm\label{subsec:Complexity-of-the}}

According to Theorem (\ref{thm: sodt-introduction}), analyzing the
complexity of the $\mathit{odt}_{K}$ algorithm for solving the optimal
size-$K$ decision tree problem requires a detailed examination of
the complexity of its components: $\mathit{min}_{E}$, $\mathit{sodt}$,
and $\mathit{kcombs}_{K}$. The complexities of $\mathit{min}_{E}$
and $\mathit{kcombs}_{K}$ are fixed once the input rule list $\mathit{rs}$
(of size $M\geq K$), representing the set of available rules for
constructing the tree, is given. We now analyze the complexity of
$\mathit{sodt}\left(\mathit{rs}_{K},\mathit{xs}\right)$, where $K$
is a size $\mathit{rs}_{K}$ rule list generated by $\mathit{kcombs}_{K}\left(\mathit{rs}\right)$
and $\mathit{xs}$ is a length $N$ list.

\paragraph{Complexity of $\mathit{sodt}$}

Precisely characterizing the average- or best-case combinatorial complexity
of the decision tree problem is difficult because it depends heavily
on the specific ancestry relation matrix $\boldsymbol{K}$. Unless
assumptions are made about $\boldsymbol{K}$, we can only analyze
the worst-case complexity. This is captured by the following lemma.
\begin{lemma}
	\emph{Given $K$ possible splitting rules $\mathit{rs}_{K}$ that
		satisfy the proper decision tree axioms, the search space $\mathcal{S}\left(K,\mathit{rs}_{K}\right)$
		achieves maximum combinatorial complexity when any rule can serve
		as the root and each branch node has exactly one child. Formally,
		for any $r_{i}\in\mathit{rs}_{K}$ , we have $\boldsymbol{K}_{ij}=1$
		or} $\boldsymbol{K}_{ij}=-1$ for all\emph{ }$r_{j}\in\mathit{rs}_{K}/r_{i}$\emph{,
		and $\sum_{j\in\mathit{rs}}\left|\boldsymbol{K}_{ij}\right|=\left|\mathit{rs}_{K}/r_{i}\right|$,
		which implies $\nexists r_{j}\in\mathit{rs}_{K}/r_{i}:\boldsymbol{K}_{ij}=0$.
		Here $\left|\boldsymbol{K}_{ij}\right|$ denotes the absolute value
		and $\left|\mathit{rs}_{K}/r_{i}\right|$ denotes the size of the
		list. \label{lem: complexity}}
\end{lemma}
\begin{proof}
	Consider the case where for any $r_{i}\in\mathit{rs}$ , we have $\boldsymbol{K}_{ij}=1$
	or $\boldsymbol{K}_{ij}=-1$ for all $h_{j}\in hs$, $i\neq j$, and
	\emph{$\sum_{j\in\mathit{rs}}\left|\boldsymbol{K}_{ij}\right|=\left|\mathit{rs}_{K}/r_{i}\right|$}.
	Under these conditions, each subtree has exactly one child, resulting
	in a tree with a single path (excluding leaf nodes). Since the structure
	is fully determined by the branch nodes, we can disregard the leaf
	nodes. This configuration permits any permutation of branch nodes,
	yielding maximum combinatorial complexity. We demonstrate this by
	proving that separating two rules in different branches will reduce
	the problem's complexity.
	
	For a $K$-permutation $p$, consider first the case of a chain of
	decision rules where each node has exactly one child. Given our assumption
	that any splitting rule can serve as the root, all permutations of
	the decision tree are valid, resulting in $K!$ possible chains. For
	the alternative case, consider a permutation $p=\left[\ldots,r_{j},r_{k},\ldots\right]$
	where rules $r_{j}$ and $r_{k}$ occupy the same level with immediate
	ancestor $r_{i}$. By Theorem \ref{thm: main theorem}, these rules
	must be the left and right children of $r_{i}$, respectively, and
	their positions are immutable. When $r_{i}$ precedes both $r_{k}$
	and $r_{j}$ in the permutation, $r_{j}$ and $r_{k}$ will always
	be separated into different branches. In the worst case, $r_{k}$
	and $r_{j}$ are at the tail of the permutation list, i.e. $p^{\prime}=\left[\ldots,r_{j},r_{k}\right]$.
	Thus, when the permutation $\left[\ldots,r_{k},r_{j}\right]$ is not
	allowed, all permutations where $\left(K-2\right)$ rules precede
	both $r_{k}$ and $r_{j}$ become invalid, eliminating $\left(K-2\right)!$
	possible permutations. As additional pairs of rules become constrained
	to the same level, the number of invalid permutations increases monotonically.
	Therefore, the decision tree attains maximal combinatorial complexity
	when it assumes a ``chain'' structure, where each non-leaf node has
	exactly one child node.
\end{proof}
Lemma \ref{lem: complexity} implies that the search space $\mathcal{S}\left(K,\mathit{rs}_{K}\right)$
has a worse-complexity of $O\left(K!\right)$, and all rules $\mathit{rs}_{K}/r_{i}$
are classified into one branches once $r_{i}$ is fixed as root. Therefore,
assume $r_{i}^{\pm},\forall i\in\left\{ 1,\ldots,K\right\} $are pre-computed
and can be indexed in $O\left(1\right)$ time, and denote by $T\left(K\right)$
the worst-case complexity of $\mathit{sodt}$ with respect to $\mathit{rs}_{K}$,
and size $N$ input data $\mathit{xs}$, the following recurrence
applies,
\[
\begin{aligned}T & \left(1\right)=O\left(1\right)\\
	T & \left(K\right)=K\times\left(T\left(K-1\right)+T\left(1\right)\right)+O\left(N\right),
\end{aligned}
\]
with solution $T\left(K\right)=O\left(K!\times N\right)$, where $O\left(N\right)$
is the complexity for calculating $\mathit{r}^{+}$ and $r^{-}$.
While this complexity is factorial in $K$, it is important to note
that the worst-case scenario occurs only when the tree consists of
a \textbf{\emph{single}} path of length $K$. However, such a tree
is generally considered the least useful solution in practical decision
tree problems, as it represents an extremely deep and narrow structure.

In most cases, decision trees that are as shallow as possible are
preferred, as shallow trees are typically more interpretable. Deeper
trees tend to become less interpretable, particularly when the number
of nodes increases. Therefore, while the worst-case complexity is
factorial, it does not necessarily represent the typical behavior
of decision tree generation in practical scenarios, where the goal
is often to minimize tree depth for improved clarity and efficiency.

\paragraph{Complexity of $\mathit{odt}$}

We can now analyze the worst-case complexity of the main program $\mathit{odt}\left(K,\mathit{rs}\right)$.
Consider a rule list of size $M$, $\mathit{rs}=\left[r_{1},\ldots,r_{M}\right]$,
where $\mathit{kcombs}\left(K,\mathit{rs}\right)$ generates $\left(\begin{array}{c}
	M\\
	K
\end{array}\right)$ possible size-$K$ splitting rule subsets $\mathit{rs}_{K}$ in $O\left(\left(\begin{array}{c}
	M+1\\
	K+1
\end{array}\right)\right)$ time \citep{ruskey2003combinatorial}. Since $\mathit{concatMapL}\left(\mathit{sodt},\mathit{kcombs}\left(K,\mathit{rs}\right)\right)$
applies $\mathit{sodt}$ to each $\mathit{rs}_{K}\in\mathit{kcombs}\left(K,\mathit{rs}\right)$the
resulting worst-case complexity is $O\left(\left(\begin{array}{c}
	M+1\\
	K+1
\end{array}\right)+\left(\begin{array}{c}
	M\\
	K
\end{array}\right)\times K!\times N\right)$. Finally, $\mathit{min}_{E}$ selects the optimal decision tree from
the $O\left(M^{K}\right)$ solutions returned by $\mathit{concatMapL}\left(\mathit{sodt},\mathit{kcombs}\left(K,\mathit{rs}\right)\right)$.
Assuming $E$ has $O\left(1\right)$ complexity---which can be achieved
by pairing each tree with its corresponding objective during construction
in $\mathit{sodt}$---the total complexity of $\mathit{odt}$ becomes
\[
O\left(\left(\begin{array}{c}
	M+1\\
	K+1
\end{array}\right)+\left(\begin{array}{c}
	M\\
	K
\end{array}\right)\times K!\times N+\left(\begin{array}{c}
	M\\
	K
\end{array}\right)\right)=O\left(K!\times M^{K}\times N\right)
\]
If $K$ is a fixed constant, the overall complexity of $\mathit{odt}\left(K,\mathit{rs}\right)$
is polynomial in $M$, but factorial in $K$. If $\mathit{r}^{+}$
and $r^{-}$ can be calculated in $O\left(1\right)$ time, then the
algorithm has a complexity of $O\left(K!\times M^{K}\right)$.

\subsection{Further speed-up---prefix-closed filtering and the thinning method}

\subsubsection{Prefix-closed filtering}

In ML research, to prevent \emph{overfitting}, a common approach is
to impose a constraint that the number of data points in each leaf
node must exceed a fixed size, $N_{\min}$, to avoid situations where
a leaf contains only a small number of data points. One straightforward
method to apply this constraint is to incorporate a filtering process
by defining

\begin{equation}
	\mathit{genDTFs}{}_{N_{\text{min}},rs}=\mathit{filter}{}_{q_{N_{\text{min}}}}\circ\mathit{genDTs}_{rs}.
\end{equation}

However, this direct specification is not ideal, as $\mathit{genDTs}_{\mathit{rs}}$
can potentially generate an extremely large number of trees, making
post-generation filtering computationally inefficient. To make this
program efficient, it is necessary to fuse the post-filtering process
inside the generating function. It is well-known in various fields
\citep{bird1996algebra,ruskey2003combinatorial,bird2020algorithm}
that if the one-step update function in a recursion ``\emph{reflects}''
a predicate $p$, then the filtering process can be incorporated directly
into the recursion. This approach allows for the elimination of infeasible
configurations before they are fully generated.

In this context, we say that ``$f$ reflects $p$'' if $p\left(f\left(\mathit{DN}\left(u,r,v\right)\right)\right)\implies p\left(u\right)\wedge p\left(v\right)$,
where $f$ is defined as $f\left(\mathit{DN}\left(u,r,v\right)\right)=\mathit{DN}\left(\mathit{mapD}{}_{\cap_{r_{i}^{+}}}\left(u\right),r_{i},\mathit{mapD}{}_{\cap_{r_{i}^{-}}}\left(v\right)\right)$
in the $\mathit{genDTs}$ function. Since the number of data points
in each leaf decreases as more splitting rules are introduced, it
is trivial to verify that the implication holds.

As a result, the filtering process can be integrated into the generator,
and the new generator, after fusion, is defined as
\[
\begin{aligned}\mathit{genDTFs}{}_{N_{\text{min}}} & :\left[\mathcal{R}\right]\times\mathcal{D}\to\left[\mathit{DTree}\left(\mathcal{R},\mathcal{D}\right)\right]\\
	\mathit{genDTFs}{}_{N_{\text{min}}} & \left(\left[\;\right],\mathit{xs}\right)=\left[\mathit{\mathit{DL}}\left(xs\right)\right]\\
	\mathit{genDTFs}{}_{N_{\text{min}}} & \left(\left[r\right],\mathit{xs}\right)=\left[\mathit{DN}\left(\mathit{DL}\left(r^{+}\cap\mathit{xs}\right),r,\mathit{DL}\left(r^{-}\cap\mathit{xs}\right)\right)\right]\\
	\mathit{genDTFs}{}_{N_{\text{min}}} & \left(\mathit{rs},\mathit{xs}\right)=\mathit{filter}{}_{q_{N_{\text{min}}}}\Big[\mathit{DN}\left(\mathit{mapD}{}_{\cap_{r_{i}^{+}}}\left(u\right),r_{i},\mathit{mapD}{}_{\cap_{r_{i}^{-}}}\left(v\right)\right)\mid\\
	& \begin{aligned} & \quad\quad\quad\quad\quad\quad\quad\quad\quad\quad\quad\left(\mathit{rs}^{+},r_{i},\mathit{rs}^{-}\right)\leftarrow\mathit{splits}\left(\mathit{rs}\right),\\
		& \quad\quad\quad\quad\quad\quad\quad\quad\quad\quad\quad u\leftarrow\mathit{genDTFs}{}_{N_{\text{min}}}\left(\mathit{rs}^{+},\mathit{xs}\right),\;v\leftarrow\mathit{genDTFs}{}_{N_{\text{min}}}\left(\mathit{rs}^{-},\mathit{xs}\right)\Big]\Bigg).
	\end{aligned}
\end{aligned}
\]
where $q_{N_{\text{min}}}:\mathit{DTree}\left(\mathcal{R},\mathcal{D}\right)\to\mathit{Bool}$
return True if all leafs in a tree contains data points greater than
$N_{\min}$.

Substituting definition $\mathit{genDTFs}{}_{N_{\text{min}}}$ into
the derivation of $\mathit{sodt}$ could potentially give a more efficient
definition for $\mathit{sodt}$, as $\mathit{genDTFs}{}_{N_{\text{min}}}$
generates provably less configurations than $\mathit{genDTs}$.

Alternatively, one can also incorporate a tree-depth constraint. It
is trivial to verify that the predicate defining the tree-depth constraint
will also be reflected by $f$, as adding more branch nodes will inevitably
increase the tree depth. In other words, we have $q^{\prime}\left(f\left(\mathit{DN}\left(u,r,v\right)\right)\right)\implies q^{\prime}\left(u\right)\wedge q^{\prime}\left(v\right)$,
where $q^{\prime}$ calculate the tree depth.

\subsubsection{The thinning method}

The \emph{thinning} \emph{algorithm\index{thinning algorithm@thinning\emph{ }algorithm}}
is equivalent to the exploitation of \emph{dominance relations }in
the algorithm design literature \citep{ibaraki1977power,bird1996algebra}.
The use of thinning or dominance relations is concerned with improving
the time complexity of naive DP algorithms \citep{galil1989speeding,DEMOOR19993}.

The thinning technique exploits the fundamental fact that certain
partial configurations are superior to others, and it is a waste of
computational resources to extend these non-optimal partial configurations.
The thinning relation can be introduced into an optimization problem
by the following

\begin{equation}
	\mathit{genDTTs}_{r}=\mathit{thin}_{r}\circ\mathit{genDTs},
\end{equation}
where $\mathit{thin}_{r}:\left[\mathcal{A}\right]\to\left[\mathcal{A}\right]$
and $r:\mathcal{A}\to\mathcal{A}\to\mathit{Bool}$ is a Boolean-valued
binary function. Following \citet{bird1996algebra}'s result, if we
can find a \emph{relation} $r$ which is a \emph{preorder} and satisfies
\emph{monotonicity} \citet{bird1996algebra}, then
\begin{equation}
	\begin{aligned}\mathit{genDTTs}_{r} & :\left[\mathcal{R}\right]\times\mathcal{D}\to\left[\mathit{DTree}\left(\mathcal{R},\mathcal{D}\right)\right]\\
		\mathit{genDTTs}_{r} & \left(\left[\;\right],\mathit{xs}\right)=\left[\mathit{DL}\left(\mathit{xs}\right)\right]\\
		\mathit{\mathit{genDTTs}_{r}} & \left(\left[r\right],\mathit{xs}\right)=\left[\mathit{DN}\left(\mathit{DL}\left(r^{+}\cap\mathit{xs}\right),r,\mathit{DL}\left(r^{-}\cap\mathit{xs}\right)\right)\right]\\
		\mathit{genDTTs}_{r} & \left(\mathit{rs},\mathit{xs}\right)=\mathit{thin}_{r}\bigg[\mathit{\mathit{DN}}\left(\mathit{mapD}{}_{\cap_{r_{i}^{+}}}\left(u\right),r_{i},\mathit{mapD}{}_{\cap_{r_{i}^{-}}}\left(v\right)\right)\mid\\
		& \begin{aligned} & \quad\quad\quad\left(\mathit{rs}^{+},r_{i},\mathit{rs}^{-}\right)\leftarrow\mathit{splits}\left(\mathit{\mathit{rs}}\right),u\leftarrow\mathit{genDTTs}\left(\mathit{rs}^{+},\mathit{xs}\right),v\leftarrow\mathit{genDTTs}\left(\mathit{rs}^{-},\mathit{xs}\right)\bigg].\end{aligned}
	\end{aligned}
\end{equation}

Again, substituting the definition of $\mathit{genDTTs}_{r}$ in the
derivation of $\mathit{sodt}$ could potentially yield a more efficient
definition for $\mathit{sodt}$, as $\mathit{genDTTs}_{r}$ provably
generates fewer configurations than $\mathit{genDTs}$. However, whether
the program actually runs faster depends on the implementation of
$\mathit{thin}_{r}$ and the specific application, as the complexity
of $\mathit{thin}_{r}$ is nontrivial, since removing more configurations
requires additional computations. One example definition of the thinning
algorithm can be found in \citet{bird2020algorithm}.

Thinning is different from $\mathit{min}_{E}$. Indeed, the $\mathit{min}_{E}$
function can be understood as a special thinning function with respect
to a \emph{total} \emph{order} defined by the objective function $E$,
whereas the thinning is based on a\emph{ preorder} $r$. In a preorder
relation, some configurations are not comparable. Thus, $\mathit{thin}_{r}:\left[\mathcal{A}\right]\to\left[\mathcal{A}\right]$
receives a list and returns a list, whereas $\mathit{min}_{E}:\left[\mathcal{A}\right]\to\mathcal{A}$
always returns a single element.

\section{Applications of proper decision trees\label{sec: Applications}}

\subsection{The binary space partition problem\label{subsec:Binary-space-partition}}

\begin{figure}[h]
	\centering{}\includegraphics[scale=0.25]{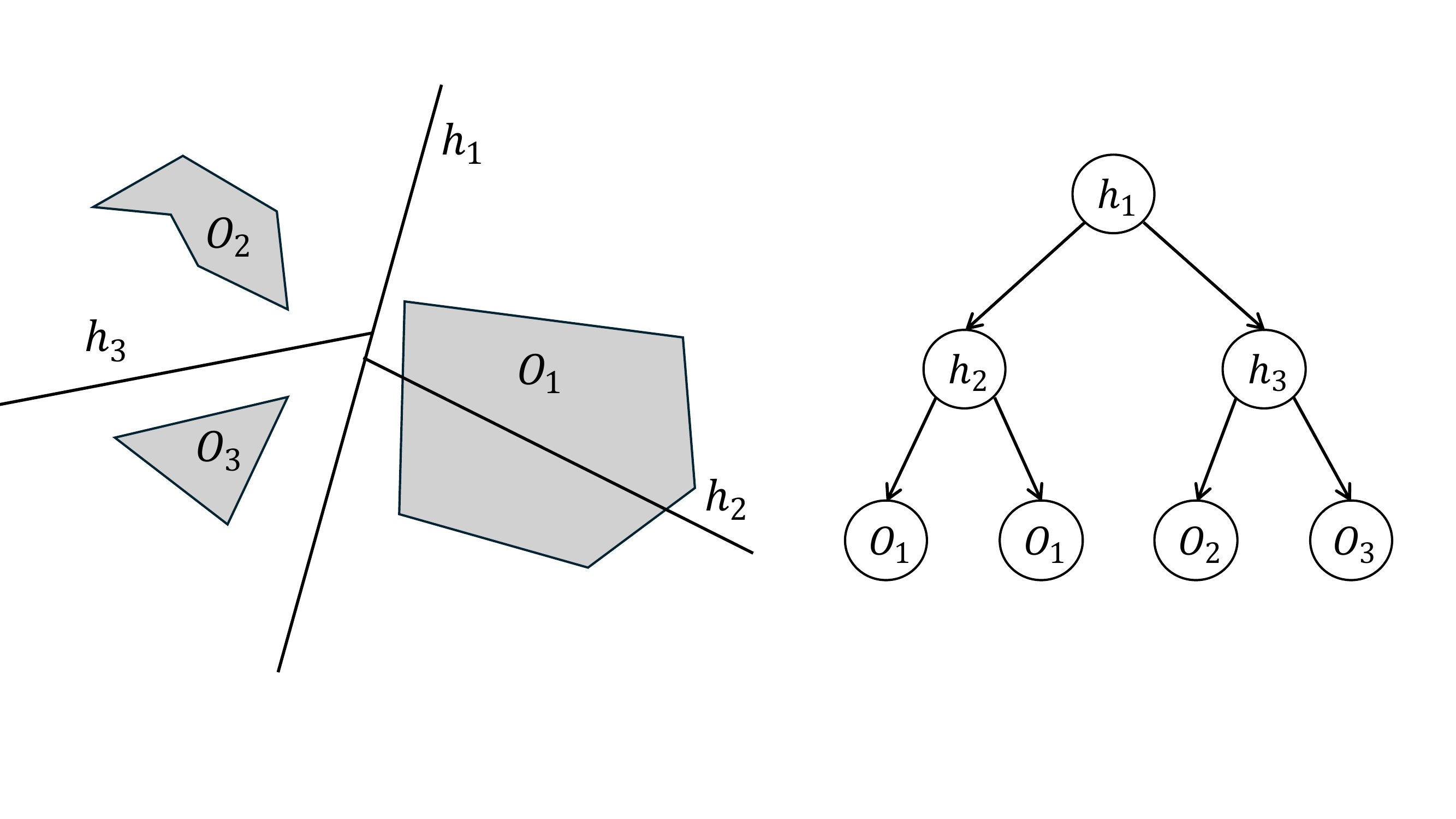}\caption{A binary space partition (left) of objects $O_{1}$, $O_{2}$ and
		$O_{3}$ using hyperplanes $h_{1}$, $h_{2}$, and $h_{3}$, and the
		corresponding binary space partition tree (right).\label{fig: BSP-polygon}}
\end{figure}

\emph{Binary} \emph{space} \emph{partitioning} (BSP) arose from the
need in computer graphics to rapidly draw three-dimensional scenes
composed of some physical objects. A simple way to draw such scenes
is \emph{painter's algorithm}: draw polygons in order of distance
from the viewer, back to front, painting over the background and previous
polygons with each closer object. The objects are then scanned in
this so-called \emph{depth} \emph{order}, starting with the one farthest
from the viewpoint.

However, successfully applying painter's algorithm depends on the
ability to quickly sort objects by depth, which is not always trivial.
In some cases, a strict depth ordering may not exist. In such cases,
objects must be subdivided into pieces before sorting. To implement
the painter's algorithm in a real-time environment, such as flight
simulation, preprocessing the scene is essential to ensure that a
valid rendering order can be determined efficiently for any viewpoint.

A BSP tree provides an elegant solution to this problem, which is
essentially a decision tree in which each leaf node contains at most
one polygon (or it can be empty), with splitting rules defined by
hyperplanes. For instance, consider the 2D case; the left panel of
Figure \ref{fig: BSP-polygon} illustrates the situation where the
splitting rules are defined by hyperplanes, and the objects are polygons.

\begin{figure}[h]
	\centering{}\includegraphics[scale=0.25]{BSP-segments}\caption{The auto-partition (left) for segments $S_{1}$, $S_{2}$, $S_{3}$,
		$S_{4}$, $S_{5}$, $S_{6}$, $S_{7}$, and the corresponding binary
		space partition tree (right). We denote the extending lines for segment
		$S_{i}$ as $l_{i}$. \label{fig: BSP-segments}\protect \\
	}
\end{figure}

To make painter's algorithm efficient, the resulting BSP tree should
be as small as possible in the sense that it has a minimal number
of leaf nodes and splitting rules. In theory, the splitting rules
used to define the BSP tree can be arbitrary. However, since BSP is
primarily applied to problems that require highly efficient solutions---such
as dynamically rendering a scene in real time---the splitting rules
are typically chosen based on segments (or, in the three-dimensional
case, affine flats created by 2D polygons) present in the diagram.
A BSP tree that uses only these segments to define splitting rules
is called an \emph{auto-partition}, and we will refer to these rules
as \emph{auto-rules}. For example, as shown in Figure \ref{fig: BSP-segments},
when the objects being partitioned are segments, the auto-rules generated
by these segments are their extending lines.

The most common algorithm for creating a BSP tree involves randomly
choosing permutations of auto-rules and then selecting the best permutation
\citep{motwani1996randomized,de2000computational}, although the exhaustiveness
of permutations has not been properly analyzed in any previous research.
While auto-partitions cannot always produce a minimum-size BSP tree,
previous probability analyses have shown that the BSP tree created
by randomly selecting auto-rules can still produce reasonably small
trees, with an expected size of $O\left(N\log N\right)$ for 2D objects
and $O\left(N^{2}\right)$ for 3D objects, where $N$ is the number
of auto-rules \citep{motwani1996randomized}.

For a BSP tree, any splitting rule can become the root, but some segments
may split others into two, thereby creating new splitting rules, as
seen in Figure \ref{fig: BSP-segments}, where segment $S_{7}$ is
split into two. Therefore, we need to modify the $\mathit{splits}$
function by defining it as
\begin{equation}
	\mathit{splits}_{\text{BSP}}\left(\mathit{rs}\right)=\left[\left(\mathit{sp}_{\text{BSP}}\left(r_{i},\mathit{rs}\right),r_{i},\mathit{sn}_{\text{BSP}}\left(r_{i},\mathit{rs}\right)\right)\mid r_{i}\leftarrow rs\right],
\end{equation}
where $\mathit{sp}_{\text{BSP}}:\mathcal{R}\to\left[\mathcal{R}\right]\to\left[\mathcal{R}\right]$
and $\mathit{sn}_{\text{BSP}}:\mathcal{R}\to\left[\mathcal{R}\right]\to\left[\mathcal{R}\right]$
are short for ``split positive'' and ``split negative'', respectively.
These functions take a splitting rule $r$ and a list of rules $\mathit{rs}$
and return all segments lying on the positive and negative sides of
$r$, respectively, including the newly generated rules. At the same
time, we need to modify the objective function by simply counting
the number of leaf nodes and branch nodes

\begin{equation}
	\begin{aligned}E_{\text{BSP}} & :\mathit{DTree}\left(\mathcal{R},\mathcal{D}\right)\to\mathbb{N}\\
		E_{\text{BSP}} & \left(\mathit{DL}\left(\mathit{xs}\right)\right)=1\\
		E_{\text{BSP}} & \left(\mathit{DN}\left(u,r,v\right)\right)=E\left(u\right)+E\left(v\right)+1.
	\end{aligned}
\end{equation}

The BSP tree produced by the $\mathit{sodt}$ algorithm can, by definition,
achieve the minimal size tree with respect to a given set of auto-rules,
with a worst-case complexity of $O\left(K!\times N\right)$. By contrast,
the classical randomized algorithm always checks all possible permutations
in all scenarios to obtain the minimal BSP tree, requiring provably
more computations compared to the worst-case scenario of the $\mathit{sodt}$
algorithm. This is because calculating permutations involves additional
steps to transform them into trees, and several permutations may correspond
to the same tree.

Moreover, the BSP tree is a very general data structure that encompasses
several well-known structures, including the $K$-D tree, the max-margin
tree ($\mathit{MM}$-tree), and random-projection tree (RP-tree) \citep{fan2018binary,fan2019binary}.
We will explore how to construct optimal $K$-D trees later.

\subsection{Optimal decision tree problems with axis-parallel, hyperplane or
	hypersurface splitting rules in machine learning\label{subsec: ODT in ML}}

As discussed in the introduction, due to the intractable combinatorics
of the decision tree problem, studies on the ODT problem with even
axis-parallel splitting rules are scarce, let alone research on the
ODT problem for hyperplanes or more complex hypersurface splitting
rules.

However, the more complex the splitting rules, the simpler and more
accurate the resulting tree tends to be. To illustrate, Figure \ref{fig:DT in ML}
three different decision tree models---the axis-parallel, the hyperplane,
and the hypersurface decision tree (defined by a degree-two polynomial)---used
to classify the same dataset. As the complexity of the splitting rule
increases, the resulting decision tree becomes simpler and more accurate.

Unlike the BSP problem, where splitting rules are predefined, for
the ODT problem in ML, the splitting rules are unknown. The algorithm
must learn the best set of rules that will yield the best partition.
Therefore, we need a separate process to generate all possible splitting
rules in $\mathbb{R}^{D}$, which is precisely the $\mathit{genRules}$
function mentioned in the introduction. At first glance, the number
of possible splitting rules for any given type appears infinite, as
the space is continuous. Despite the apparent infinitude of possible
splitting rules, the finiteness of the dataset constrains the number
of distinct partitions that these rules can generate. This implies
the existence of equivalence classes among different rules.

In the axis-parallel decision tree problem, $\mathit{genRules}_{\text{AODT}}$
can be defined as the set of all axis-parallel hyperplanes passing
through each data point $x\in\mathbb{R}^{D}$ of a size $N$ data
list $\mathit{xs}$, giving a total of $N\times D$ hyperplanes. Concretely,
for a data list $\mathit{xs}$, with points $x=\left(x_{1},\ldots,x_{D}\right)$,
we have
\[
\mathit{genRules}_{\text{AODT}}\left(\mathit{xs}\right)=\left[x_{i}\mid x_{i}\leftarrow x,x\leftarrow\mathit{xs}\right],
\]
Equivalently, \citet{brita2025optimal,mazumder2022quant} use the
medians of adjacent pairs of data points along each dimension $D$.

For more general settings, \citet{he2025ROF,he2023efficient} showed
that the number of equivalence classes for hyperplane and hypersurface
partitions is $O\left(N^{D}\right)$ and $O\left(N^{G}\right)$, respectively,
where $G=\left(\begin{array}{c}
	D+M\\
	D
\end{array}\right)-1$ and $M$ is the polynomial degree for defining hypersurfaces. This
is achieved by exhaustively enumerating all hyperplanes passing through
exactly $D$ points (for hyperplanes) or $G$ points (for hypersurfaces),
which has combinatorial complexity $\left(\begin{array}{c}
	N\\
	D
\end{array}\right)=O\left(N^{D}\right)$ and $\left(\begin{array}{c}
	N\\
	G
\end{array}\right)=O\left(N^{G}\right)$, respectively. In other words, given a list of data points $\mathit{xs}:\left[\mathbb{R}^{D}\right]$
and polynomial degree $M$, the splitting rule generator for general
hyperplanes and hypersurfaces can be formulated as follows:

\[
\mathit{genRules}_{\text{HODT}}\left(M,\mathit{xs}\right)=\left[r_{i}\mid r_{i}\leftarrow\mathit{kcombs}\left(G,\mathit{xs}\right)\right],
\]
where $G=\left(\begin{array}{c}
	D+M\\
	D
\end{array}\right)-1$.

Therefore, to solve the ODT problem in ML, we simply compose $\mathit{genRules}:\mathcal{D}\to\left[\mathcal{R}\right]$
with $\mathit{odt}_{K}:\left[\mathcal{R}\right]\to\mathit{DTree}\left(\mathcal{R},\mathcal{D}\right)$
as
\begin{align*}
	\mathit{odtML}_{K} & :\mathcal{D}\to\mathit{DTree}\left(\mathcal{R},\mathcal{D}\right)\\
	\mathit{odtML}_{K} & =\mathit{odt}_{K}\circ\mathit{genRules}
\end{align*}

By switching the definition of $\mathit{genRules}$ we can solve either
the AODT problem or the more general hypersurface decision tree problem,
while keeping the main program $\mathit{odt}_{K}$ unchanged. This
\textbf{modularity} provides flexible solutions for ODT problems in
machine learning. The resulting complexities are $O\left(N\times D+K!\times N\times\left(N\times D\right)^{K}\right)=O\left(N^{K}\right)$
for the AODT problem, $O\left(N^{D}+K!\times N\times\left(N^{D}\right)^{K}\right)=O\left(N^{DK}\right)$
for optimal hyperplane decision trees, and $O\left(N^{G}+K!\times N\times\left(N^{G}\right)^{K}\right)=O\left(N^{GK}\right)$
for hypersurface decision trees of degree $M$.

Even better, with the help of an ingenious combination generator,
such as the one developed \citep{he2024ekm}, we can create the ODT
with mixed splitting rules---where axis-parallel, hyperplane, and
hypersurface splitting rules are used simultaneously within the same
tree. By contrast, classical approaches can only assume one type of
splitting rule. To the best of our knowledge, such a decision tree
with mixed splitting rules has not been described previously.

\subsection{$K$-D tree}

\begin{figure}[h]
	\centering{}\includegraphics[scale=0.25]{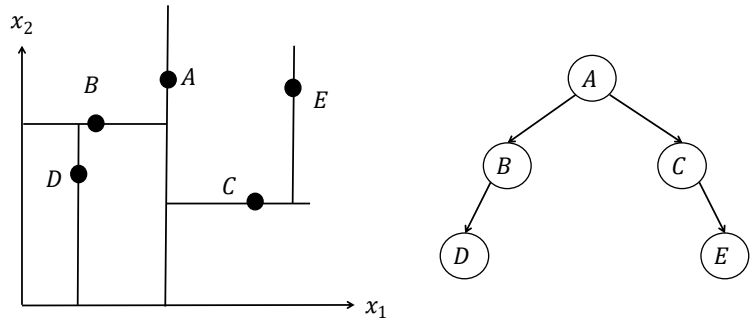}\caption{A $K$-D contains seven data points. The split axis for branch nodes
		at the same level is consistent. Node $A$ splits along the first
		coordinate,$x_{1}$, while nodes $B$ and $C$ split along the second
		coordinate, $x_{2}$. Nodes $D$ and $E$ then split along the first
		coordinate, $x_{1}$, once again.\label{fig: K-D tree illustrate}}
\end{figure}

The $K$-D tree is a fundamental data structure designed for efficiently
processing multi-dimensional search queries. Introduced by \citet{bentley1975multidimensional}
, it shares similarities with the axis-parallel decision tree model
in ML. The key distinction lies in the branching rules: while axis-parallel
decision trees allow branch nodes to be defined by arbitrary axis-aligned
splitting rules, $K$-D trees impose a constraint in which all branch
nodes at the same level must follow a predefined splitting rule based
on a specific dimension.

For instance, in the $K$-D tree illustrated in Figure \ref{fig: K-D tree illustrate},
the root node applies a splitting rule based on the horizontal axis.
Then the splitting axis alternates between the vertical and horizontal
axes at each subsequent level.

Similar to an axis-parallel hyperplane decision tree, where possible
splitting rules are derived from the data---each dimension having
$O\left(N\right)$ choices---resulting in a total of $O\left(N\times D\right)$
possible splits. However, the K-D tree imposes an additional constraint:
all splits at the same level must occur along a fixed dimension. This
restriction reduces the combinatorial complexity of the problem, as
it limits the consideration to $O\left(N\right)$ possible splits,
each corresponding to one of the $O\left(N\right)$ data points along
a predetermined dimension. Consequently, the tree data type must be
redefined to incorporate dimension information at the root of each
subtree. This can be achieved by pairing each branch node of the tree
with a natural number:
\[
\mathit{Tree}_{\text{KD}}\left(\mathcal{A},\mathcal{B}\right)=L\left(\mathcal{B}\right)\mid N\left(\mathit{Tree}_{\text{KD}}\left(\mathcal{A},\mathcal{B}\right),\left(\mathcal{A},\mathbb{N}\right),\mathit{Tree}_{\text{KD}}\left(\mathcal{A},\mathcal{B}\right)\right).
\]

Also, the $\mathit{splits}$ function is redefined to incorporate
the tree's depth information, formally expressed as:

\begin{align*}
	\mathit{splits}_{\text{KD}} & \left(d,\mathit{xs}\right)=\left[\left(\mathit{sp}_{\text{KD}}\left(x,d,\mathit{xs}\right),x,\mathit{sn}_{\text{KD}}\left(x,d,\mathit{xs}\right)\right)\mid x\leftarrow xs\right],
\end{align*}
where $\mathit{xs}$ represents a list of data points in $\mathbb{R}^{D}$.
The function $\mathit{sp}_{\text{KD}}$ takes a root node $x_{i}$
and outputs all data points in $\mathit{xs}$ whose $d$th coordinate
is smaller than that of $x$. Similarly, the function $\mathit{sn}_{\text{KD}}$
selects all data points in $\mathit{xs}$ with greater $d$th coordinates
than $x$.

Since $K$-D trees have numerous applications, including nearest neighbor
search (finding the closest point(s) to a given query point in a dataset),
range search (retrieving all points within a specified range or bounding
box), and image processing (feature matching or clustering in multi-dimensional
feature spaces), the definition of the objective function depends
on the specific application requirements. By combining the $\mathit{splits}_{\text{KD}}$
functions with the$\mathit{odt}$ program, we can construct an optimal
$K$-D tree tailored to a given objective function.

\section{Extension to non-proper decision trees\label{sec:Extension-to-non-proper}}

We refer to decision trees that violate \emph{any} proper decision
tree axioms as \emph{non-proper decision trees}. We examine four examples
here. The first is the optimal decision tree problem over binary feature
data (ODT-BF), which discards the ancestral constraint directly (Axiom
4), as a descendant rules can be both left- and right-children. The
second and third examples arise from modeling the Murtree algorithm
proposed by \citet{demirovic2022murtree}, a state-of-the-art method
for solving the ODT-BF problem. Both replace Axiom 4 with a different
ancestral constraint.

The last example is the well-studied matrix chain multiplication problem
(MCMP), which violates a structural constraint (Axiom 1), as its branch
nodes contain no information. Indeed, the MCMP is not a canonical
``decision tree problem'' as it represents a tree whose branch nodes
are unlabeled. Nevertheless, we include its discussion because a closely
related variant (the first model for the Murtree algorithm) can be
viewed as a decision tree problem, where the splitting rules are fixed
in a predetermined order. This allows for a meaningful cross-comparison
with different formulations.

\subsection{The optimal decision tree problem over binary feature data}

\paragraph{Interpretation of splitting rules for the decision tree over binary
	feature data}

The ODT-BF problem aims to find a decision tree over a binary feature
dataset $\left\{ 0,1\right\} ^{D}\to L$, where $L$ is the type of
the labels. Some researchers classify ODT-BF as an axis-parallel decision
tree problem; however, this characterization obscures the geometric
meaning of axis-parallel hyperplanes. Binary feature data correspond
to points at the vertices of a hypercube, and splits cannot be generated
from the midpoints between vertices, as the data are binary rather
than continuous.

We argue that a better way to understand the splitting rule in this
question is to consider following: Given a new data item $x\in\mathbb{R}^{D}$,
each splitting rule in ODT-BF problem corresponds to determining whether
the $i$-th feature of a data point $x\in\mathbb{R}^{D}$ is 0 or
1---equivalently, whether $x$ contains feature $i$? A consequence
of this characterization is that the number of splitting rules becomes
independent to the number of data, because a hypercube in $\mathbb{R}^{D}$
has at most $2^{D}$ vertexes. Indeed, in the ODT-BF problem, many
data points are duplicates and occupy the same vertices of the cube.

Assume a rule $r_{i}$ is the ancestor of another rule $r_{j}$. The
question of whether $x$ contains feature $i$ is independent of whether
it contains feature $j$. This means that fixing an ancestor rule
$r_{i}$ does not influence whether $r_{j}$ should go to the left
or right subtree of $r_{i}$. In other words, if $r_{i}$ is the ancestor
of $r_{j}$, both $r_{i}\swarrow r_{j}$ and $r_{i}\searrow r_{j}$
are valid. Consequently, Axiom 4 of the proper decision tree no longer
holds, and the ODT-BF problem cannot be solved using the $\mathit{odt}_{K}$
program.

As a result, ODT-BF satisfies only the structural constrains of decision
trees, there is \textbf{no} ancestral constraint to the ODT-BF problem.
Consequently, the ODT-BF problem is an ordinary labeled binary tree
problem with no explicit constraints, resulting in a complexity of
$K!\times\mathit{Catalan}\left(K!\right)$ for an input of size $K$,
since there are $\mathit{Catalan}\left(K!\right)$ possible tree shapes
and $K!$ possible labelings.

\paragraph{Exhaustive generator for the unconstrained labeled binary tree problem}

Similar to $\mathit{genDTs}$, where we showed that structural constraint
Axiom 2 can be derived once a generator satisfies Axioms 1 and 3,
here we only discuss a partial generator for the ODT-BF problem. The
complete generator can be obtained by following a process analogous
to Theorem \ref{thm: comlete DT generator}.

Since the ODT-BF problem has no ancestral constraints, how should
we define its split function? In other words, once a root $r$ is
fixed, how should the two sublists $\mathit{rs}^{+}$ and $\mathit{rs}^{-}$
for its subtrees be determined from $\mathit{rs}/r$? The answer is
straightforward: \textbf{any subset of} $\mathit{rs}/r$ \textbf{is
	valid}! The following function produces all possible sublists $\mathit{xs}^{\prime}$
along with their complementary subsets $\mathit{xs}/\mathit{xs}^{\prime}$
as a pair.
\[
\begin{aligned}\mathit{subsPair} & \left(\left[\:\right],\mathit{ys}\right)=\left[\left(\left[\:\right],\mathit{ys}\right)\right]\\
	\mathit{subsPair} & \left(x:\mathit{xs},\mathit{ys}\right)=\mathit{subsPair}\left(\mathit{xs},\mathit{ys}\right)\cup\mathit{mapL}\left(f,\mathit{subsPair}\left(\mathit{xs},\mathit{ys}\right)\right)
\end{aligned}
\]
where $f\left(a,\mathit{as},\mathit{bs}\right)=\left(a:\mathit{as},\mathit{bs}/a\right)$.

For instance, running $\mathit{subsPair}\left(\left[1,2,3\right],\left[1,2,3\right]\right)$
returns\\
 $\left[\left(\left[\:\right],\left[1,2,3\right]\right),\left(\left[3\right],\left[1,2\right]\right),\left(\left[2\right],\left[1,3\right]\right),\left(\left[2,3\right],\left[1\right]\right),\left(\left[1\right],\left[2,3\right]\right),\left(\left[1,3\right],\left[2\right]\right),\left(\left[1,2\right],\left[3\right]\right),\left(\left[1,2,3\right],\left[\:\right]\right)\right]$, 
the $2^{3}$ possible sublists of $\left[1,2,3\right]$. Then, the
split function for the ODT-BF problem is defined as

$\mathit{splits}_{\text{BF}}\left(\mathit{rs}\right)=\mathit{subsPair}\left(\mathit{rs},\mathit{rs}\right)$

Following a similar derivation process as $\mathit{genDTBFs}$, we
can now formulate the search space of the ODT-BF problem using the
following generator.
\begin{definition}
	The generator for the search space of the ODT-BF problem over the
	binary tree datatype is defined as
	\begin{equation}
		\begin{aligned}\mathit{genDTBFs} & :\left[\mathcal{R}\right]\to\left[\mathcal{R}\right]\\
			\mathit{genDTBFs} & \left(\left[\:\right]\right)=\left[\:\right]\\
			\mathit{genDTBFs} & \left(\left[r\right]\right)=\left[\mathit{N}\left(L,r,L\right)\right]\\
			\mathit{genDTBFs} & \left(\mathit{rs}\right)=\left[\mathit{N}\left(u,r_{i},v\right)\mid r_{i}\leftarrow\mathit{rs},\left(\mathit{rs}^{+},\mathit{rs}^{-}\right)\leftarrow\mathit{splits}_{\text{BF}}\left(\mathit{rs}/r\right),u\leftarrow\mathit{\mathit{genDTBFs}}\left(\mathit{rs}^{+}\right),v\leftarrow\mathit{\mathit{genDTBFs}}\left(\mathit{rs}^{-}\right)\right].
		\end{aligned}
		\label{eq: generator of ODT-BF}
	\end{equation}
\end{definition}
The following lemma proves that $\mathit{genDTBFs}$ generates $K!\times\mathit{Catalan}\left(K!\right)$
trees for an input of size $K$.
\begin{lemma}
	\emph{The combinatorial complexity of $\mathit{genDTBFs}$ is given
		by}
	
	\emph{
		\[
		\begin{aligned}f_{\text{BF}} & \left(0\right)=1\\
			f_{\text{BF}} & \left(1\right)=1\\
			f_{\text{BF}} & \left(K\right)=K\times\sum_{i=0}^{K-1}\left(\left(\begin{array}{c}
				K-1\\
				i
			\end{array}\right)\times\left(f_{\text{BF}}\left(i\right)\times f_{\text{BF}}\left(K-1-i\right)\right)\right)
		\end{aligned}
		\]
		with solution $f_{\text{BF}}\left(K\right)=K!\times\mathit{Catalan}\left(K\right)$.}
\end{lemma}
\begin{proof}
	The base cases $f_{\text{BF}}\left(0\right)$ and $f_{\text{BF}}\left(1\right)$
	hold trivially. We prove the recursive case by induction:
	\begin{align*}
		f_{\text{BF}}\left(K\right)= & K\times\sum_{i=0}^{K-1}\left(\left(\begin{array}{c}
			K-1\\
			i
		\end{array}\right)\times\left(f_{\text{BF}}\left(i\right)\times f_{\text{BF}}\left(K-1-i\right)\right)\right)\\
		= & \text{ induction hypothesis}\\
		& K\times\sum_{i=0}^{K-1}\left(\left(\begin{array}{c}
			K-1\\
			i
		\end{array}\right)\times\left(i!\times\mathit{Catalan}\left(i\right)\times\left(K-1-i\right)!\times\mathit{Catalan}\left(K-1-i\right)\right)\right)\\
		= & \text{ definition of permutation \ensuremath{\left(K-1-i\right)!\times i!=\frac{\left(K-1\right)!}{\left(\begin{array}{c}
						K-1\\
						i
					\end{array}\right)}}}\\
		& K\times\sum_{i=0}^{K-1}\left(\left(\begin{array}{c}
			K-1\\
			i
		\end{array}\right)\times\left(\frac{\left(K-1\right)!}{\left(\begin{array}{c}
				K-1\\
				i
			\end{array}\right)}\times\mathit{Catalan}\left(i\right)\times\mathit{Catalan}\left(K-1-i\right)\right)\right)\\
		= & K\times\sum_{i=0}^{K-1}\left(\left(K-1\right)!\times\mathit{Catalan}\left(i\right)\times\mathit{Catalan}\left(K-1-i\right)\right)\\
		= & K!\times\sum_{i=0}^{K-1}\left(\mathit{Catalan}\left(i\right)\times\mathit{Catalan}\left(K-1-i\right)\right)\\
		= & K!\times\mathit{Catalan}\left(K\right)
	\end{align*}
\end{proof}
As a result, $\mathit{genDTBFs}$ generates all possible unconstrained
labeled binary trees. By following a process analogous to that used
for $\mathit{sodt}$, an efficient DP algorithm can be derived from
(\ref{eq: generator of ODT-BF}).

\subsection{Formal models for the Murtree algorithm \label{subsec:Non-exhua-Murtree}}

The Murtree algorithm proposed by \citet{demirovic2022murtree} has
been recognized as state-of-art for the ODT-BF problem, and many of
the techniques introduced in their work have inspired subsequent studies
\citep{brita2025optimal,aglin2021pydl8}. However, several critical
aspects of their algorithm remain ambiguous. These ambiguities mean
the claimed optimality is insufficiently addressed, particularly regarding
the algorithm's exhaustiveness. The lack of clarity admits multiple,
potentially inconsistent interpretations.

For instance, from the description in \citet{demirovic2022murtree},
although they claim a key contribution is ``providing a clear high-level
view of the algorithm using conventional algorithmic principles, namely
dynamic programming and search,'', as summarized in their Equation
(1), this equation actually describes a process different from both
their pseudo-code and implementation in GitHub repository. As a result,
we can find at least two possible ways to define the splits function,
yielding two formal models for the Murtree algorithm, both of which
lead to suboptimal or incorrect solutions. We believe this investigation
is valuable, as it identifies possible sources of misunderstanding
and may help future researchers avoid similar pitfalls when interpreting
or reimplementing the method.

\paragraph{Specification for Equation (1)}

From the recursive pattern of Equation (1), their method proceeds
by sequentially scanning the features $i\in\left[0,n-1\right]$ (where
$n$ is the number of splitting rules in our terminology, denote this
list as $\mathit{rs}$) and, for a fixed feature $i$, we denote it
as $r_{i}$, partitions the remaining rules $\mathit{rs}/r_{i}$ into
two sublists: one containing $n-i-1$ rules and the other containing
$i$ rules. Since they do not specify which of the $n-i-1$ rules
in $\mathit{rs}/r_{i}$ go to the left subtree and which of the left-subtree,
a plausible interpretation is that all rules preceding $r_{i}$ in
$\mathit{rs}$ are assigned to the left subtree, and all rules following
$r_{i}$ are assigned to the right subtree. This amounts to replacing
Axiom 4 with the following ancestral constraint:
\begin{itemize}
	\item Axiom 4': Given a list of rule $\mathit{rs}_{K}=\left[r_{1},r_{2},\ldots,r_{K}\right]$,
	and fix a root $r_{i}$, $\boldsymbol{K}_{ij}=1$ if $r_{j}$ precedes
	$r_{i}$ in $\mathit{rs}_{K}$ and $\boldsymbol{K}_{ij}=-1$ if $r_{i}$
	precedes $r_{j}$ in $\mathit{rs}_{K}$.
\end{itemize}
This ancestral constraint can also be applied to the specification
of a \emph{binary search tree} (BST). For an arbitrary fixed integer
$i$, values greater than $i$ are assigned to the right subtree,
while values less than or equal to $i$ are assigned to the left subtree.
For example, if $\mathit{rs}=\left[r_{1},r_{2},r_{3},r_{4}\right]$,
and $r_{3}$ is chosen as the root, then $\left[r_{1},r_{2}\right]$
are assigned to the left subtree and $\left[r_{4}\right]$ is assigned
to the right subtree.

Under this ancestral constraint, denote $\left[r_{1},r_{2},\ldots,r_{i-1}\right]$
as $\mathit{rs}^{+}$ and $\left[r_{i+1},\ldots,r_{n}\right]$ as
$\mathit{rs}^{-}$ separately, the split function can be defined as
follows:
\[
\mathit{splits}_{\text{Catalan}}\left(r_{i},\left[r_{1},r_{2},\ldots,r_{i},\ldots,r_{n}\right]\right)=\left(\mathit{rs}^{+},\mathit{rs}^{-}\right)
\]
which we call the ``Catalan'' split because it resembles a Catalan-style
partitioning. Consequently, the search space of the Murtree algorithm
is defined as follows.
\begin{definition}
	The first possible generator for the search space that the Murtree
	algorithm attempts to explore is defined as
	
	\begin{equation}
		\begin{aligned}\mathit{genCatalans} & :\left[\mathcal{R}\right]\to\left[\mathit{BTree}\left(\mathcal{R}\right)\right]\\
			\mathit{genCatalans} & \left(\left[\:\right]\right)=\left[\:\right]\\
			\mathit{genCatalans} & \left(\left[r\right]\right)=\left[\mathit{N}\left(L,r,L\right)\right]\\
			\mathit{genCatalans} & \left(\mathit{rs}\right)=\left[\mathit{N}\left(u,r_{i},v\right)\mid r_{i}\leftarrow\mathit{rs},\left(\mathit{rs}^{+},\mathit{rs}^{-}\right)\leftarrow\mathit{splits}_{\text{Catalan}}\left(r_{i},\mathit{rs}\right),u\leftarrow\mathit{genMurtrees}\left(\mathit{rs}^{+}\right),v\leftarrow\mathit{genMurtrees}\left(\mathit{rs}^{-}\right)\right]
		\end{aligned}
	\end{equation}
\end{definition}
An audience familiar with combinatorics may already recognize that
$\mathit{genCatalans}$ is exactly the generator for unlabeled decision
trees, which has complexity $\mathit{Catalan}\left(K\right)$ for
an input $\mathit{rs}$ of size $K$. This is because $\mathit{splits}_{\text{Catalan}}$
implicitly assumes a fixed ordering for $\mathit{rs}$;\textbf{ a
	fixed-order set of labeled elements is equivalent to a set of unlabeled
	elements}. The generator $\mathit{genCatalans}$ also tightly related
to the search space of binary search tree, assume that $r_{1}\leq r_{2}\leq r_{3}$,
running $\mathit{genCatalans}\left(\left[r_{1},r_{2},r_{3}\right]\right)$
gives us five possible binary search tree with respect to $\left[r_{1},r_{2},r_{3}\right]$,
such that the splitting rules in the left subtree are smaller than
the root, and splitting rules in the right subtree are greater than
root, see top five trees in Figure \ref{fig: running examples of murtree and catatree}
for running results of $\mathit{genCatalans}\left(\left[r_{1},r_{2},r_{3}\right]\right)$.

\paragraph{Specification for pseudocode and implementation}

In \citet{demirovic2022murtree}'s Algorithm 2 (describing the recursive
case of their algorithm), lines 7--15 correspond to the behavior
of the split function ( lines 386--403 in the $\texttt{solver.cpp}$
file of their source code). Lines 7--8 impose a filtering condition
that constrains the tree depth.

The first for loop at line 9 (while loop 391 in$\texttt{solver.cpp}$
file) traverses the candidate rules list, analogous to $r_{i}\leftarrow\mathit{rs}$
in our list comprehension definition. At line 14, there is a second
for loop (not explicitly mentioned in their ``high-level abstraction''
Equation (1)) that traverses the remaining list from $\left[n_{\mathit{min}},n_{\mathit{max}}\right]$
to assign the remaining rules to the right subtree, where $n_{\mathit{min}},n_{\mathit{max}}$
is determined by the maximal depth of subtrees. Since our focus is
on analyzing the exhaustiveness of the search space, we ignore the
discussion regarding filtering conditions so instead of traversing
the filtered list $\left[n_{\mathit{min}},n_{\mathit{max}}\right]$
as computed in lines 7--8 of Algorithm 2, we traverse the complete
list $\left[1,n-1\right]$. Hence for each $i\in\left[1,n-1\right]$
we partition the remaining $\mathit{rs}_{K}\backslash r_{i}$ into
two disjoint lists.

This amounts to replacing Axiom 4 with the following ancestral constraint:
\begin{itemize}
	\item Axiom 4'': Given a list of rule $\mathit{rs}_{K}=\left[r_{1},r_{2},\ldots,r_{K}\right]$,
	and fix a root $r_{i}$, if $\boldsymbol{K}_{ij}=1$ then $\boldsymbol{K}_{ik}=1$
	for all $r_{k}$ in $\mathit{rs}_{K}$ precedes $r_{j}$, and if $\boldsymbol{K}_{ij}=-1$
	then $\boldsymbol{K}_{ik}=-1$ for all $r_{k}$ in $\mathit{rs}_{K}$
	come after $r_{j}$.
\end{itemize}
For example, given a candidate rule list $\mathit{rs}=\left[r_{1},r_{2},r_{3},r_{4}\right]$,
if we select $r_{3}$ as the root, the remaining rules are split in
four possible ways: 
\[
\left[\left(\left[\:\right],\left[1,2,4\right]\right),\left(\left[1\right],\left[2,4\right]\right),\left(\left[1,2\right],\left[4\right]\right),\left(\left[1,2,4\right],\left[\:\right]\right)\right].
\]
This is essentially all possible list partitions with respect to $\mathit{rs}\backslash r_{3}=\left[r_{1},r_{2},r_{4}\right]$.
In the Murtree algorithm, this process is repeatedly applied to each
$r_{i}$ in $\mathit{rs}$ (the first for loop at line 9 ``\textbf{for}
splitting feature $f\in\mathcal{F}$ \textbf{do}'' ), resulting a
total complexity of $K\times\left(\left(K-1\right)+1\right)=K^{2}$
for a rule list of size $K$ (For a list of size $K-1$, there are
$K-1+1$ possible list partitions).

Following \citet{he2025ROF}, list partitions can be defined recursively
as follows:
\[
\mathit{splits}_{\text{Murtree}}\left(r:\mathit{rs}\right)=\left(\left[\:\right],r:\mathit{rs}\right):\left[\left(x:\mathit{ys},\mathit{zs}\right)\mid\left(\mathit{ys},\mathit{zs}\right)\leftarrow\mathit{splits}_{\text{Murtree}}\left(\mathit{rs}\right)\right]
\]

We can now formalize the search space that the Murtree algorithm's
pseudocode attempts to explore as follows.
\begin{definition}
	The second possible generator for the search space that the Murtree
	algorithm attempts to explore is defined as
	
	\begin{equation}
		\begin{aligned}\mathit{genMurtrees} & :\left[\mathcal{R}\right]\to\left[\mathit{BTree}\left(\mathcal{R}\right)\right]\\
			\mathit{genMurtrees} & \left(\left[\:\right]\right)=\left[\:\right]\\
			\mathit{genMurtrees} & \left(\left[r\right]\right)=\left[\mathit{N}\left(L,r,L\right)\right]\\
			\mathit{genMurtrees} & \left(\mathit{rs}\right)=\left[\mathit{N}\left(u,r_{i},v\right)\mid r_{i}\leftarrow\mathit{rs},\left(\mathit{rs}^{+},\mathit{rs}^{-}\right)\leftarrow\mathit{splits}_{\text{Murtree}}\left(\mathit{rs}/r_{i}\right),u\leftarrow\mathit{genMurtrees}\left(\mathit{rs}^{+}\right),v\leftarrow\mathit{genMurtrees}\left(\mathit{rs}^{-}\right)\right]
		\end{aligned}
	\end{equation}
\end{definition}
By modeling the search space of the Murtree algorithm using $\mathit{genMurtrees}$
and $\mathit{genCatalans}$ it is straightforward to verify that the
Murtree algorithm is optimal with respect to $\mathit{genMurtrees}$
or $\mathit{genCatalans}$ by using a similar derivation that we show
in previous sections.

\begin{figure}
	\begin{centering}
		\includegraphics[viewport=50bp 250bp 1280bp 500bp,clip,scale=0.25]{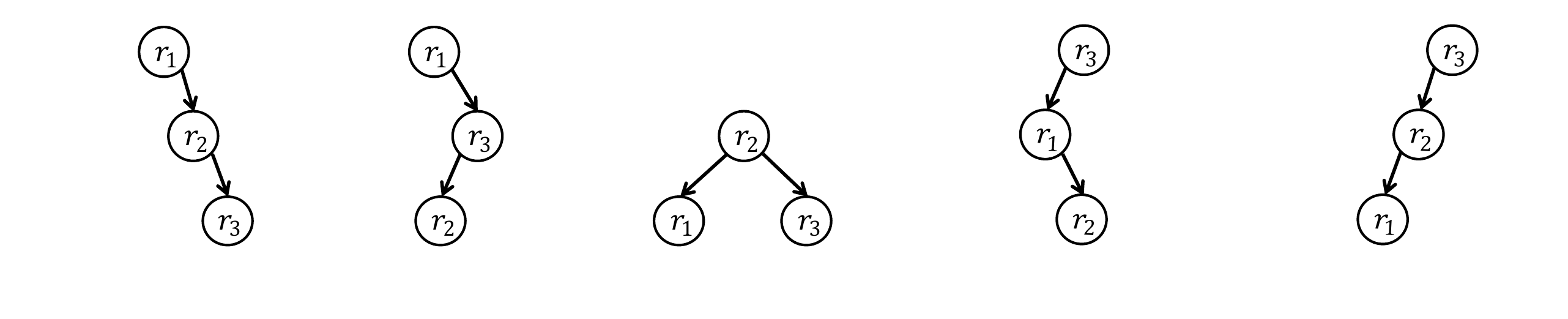}
		\par\end{centering}
	\begin{centering}
		\includegraphics[scale=0.25]{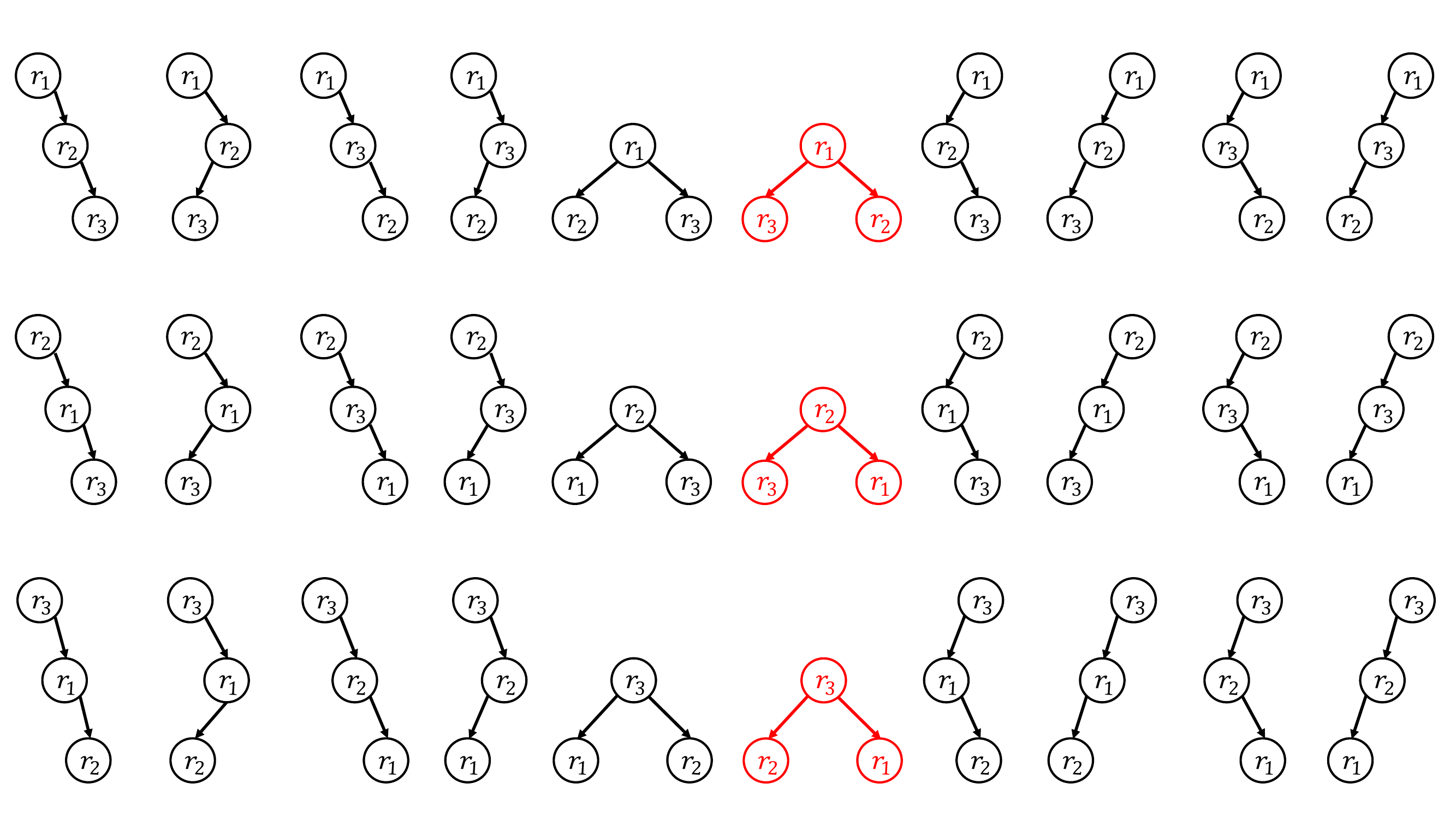}
		\par\end{centering}
	\caption{The $\mathit{genDTBFs}$ generator serves as the ideal exhaustive
		generator for the optimal decision tree problem over binary feature
		data (ODT-BF). Any generator that explores fewer trees than $\mathit{genDTBFs}$
		is non-exhaustive. Unless a specific objective function is chosen
		such that the omitted trees are non-optimal, non-exhaustiveness is
		therefore equivalent to non-optimality. The five trees on the top
		shows all possible decision trees generated by $\mathit{genCatalans}\left(\left[r_{1},r_{2},r_{3}\right]\right)$.
		The 30 trees on the bottom shows all possible decision trees generated
		by $\mathit{genDTBFs}\left(\left[r_{1},r_{2},r_{3}\right]\right)$,
		with a total size of $\left|\mathit{genDTBFs}\left(\left[r_{1},r_{2},r_{3}\right]\right)\right|=30$.
		The set of possible decision trees generated by $\mathit{genMurtrees}\left(\left[r_{1},r_{2},r_{3}\right]\right)$
		is obtained by excluding all red-colored trees from the right figure,
		resulting in a total size of $\left|\mathit{genMurtrees}\left(\left[r_{1},r_{2},r_{3}\right]\right)\right|=27$.\label{fig: running examples of murtree and catatree}}
\end{figure}

However, $\mathit{genMurtrees}$ does not generate the same number
of trees as $\mathit{genDTBFs}$. At each recursive step, $\mathit{splits}_{\text{Murtree}}$
partitions a size $N$ rule list into $N+1$ list partitions, whereas
$\mathit{splits}_{\text{BF}}$ produces $2^{N}$ possible splits,
which is substantially larger. For example, $\left|\mathit{genMurtrees}\left(\left[r_{1},r_{2},r_{3},r_{4},r_{5}\right]\right)\right|=2830$
and $\left|\mathit{genDTBFs}\left(\left[r_{1},r_{2},r_{3},r_{4},r_{5}\right]\right)\right|=5040$,
nearly twice as many even for just five candidate rules. See Figure
\ref{fig: running examples of murtree and catatree} for running examples
of $\mathit{genCatalans}\left(\left[r_{1},r_{2},r_{3}\right]\right)$,
$\mathit{genMurtrees}\left(\left[r_{1},r_{2},r_{3}\right]\right)$,
and $\mathit{genDTBFs}\left(\left[r_{1},r_{2},r_{3}\right]\right)$.

Let $f_{\text{Catalan}}$ and $f_{\text{Murtree}}$ denote the combinatorial
complexities of $\mathit{genCatalans}$ and $\mathit{genMurtrees}$,
respectively. Then we have the inclusion relation
\begin{equation}
	f_{\text{Catalan}}\left(N\right)<f_{\text{Murtree}}\left(N\right)<f_{\text{BF}}\left(N\right).
\end{equation}
Therefore, the two models ($\mathit{genCatalans}\left(\left[r_{1},r_{2},r_{3}\right]\right)$,
$\mathit{genMurtrees}\left(\left[r_{1},r_{2},r_{3}\right]\right)$)
for the search spaces of the Murtree algorithm are not exhaustive
with respect to the full search space of the ODT-BF problem, which
is generated by $\mathit{genDTBFs}$.

\subsection{Matrix chain multiplication problem\label{subsec:Matrix-chain-multiplication}}

\begin{figure}[h]
	\begin{centering}
		\includegraphics[scale=0.25]{MCMP}
		\par\end{centering}
	\caption{The possible parenthesizations for multiplication of four matrices
		$A$, $B$, $C$, $D$ correspond to the leaf-labeled trees below,
		where solid black nodes are the branch nodes that contain no information,
		and each leaf node stores a matrix.\label{fig: MCMP}}
\end{figure}

Another instance of the non-proper decision tree problem is the matrix
chain multiplication problem (MCMP), which was explored previously
by \citet{bird2020algorithm}. Indeed \citet{bird2020algorithm} showed
that this problem can be more generally formalized as the \emph{optimal
	bracketing problem}, which also provides a framework for solving optimization
problems related to \emph{binary search trees}.

The MCMP (as does the optimal bracketing problem) is closely related
to $\mathit{genCatalans}$ discussed above for the Murtree algorithm.
The key difference between MCMP and $\mathit{genCatalans}$ is that
its internal nodes are unlabeled. As \citet{bird2020algorithm} have
shown, we can simply delete the branch nodes and define the tree as
\[
\mathit{Tree}_{\text{MCMP}}\left(\mathcal{A}\right)=L\left(\mathcal{A}\right)\mid N\left(\mathit{Tree}_{\text{MCMP}}\left(\mathcal{A}\right),\mathit{Tree}_{\text{MCMP}}\left(\mathcal{A}\right)\right).
\]
In the case of the MCMP problem, for $N$ matrices stored in a list,
we can think of the MCMP as the number of ways to associate parenthesizations
of the $N$ matrices. For instance, when $N=4$, as demonstrated in
Figure \ref{fig: MCMP}, there are five possible parenthesizations:$\left(\left(AB\right)C\right)D$,
$\left(A\left(BC\right)\right)D$, $\left(AB\right)\left(CD\right)$,
$A\left(\left(BC\right)D\right)$, and $A\left(B\left(CD\right)\right)$.
The trees in Figure \ref{fig: MCMP} are isomorphic to those generated
by $\mathit{genCatalans}\left(\left[r_{1},r_{2},r_{3}\right]\right)$
(top five trees in Figure \ref{fig: running examples of murtree and catatree}),
except that all information is now encoded in the leaf nodes rather
than in the branch nodes.

As a result, matrices can be treated as the candidate rule list $\mathit{rs}$,
where the intrinsic ordering between them is fixed. For example, if
$\mathit{rs}=\left[A,B,C,D\right]$, then a split such as $\mathit{rs}^{+},\mathit{rs}^{-}=\left[A,C\right],\left[B,D\right]$
is not allowed, since $B$ precedes $C$ in $\mathit{rs}$. Another
required modification is to the split function, which is similar to
$\mathit{splits}_{\text{Catalan}}$, except that we do not require
any rule to serve as the root (i.e., no rule needs to remain in the
branch nodes as the key for indexing). Therefore, we can define the
$\mathit{splits}_{\text{Catalan}}$ function by simply splitting a
sequence of matrices into a pair of non-empty sub-sequences. In other
words, if all matrices are stored in a list $\mathit{rs}$ then we
want to know all possible partitions $\mathit{ys}$ and $\mathit{zs}$
such that $\mathit{rs}=\mathit{ys}\cup\mathit{zs}$. Hence, the $\mathit{splits}_{\text{MCMP}}$
function, as shown by \citet{bird2020algorithm}, can be defined as
\begin{align*}
	\mathit{splits}_{\text{MCMP}} & \left(\left[\right]\right)=\left[\right]\\
	\mathit{splits}_{\text{MCMP}} & \left(\left[x\right]\right)=\left[\right]\\
	\mathit{splits}_{\text{MCMP}} & \left(x:\mathit{xs}\right)=\left(\left[x\right],\mathit{xs}\right):\left[\left(x:\mathit{ys},\mathit{zs}\right)\mid\left(\mathit{ys},\mathit{zs}\right)\leftarrow\mathit{splits}_{\text{MCMP}}\left(\mathit{xs}\right)\right].
\end{align*}
A generator analogous to $\mathit{genCatalans}$, where the split
function is replaced by $\mathit{splits}_{\text{MCMP}}$ and the type
is updated to $\mathit{Tree}_{\text{MCMP}}$ yields an exhaustive
generator for the MCMP search space, and a similar derivation of the
optimal MCMP algorithm can be found in \citet{bird2020algorithm}.

\section{Discussion and Conclusions}

In this paper, our primary contribution is a novel axiomatic framework
for classifying and reasoning about decision trees through the identification
of structural and ancestral constraints. This framework provides a
formal foundation for future studies on decision tree problems.

The second contribution lies in the rigorous algorithmic development
for proper decision tree problems. We demonstrate the versatility
of this class in encompassing existing data structures and show how
proper decision trees simplify the combinatorics of binary-labeled
decision trees by establishing an injective mapping to $K$-permutations.
Moreover, the combinatorial insights from $K$-permutations allow
us to formally analyze the algorithmic properties of proper decision
trees. From this, we establish an unambiguous programmatic definition
of the search space for proper decision trees, which enables the derivation
of an efficient DP recursion for solving these problems.

The resulting DP algorithm also provides an interesting perspective
on memoization. While memoization is commonly used to accelerate DP,
we find that it is generally ineffective for most ODT problems with
large sets of candidate splitting rules. This finding also contradicts
claims in the existing literature regarding the purported trade-off
in caching techniques.

Another significant finding concerns the extension to non-proper decision
tree problems. We show that non-proper problems, which may involve
different ancestral constraints (as in ODT-BF and models for the Murtree
algorithm) or structural constraints (as in MCMP), can be addressed
using a similar but modified algorithmic approach.

Several interesting directions merit future exploration. First, although
we provide explicit derivations of optimal algorithms for the ODT-BF
and Murtree problems, future research can readily adapt the approach
established in this paper for similar derivations. Additionally, the
classification of non-proper decision tree problems resembles what
one might jokingly say amounts to ``classifying all animals that are
not elephants into the same class.'' This highlight the challenge
of further refining non-proper decision trees in future work.

\bibliographystyle{ACM-Reference-Format}
\bibliography{ODT}

\end{document}